\documentclass{article}
\usepackage{authblk}
\usepackage{mathnotations}
\usepackage{algorithmic}
\usepackage{algorithm}
\usepackage{basicreq}
\usepackage[round]{natbib}

\usepackage[subtle]{savetrees}

\begin{document}

\title{Exploration in Linear Bandits with Rich Action Sets and its Implications for Inference}
\author[1]{Debangshu Banerjee}
\author[2]{Avishek Ghosh}
\author[3]{Sayak Ray Chowdhury}
\author[4]{Aditya Gopalan}
\affil[1,4]{ Department of Electrical and Communication Engineering, Indian Institute of Science, India}
\affil[2]{Halicioglu Data Science Institute, 
University of California, USA}
\affil[3]{Division of Systems Engineering,
Boston University, USA}
\maketitle

\begin{abstract}
  We present a non-asymptotic lower bound on the spectrum of the design matrix generated by any linear bandit algorithm with sub-linear regret when the action set has well-behaved curvature. Specifically, we show that the minimum eigenvalue of the expected design matrix grows as $\Omega(\sqrt{n})$ whenever the expected cumulative regret of the algorithm is $O(\sqrt{n})$, where $n$ is the learning horizon, and the action-space has a constant Hessian around the optimal arm. This shows that such action-spaces force a polynomial lower bound on the least eigenvalue, rather than a logarithmic lower bound  as shown by \citet{lattimore2017end}for discrete (i.e., well-separated) action spaces. Furthermore, while the latter holds only in the asymptotic regime ($n \to \infty$), our result for these ``locally rich" action spaces is any-time. Additionally, under a mild technical assumption, we obtain a similar lower bound on the minimum eigen value holding with high probability. We apply our result to two practical scenarios -- \emph{model selection} and \emph{clustering} in linear bandits. For model selection, we show that an epoch-based linear bandit algorithm adapts to the true model complexity at a rate exponential in the number of epochs, by virtue of our novel spectral bound. For clustering, we consider a multi agent framework where we show, by leveraging the spectral result, that no forced exploration is necessary---the agents can run a linear bandit algorithm and estimate their underlying parameters at once, and hence incur a low regret.
\end{abstract}

\section{INTRODUCTION}\label{sec:intro}
Bandit optimisation traditionally focuses on the problem of minimising cumulative {\em regret}, or the shortfall in reward incurred along the trajectory of learning. To this end, it has yielded optimal, low-regret strategies such as UCB and Thompson sampling \citep{oful, abeille2017linear}. 
On the other hand, there is also the important {\em inference} goal in bandit problems, in which the experimenter, having access to arms or alternatives, wishes to infer some useful properties of, or estimate a quantity related to, the system by sequential sampling. Perhaps the most well-known example is the problem of best arm identification in multi-armed bandits \citep{even2006action, soare2014best}, which is essentially a sequential hypothesis testing problem. It is well known that for optimal error rates in identifying the best arm, it is necessary to sample arms with at least {\em constant} frequencies, which is vastly more exploratory than the frequencies required for regret minimization \citep{bubeck2011pure}.

In this paper, we are interested in identifying settings in which both objectives -- sublinear regret and fast inference (estimation error) -- are simultaneously possible, opening the door to many useful applications that combine both reward optimisation and statistical inference. As we shall see, this is possible with standard bandit algorithms such as (linear) UCB provided there is sufficient `local richness' of actions or arms in the problem (think `suitably continuous arm set'), which is often a reasonable structure encountered in many bandit optimization problems with continuous spaces of alternatives (power control, dynamic pricing, etc.)
To introduce ideas, consider  standard linear bandit with additive Gaussian noise. A key trajectory-dependent quantity that connects the two goals of regret minimisation and parameter estimation is the minimum singular value of the design matrix, $\lambda_{\min}(V_n)$. To see why, notice that any typical bandit algorithm for regret minimization forms, either explicitly or otherwise, confidence sets for the unknown linear model parameter $\theta^*$, of the form $\|\theta - \hat{\theta}\|_{V_n} \leq c\sqrt{\ln(n)}$, where $c>0$ is a constant. Thus, getting a lower bound on the growth of $\lambda_{\min}(V_n)$, would help in determining how fast the confidence sets shrink and in return would help to infer the true bandit parameter $\theta^*$. 

The closest related work that sheds light on the singular value of the design matrix is by \citet{lattimore2017end}. The authors show that for a discrete action-space bandit and any bandit algorithm with sub-polynomial regret (e.g., UCB), the minimum singular value of the expected design matrix ($\mathbb{E}[V_n]$ ) must grow at a logarithmic rate over time. However, their analysis holds true only in the asymptotic regime, with no information available on what happens in a finite time horizon.

We show that in action-spaces with ``nice" local curvature properties, bandit algorithms which have inherently good regret properties (at most of the order of $\sqrt{n}$), the minimum singular value of the expected design matrix grows at least as order of $\sqrt{n}$. This is accomplished by a novel use of matrix perturbation techniques (Weyl's inequality and the Davis-Kahan sin-$\theta$ theorem) together with the information-theoretic data-processing inequality. Moreover, this result holds true in a finite-time horizon and not just in the asymptotic regime: for all time horizons $n$ larger than a baseline value $n_0$, we show $\lambda_{\min}\mathbb{E}[V_n] \geq \gamma\sqrt{n}$,
where $\gamma$ is a positive constant which depends upon the local curvature and the algorithm being used.
A key implication of this result is that in bandits with continuous action-spaces, low-regret algorithms also offer significant exploration in the sense of  estimating all `directions' in the parameter space. This not only extends the work of  \citet{lattimore2017end} to the continuous action setting but also strengthens it to a finite time result from an asymptotic one. 

We conclude with two illustrative applications of our theory under a mild assumption that the same $\Omega(\sqrt{n})$ growth of the minimum eigenvalue holds also in high probability.\footnote{In the appendix, we provide a technical condition on the trajectory of linear bandit algorithms under which this holds.} Specifically, we consider the model-selection \citep{foster2019model} and clustering  \citep{clustering_online} problems in linear bandits to apply our result. In the model selection application, the norm of the unknown parameter $\theta$ is viewed as a measure of complexity of the problem. We show that a variant of the well-known Optimism in the Face of Uncertainty (OFU) algorithm can adapt to $\norm{\theta}$. This is achieved by a careful application of our result to control the rate at which norm estimates converge to the  the true norm. It is important to note that a similar result is achieved by \citet{ghosh2021problem} in the different but related setting of \emph{stochastic contextual bandits}, albeit with restrictive assumptions on the contexts. We are able to proceed without such restrictions by virtue of our result. In the clustering setup, we consider several linear bandit agents partitioned into $k$ clusters. We propose a clustering algorithm without any explicit exploration, where the agents simultaneously estimate their (linear model) parameter and attempt to play low-regret actions. When the clusters are separated, our algorithm obtains the correct clustering with high probability. Note that in \citet{gentile2014online,ghosh2021collaborative}, the framework of clustered bandits was considered in a contextual framework with several strong assumptions on the context distribution. Our work demonstrates that similar guarantees are attainable (in the context-free linear bandit setup) without additional assumptions via exploiting the rich-action-set inference result (Theorem~\ref{thm:main-result}).
Finally, we empirically validate that the minimum eigenvalue of the design matrix generated by the well-known Thompson sampling algorithm \citep{agrawal2013thompson} indeed grows at a rate larger than $\sqrt{n}$. 

\textbf{Related work.}
The linear bandit problem has been studied extensively in a large body of work starting from the classic work of \citet{auer2002finite}. In this model, algorithms based on the celebrated optimism in the face of uncertainty principle has been designed and analyzed by several authors \citep{chu2011contextual,dani2008stochastic,oful}. A related approach is posterior sampling, also known as Thompson sampling, where sufficient exploration is achieved by randomly sampling a parameter from a posterior distribution over $\theta$ \citep{agrawal2013thompson,abeille2017linear}. Another related line of work consider the linear reward model in reproducing kernel Hilbert spaces \citep{srinivas2009gaussian,valko2013finite,chowdhury2017kernelized}.
Spectral properties of the expected design matrix under a discrete action-space has been studied by \citet{lattimore2017end}, while \citet{hao2020adaptive} handles the finitely many contextual case with each context having discrete action space. 
The framework of model selection has recently gained a lot of momentum in the bandit literature. For example, \citet{pmlr-v108-chatterji20b} introduced a hypothesis test based framework to select either the standard bandit or the linear bandit model. Furthermore, in \citet{ghosh2021problem},  the authors define parameter norm and sparsity as complexity parameters for stochastic linear bandit and adapt to those without any apriori knowledge, and obtain model selection guarantees. Moreover, \citet{foster2019model} introduces an adaptive algorithm for a similar linear bandit problem with sparsity as a measure of complexity. Additionally, there are different line of works, that uses the the corrall framework of \citet{agarwal2017corralling} to obtain adaptive algorithms for bandits and reinforcement learning (for example, see \citet{pacchiano2020regret}). Very recently, for generic contextual bandits, the adaptation question is also addressed in \citet{krishnamurthy2021optimal}. Furthermore, on clustering of bandits, \citet{gentile2014online} proposes an algorithm that works only when the cluster separation is large, which was further improved in \citet{ghosh2021collaborative}, where near optimal regret is obtained even when the clusters are not separable. Moreover, \citet{ghosh2021collaborative} also proposes a natural personalization framework, which is a generalization of the clustering setup.


\textbf{Notation.} For a positive definite matrix $G$, (denoted as $G \succ 0$) and vector $x$ we write $\|x\|^2_G = x^\top G x $ The euclidean norm of a vector $x$ is denoted as $\|x\|$ and the spectral norm of a matrix $G$ is $\|G\| = \lambda_{\max}(G)$. For hermitian matrices we assume the eigen-decomposition as $G = \sum_{i=1}^d\lambda_i u_i u_i^\top$, with $\lambda_1 > \lambda_d = \lambda_{\min}$. We use standard definitions of Landau Notation when using $O$, $o$, $\Omega$ and $\omega$ notation. We use $\mathbb{E}_\theta[\;.\;]$, to emphasize that the underlying bandit instance is parameterized by $\theta$.

\textbf{Problem setting.}
We consider the linear bandit model of \citet{oful}. Let $\cX \subset \Real^d$. The learner interact with the environment over $n$ rounds. At each round $t$, the learner chooses an action $A_t \in \cX$ and correspondingly observes a reward $Y_t = \inner{A_t}{\theta} + \eta_t$, where $\eta_t$ is a zero-mean Gaussian noise, and $\theta \in \mathbb{R}^d$ is the unknown parameter. The optimal action is $x^* = \arg \max_{x \in \cX} \langle x, \theta\rangle$. The performance of the learner is typically measured using its expected regret, defined as 
\begin{align*}
    \mathbb{E}[R_n(\theta)] = \expect{\sum\nolimits_{t=1}^n\langle x^* - A_t, \theta\rangle}~.
\end{align*}
Here the expectation is over the action-selection strategy of the learner, denoted, where needed, by $\pi$, and over the randomness in observed rewards.

\section{MAIN RESULT}
In this section, we develop our main result which characterizes the growth of the minimum eigenvalue of the design matrix for linear bandit algorithms run on action spaces with suitable `local curvature'. By this we mean action spaces which are hyper-surfaces of the form $\{x : f(x) = c\}$, where $f$ is a twice-continuously differentiable function. The following definition expresses the local curvature property needed for our result. 


\begin{definition}[Locally Constant Hessian (LCH) surface]
\label{def:action}
Consider the action space defined by $\mathcal{X} = \{x \in \mathbb{R}^d : f(x) = c\}$, where $f : \mathbb{R}^d \to \mathbb{R}$ is a $C^2$ function (i.e., all second-order partial derivatives of $f$ exist and are continuous) and $c \in \mathbb{R}$. Let $\theta \in \mathbb{R}^d$. $\mathcal{X}$ is said to be a LCH surface w.r.t. $\theta$ if: (i) there is a unique reward-optimal arm with respect to $\theta$ (denoted by  $\mathrm{OPT}_\cX(\theta) = \arg\max_{x \in \cX}\langle x,\theta\rangle$), and (ii) there is an open neighborhood $U \subset \mathbb{R}^d$ of $\mathrm{OPT}_{\cX}(\theta)$ over which the Hessian of $f$ is constant and positive-definite.

\end{definition}
\textbf{Examples of LCH action spaces.}
Any ellipsoidal action space $\mathcal{E} = \{x \in \mathbb{R}^d: x^\top A^{-1} x = c \}$, with $c > 0$ and $A$ positive-definite, is an LCH action space w.r.t. {\it every} $\theta \in \mathbb{R}^d$, as the Hessian of $f(x) = x^\top A^{-1} x$ is the constant p.d. matrix $A^{-1}$. However, an action space can be LCH  just by being `locally ellipsoidal'. As an example, consider an ellipsoid $\mathcal{E} = \{x \in \mathbb{R}^d: x^\top A^{-1} x = c \}$. Let $\theta$ be a bandit parameter and $x^* = \arg\max_{x \in \mathcal{E}} \langle x , \theta \rangle$ be optimal for $\theta$. For some $\delta > 0$, let $\mathcal{B}_\delta$ be an open $\delta$-ball in $\mathbb{R}^d$ containing $x^*$. Consider an action space $\mathcal{X}$ which coincides with $\mathcal{E}$ in the neighborhood $\mathcal{B}_\delta$ and is arbitrary outside it, i.e., $\mathcal{X} \cap \mathcal{B}_\delta$ = $\mathcal{E} \cap \mathcal{B}_\delta$. It follows that $\mathcal{X}$ is LCH w.r.t. $\theta$.


\begin{theorem}
\label{thm:main-result}
Let the action-space $\cX$ be a Locally Constant Hessian(LCH) surface in $\mathbb{R}^{d-1}$ w.r.t. a bandit parameter $\theta$. Let $\bar{G}_n = \mathbb{E}_\theta\left[\sum_{s=1}^n A_sA_s^\top\right]$, where $A_s$ are arms in $\cX$ drawn according to some bandit algorithm. For any bandit algorithm which suffers expected regret\footnote{Our big-Oh and Omega notations throughout omit polylogarithmic dependencies for ease of presentation.} at most $O(\sqrt{n})$, 
\begin{align*}
 \lambda_{\min}(\bar{G}_n) = \Omega(\sqrt{n})\;.   
\end{align*}
That is, there exists an $n_0$ and a constant $\gamma >0$, such that for all $n \geq n_0$, $\lambda_{\min}(\bar{G}_n) \geq \gamma\sqrt{n}$.
\end{theorem}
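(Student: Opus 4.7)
I would run a change-of-measure argument driven by the information-theoretic data-processing inequality. For each unit direction $v$, I form the alternative instance $\theta' := \theta + \epsilon v$ and exploit the standard linear-bandit KL identity
\[
\mathrm{KL}(\mathbb{P}_\theta \,\|\, \mathbb{P}_{\theta'}) \;=\; \frac{\epsilon^2}{2\sigma^2}\, v^\top \bar{G}_n v,
\]
so that a $\Theta(1)$ lower bound on the KL translates into the desired bound on $v^\top \bar G_n v$. The scaling target is $\epsilon \asymp n^{-1/4}$, which gives $v^\top \bar G_n v = \Omega(\sqrt n)$.

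\textbf{LCH-based quadratic expansions.} The LCH hypothesis lets me parametrize $\cX$ near $x^* := \mathrm{OPT}_\cX(\theta)$ in tangent--normal coordinates: every arm in the LCH neighborhood has the form $A = x^* + y + \phi(y)\hat n$, with $\hat n = \theta/\|\theta\|$, $y \in \theta^\perp$ and $\phi(y) = -y^\top H y/(2\|\nabla f(x^*)\|)$ exactly (since $H$ is constant there). This gives a purely quadratic regret $r_\theta(A) = c_\theta\, y^\top H y$ for a constant $c_\theta > 0$. Next, for $v$ not parallel to $\theta$, the implicit function theorem applied to the KKT system $\nabla f(x^*(\vartheta)) \parallel \vartheta$ yields $\|x^*(\theta + \epsilon v) - x^*(\theta)\| = \Omega(\epsilon)$. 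Moreover, by the envelope theorem applied to $F(\vartheta) := \max_{x\in\cX}\langle x,\vartheta\rangle$ (whose gradient is $x^*(\vartheta)$), the regret of $x^*(\theta)$ under $\theta + \epsilon v$ is only $O(\epsilon^2)$; combining with the quadratic expansion gives that any arm in a ball $N = \mathbb{B}(x^*(\theta), D)$ of radius $D \ll \|x^*(\theta')-x^*(\theta)\|$ has regret $\Omega(\epsilon^2)$ under $\theta'$.

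\textbf{Information-theoretic step.} The $O(\sqrt n)$ regret bound under $\theta$, combined with the $\Omega(D^2)$ per-arm regret outside $N$, gives $\mathbb{E}_\theta[\sum_t \mathbf 1\{A_t \notin N\}] = O(\sqrt n/D^2)$; under $\theta'$, the $\Omega(\epsilon^2)$ regret of arms in $N$ gives $\mathbb{E}_{\theta'}[T_N(n)] = O(\sqrt n/\epsilon^2)$. Setting $\epsilon = C n^{-1/4}$ and $D \asymp n^{-1/4}$ with a sufficiently large constant $C$, Markov's inequality makes the event $\{T_N(n) > n/2\}$ probable (up to $O(1/C^2)$) under $\mathbb{P}_\theta$ and rare under $\mathbb{P}_{\theta'}$. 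The Bretagnolle--Huber form of the data-processing inequality then forces $\mathrm{KL}(\mathbb{P}_\theta \,\|\, \mathbb{P}_{\theta'}) \geq c_1 \log C$ for a constant $c_1 > 0$, and rearranging the KL identity yields $v^\top \bar G_n v = \Omega(\sqrt n)$, uniformly for every $v$ whose angle with $\hat\theta$ is bounded away from zero.

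\textbf{Glueing to the minimum eigenvalue.} To pass from directional bounds to $\lambda_{\min}(\bar G_n)$, I would use Weyl's inequality together with the Davis--Kahan $\sin\theta$ theorem. Since $\hat x^{*\top} A_t$ stays close to the positive constant $\hat x^{*\top} x^*$ on the LCH neighborhood (which contains $A_t$ all but $O(\sqrt n)$ steps), $\bar G_n$ has a single $\Omega(n)$ eigenvalue in direction $\hat x^* := x^*/\|x^*\|$. Davis--Kahan then places the top eigenvector within $O(1/\sqrt n)$ of $\hat x^*$, so the remaining $(d-1)$ eigenvectors span a subspace $o(1)$-close to $(x^*)^\perp$. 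Every $v \in (x^*)^\perp$ is automatically non-parallel to $\theta$ (because $\hat\theta \cdot x^* > 0$), so the information-theoretic step gives $v^\top \bar G_n v = \Omega(\sqrt n)$ on $(x^*)^\perp$; Weyl's inequality then transfers this bound to $\lambda_{\min}(\bar G_n)$. The main technical obstacle I anticipate is precisely this final glueing step --- uniformly controlling the cross-terms $\hat x^{*\top} \bar G_n v_\perp$ for $v_\perp \in (x^*)^\perp$, and checking that the principal-angle perturbation is $o(1)$ in the right sense so that the $\Omega(\sqrt n)$ bound survives. This is also where the baseline $n_0$ enters, guaranteeing that the LCH neighborhood of $x^*$ contains the $\mathbb{B}(x^*, D)$-balls needed throughout the argument.
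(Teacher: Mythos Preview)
Your proposal is correct and follows essentially the same route as the paper: the information-theoretic change-of-measure via the linear-bandit KL identity, the LCH-induced quadratic regret structure, the $n^{-1/4}$ perturbation scale, and the Davis--Kahan/Weyl step to place the minimum eigenvector approximately in $(x^*)^\perp$ are exactly the paper's ingredients. The only differences are cosmetic---the paper first fixes the perturbation direction to be $u_d$ and invokes Davis--Kahan \emph{before} the change-of-measure (to certify $|\langle u_d, x^*\rangle|\le 1/9$ and thereby compute the step size), whereas you establish the directional bound for generic $v$ first and specialize afterward; and the paper works with reward-space neighborhoods $\mathrm{OPT}_\epsilon(\theta)$ and Pinsker's inequality in place of your arm-space balls and Bretagnolle--Huber.
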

The constant $\gamma$ depends upon the condition number of the Hessian, the algorithmic constants hidden by $O()$, and the size of the bandit parameter $\theta$. The constant $n_0$ depends on the algorithmic constants hidden by $O()$, the size of the neighbourbood over which the Hessian is constant, the size of the action domain $\norm{\cX}$, the singular value of the Hessian and the size of the bandit parameter $\theta$. \\

\textbf{Comparison with the result of  \citet{lattimore2017end}.}
The authors show a similar result for asymptotically large $n$. Specifically, they show that for a linear bandit with a {\em discrete} action-space (i.e., one for which the arms' suboptimality gaps are at least a positive constant), for any good (i.e., low-regret) bandit algorithm, it holds that
\begin{align*}
    \lim \inf_{n \to \infty}\frac{\lambda_{\min}(\bar{G}_n)}{\log n} > 0.
\end{align*}
In contrast, we prove a bound for any finite $n > n_0$. Moreover, our result applies to a broad category of action spaces. \\

\textbf{Comparison with \cite{bubeck2011pure}.} \citet{bubeck2011pure} show that for a {\em finite-armed} bandit, an optimal cumulative regret ($O(\log T)$) algorithm must suffer $\Omega(\text{poly}(1/T))$ simple regret, or equivalently, a probability of misidentifying the best arm for inference, in $T$ time slots. Optimal inference (best arm) identification algorithms can, in contrast, obtain $e^{-\Omega(T)}$ simple regret at the cost of linear cumulative regret. Our result does not contradict this paper as our action space is {\em richer} (i.e., continuous) than that of a finite-armed bandit, leading to qualitatively different behavior: (a) On one hand, due to the minimum arm suboptimality gap in this setting being zero, the optimal cumulative regret rate is $O(\sqrt{T})$ (regret minimization is harder), (b) On the other hand, inference is easier with an optimal cumulative regret strategy with estimation error (for estimating $\theta^*$) decaying as $O(T^{-1/4})$. Thus, for action-spaces which are locally "nice" enough as described above, any good regret algorithm must induce a well conditioned expected design matrix, and this has implications for parameter recovery as illustrated later. \\


\textbf{Dependence on dimension.}
Though our result does not explicitly indicate the dependence on the ambient (feature) dimension $d$, it is sensitive to it via the regret of the linear bandit algorithm in question. 
For example, for the OFUL algorithm \citep{oful}, we have a regret bound varying with the dimension as  $O(d)$, while for Thompson Sampling (TS) we have a regret bound depending on $d$ as $O(d^{3/2})$ \citep{agrawal2013thompson, abeille2017linear}. The constants $n_0$ and $\gamma$ of Theorem \ref{thm:main-result} depend on $d$ as $\Omega(d^2)$ and $\Omega(\frac{1}{d})$ for OFUL and as $\Omega(d^3)$ and $\Omega(\frac{1}{d^{3/2}})$ for TS respectively for a spherical action space. We provide more remarks on this in the experiment section.


\subsection{Key Technique: Overview}
In this section, for simplicity and insights, we provide a proof sketch for Theorem \ref{thm:main-result} for the spherical action space $\cX = \cS^{d-1}$. In Appendix~\ref{app:main}, we first generalize these ideas for general ellipsoidal results, and finally prove for Locally Constant Hessian surfaces. 

\begin{proof}[Proof Sketch]
We start with the following information inequality for linear bandits (see Lemma~\ref{lemma:Garivier_Kauffmann} in appendix and \cite{lattimore2020bandit}): 
\begin{align}
\label{eq:info_ineq}
\frac{1}{2}\|\theta - \theta'\|^2_{\bar{G}_n} \geq \mathrm{KL}\left(\mathrm{Ber}(\mathbb{E}_{\theta}[Z]) || \mathrm{Ber}(\mathbb{E}_{\theta'}[Z])\right)
\end{align}
for any $\theta'\in \Real^d$ and a measurable random variable $Z\in(0,1)$. As $\mathrm{span}(\cS^{d-1}) = \mathbb{R}^{d}$, $\bar{G}_n$ will eventually be non-singular \citep{lattimore2017end}. Hence, let the eigendecomposition of $\bar{G}_n$ be $\sum_{i=1}^d \lambda_iu_iu_i^\top$ with $\lambda_i$'s arranged in descending order.
Let us choose $\theta'$ to be $\theta + \alpha u_d$, where $\alpha$ is a step size to be determined. This gives the L.H.S. of~\eqref{eq:info_ineq} as 
\begin{align}
\label{eq:main_2}
    \|\theta - \theta'\|^2_{\bar{G}_n} = \alpha^2\|u_d\|^2_{\bar{G}_n} = \alpha^2 \lambda_d~.
\end{align}
Let us define an $\epsilon$-neighbourhood about the optimal arm $\mathrm{OPT}(\theta)$ for a bandit parameter $\theta$ as 
\begin{align*}
    \mathrm{OPT}_\epsilon(\theta) \triangleq \{ x \in \mathcal{X} \mid x^\top \theta \geq \sup\nolimits_x x^\top \theta - \epsilon \}~.
\end{align*}
The step size $\alpha$ needs to be chosen such that $\theta$ and $\theta'$ are close in norm. At the same time, we need to ensure that the optimal arm for $\theta$ is sub-optimal for $\theta'$ and vice-versa. This motivates us to find an $\alpha$ such that 
\begin{align}
\label{eq:disjoint_condition}
    \mathrm{OPT}_\epsilon(\theta) \cap \mathrm{OPT}_\epsilon(\theta + \alpha u_d) = \emptyset~.
\end{align}
We denote the number of times an arm in the $\epsilon$-neighbourhood of $\theta$ is played as 
\begin{align*}
    \mathrm{N}_{\epsilon, n}(\theta) \triangleq \sum\nolimits_{s=1}^n\mathrm{1}\{A_s \in \mathrm{OPT}_\epsilon(\theta)\}
\end{align*}
Now, we define $Z$ in \eqref{eq:info_ineq} as the fraction of times in $n$ rounds that an arm in the $\epsilon$-neighbourhood of $\theta$ is played, i.e., $Z \triangleq \frac{\mathrm{N}_{\epsilon, n}(\theta)}{n}$. 
Then, as $\mathrm{OPT}_\epsilon(\theta)$ and $\mathrm{OPT}_\epsilon(\theta')$ are disjoint, $\mathbb{E}_\theta[Z]$ and $\mathbb{E}_{\theta'}[Z]$  will be different (in fact this is what a sub-linear regret algorithm is expected to do), and hence the right hand side of \eqref{eq:info_ineq} will be positive. To choose the step size $\alpha$, we exploit the geometry of the action-space. We refer to the Figure \ref{fig:main} from which it is clear that the amount $\alpha$ needed to perturb $\theta$ to $\theta'$ such that the disjoint condition \eqref{eq:disjoint_condition} holds would depend upon the orientation of $u_d$ with respect to $\mathrm{OPT}(\theta)$, which is $\theta$ itself, in our geometry. \footnote{For tackling the ellipsoidal case, we need to change variables (`whitening') to bridge to the sphere case; see Appendix~\ref{app:main}.} 

\tikzset{every picture/.style={line width=0.75pt}} 
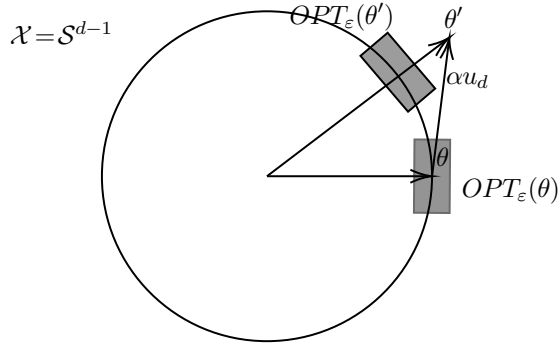
\begin{figure}
\centering
\tikzset{every picture/.style={line width=0.75pt}} 

\begin{tikzpicture}[x=0.75pt,y=0.75pt,yscale=-1,xscale=1]

\draw    (243,168) -- (324.25,168) ;
\draw [shift={(326.25,168)}, rotate = 180] [color={rgb, 255:red, 0; green, 0; blue, 0 }  ][line width=0.75]    (10.93,-3.29) .. controls (6.95,-1.4) and (3.31,-0.3) .. (0,0) .. controls (3.31,0.3) and (6.95,1.4) .. (10.93,3.29)   ;
\draw    (326.25,168) -- (334.75,99.98) ;
\draw [shift={(335,98)}, rotate = 97.13] [color={rgb, 255:red, 0; green, 0; blue, 0 }  ][line width=0.75]    (10.93,-3.29) .. controls (6.95,-1.4) and (3.31,-0.3) .. (0,0) .. controls (3.31,0.3) and (6.95,1.4) .. (10.93,3.29)   ;
\draw    (243,168) -- (333.41,99.21) ;
\draw [shift={(335,98)}, rotate = 142.73] [color={rgb, 255:red, 0; green, 0; blue, 0 }  ][line width=0.75]    (10.93,-3.29) .. controls (6.95,-1.4) and (3.31,-0.3) .. (0,0) .. controls (3.31,0.3) and (6.95,1.4) .. (10.93,3.29)   ;
\draw   (159.75,168) .. controls (159.75,122.02) and (197.02,84.75) .. (243,84.75) .. controls (288.98,84.75) and (326.25,122.02) .. (326.25,168) .. controls (326.25,213.98) and (288.98,251.25) .. (243,251.25) .. controls (197.02,251.25) and (159.75,213.98) .. (159.75,168) -- cycle ;
\draw  [fill={rgb, 255:red, 0; green, 0; blue, 0 }  ,fill opacity=0.39 ] (290.15,107.25) -- (303.85,95.58) -- (327.85,123.75) -- (314.15,135.42) -- cycle ;
\draw  [color={rgb, 255:red, 0; green, 0; blue, 0 }  ,draw opacity=0.58 ][fill={rgb, 255:red, 0; green, 0; blue, 0 }  ,fill opacity=0.42 ] (317.39,149.43) -- (335.39,149.57) -- (335.11,186.57) -- (317.11,186.43) -- cycle ;

\draw (253,79) node [anchor=north west][inner sep=0.75pt]   [align=left] {$\displaystyle OPT_{\epsilon }( \theta ')$};
\draw (331,81) node [anchor=north west][inner sep=0.75pt]   [align=left] {$\displaystyle \theta '$};
\draw (327,152) node [anchor=north west][inner sep=0.75pt]   [align=left] {$\displaystyle \theta $};
\draw (332,114) node [anchor=north west][inner sep=0.75pt]   [align=left] {$\displaystyle \alpha u_{d}$};
\draw (340,167) node [anchor=north west][inner sep=0.75pt]   [align=left] {$\displaystyle OPT_{\epsilon }( \theta )$};
\draw (112,88) node [anchor=north west][inner sep=0.75pt]   [align=left] {$\displaystyle \mathcal{X} =\mathcal{S}{^{d-1}}$};
\end{tikzpicture}
\caption{\footnotesize{The arm set $\cS^{d-1}$ with the bandit parameter $\theta = v$, the optimal arm. The direction $u_d$ is approximately orthogonal to $v$. The task is to find $\alpha$, the amount of perturbation of $\theta$ to $\theta'$ in the direction of $u_d$, so that the $\epsilon$ neighbourhood of both $\theta$ and $\theta'$ is disjoint}}
\label{fig:main}
\end{figure}

We use the Davis-Kahan matrix perturbation theorem (see Lemma~\ref{lemma:Davis_Kahan}) to show $\mathrm{OPT}(\theta)$ (denoted here on as $v$ for notational simplicity) and $u_d$ are approximately orthogonal. In its simplest form, the Theorem states that given two matrices $A$ and $H$ (symmetric), if $\lambda_1(A)$ and all of $\lambda_2(A+H),\cdots,\lambda_d(A+H)$ are separated by $\delta$, then the eigenvector $v$ corresponding to $\lambda_1(A)$ and the eigenvectors $u_2,\cdots,u_d$ corresponding to $\lambda_2(A+H),\cdots,\lambda_d(A+H)$, make an inner product of at most $\norm{H}/\delta$, i.e., $|\max_{i=2,\cdots,d}v^\top u_i| \leq \norm{H}/\delta$.

We use $A$ as $nvv{^\top}$ and $H$ as $\bar{G}_n - nvv^\top$ such that $v$ is eigen-vector of $nvv{^\top}$ corresponding to $\lambda_1(A) = n$ and $u_2,\cdots,u_d$ are the eigenvectors of $A+H = \bar{G}_n$ corresponding to the eigenvalues $(\lambda_2(\bar{G}_n), \cdots, \lambda_d(\bar{G}_n))$. To find an upper bound of $\langle v, u_d\rangle$, we get an upper bound on the spectral norm of $\bar{G}_n - nvv^\top$ and the eigen-gap between $\lambda_1(nvv^\top) = n$ and $\lambda_2(\bar{G}_n), \cdots, \lambda_d(\bar{G}_n)$.

We know from Weyl's Lemma (see Lemma~\ref{lemma:Weyl's}) that $\lambda_i(\bar{G}_n) \leq \lambda_i(nvv^\top) + \norm{\bar{G}_n - nvv^\top} = \norm{\bar{G}_n - nvv^\top}$, for $i \in \{2,\cdots,d\}$ as $\lambda_i(nvv^\top) = 0$ for all $i \in \{2,\cdots,d\}$.  
Thus it suffices to upper bound the spectral norm, $\norm{\bar{G}_n - nvv^\top}$. To do so we decompose it into two cases: when the action played ($A_s$) belongs to the optimal set $\mathrm{OPT}_\epsilon(\theta)$, and when it does not. This yields the following
\begin{align} 
\label{eq:step_1}
    \|\bar{G}_n - n vv^\top\| 
    &\leq \sup_{A_s \in \mathrm{OPT}_\epsilon(\theta) }2\norm{A_s - v}\expect{\mathrm{N}_{\epsilon, n}(\theta)}\nonumber\\ &\quad\quad+ 4\expect{n - \mathrm{N}_{\epsilon, n}(\theta)},
\end{align}
where we have used the fact that arms in $\cS^{d-1}$ has maximum norm of $1$. We use the geometry of the action-space to determine that for any $A_s \in \mathrm{OPT}_\epsilon(\theta)$, we have $\norm{A_s - v} \leq O(\sqrt{\epsilon})$ (see Appendix \ref{app:main} for details). Now,
we use the regret property of algorithm to show $\mathbb{E}_\theta[\mathrm{N}_{\epsilon, n}(\theta)] \geq n - c\sqrt{n}/\epsilon$, as
\begin{align}
\label{eq:number_of_optimal_arms}
      c\sqrt{n} &\geq\mathbb{E}\left[\sum_{s=1}^n \max_x x^\top\theta - A_s^\top\theta\right] \nonumber\\
     &\geq \mathbb{E}\left[\sum_{s=1}^n \mathrm{1}\{A_s \notin \mathrm{OPT}_\epsilon(\theta)\}(\max_x x^T\theta - A_s^T\theta)\right]\nonumber\\
     &\geq \eps(\mathbb{E}\left[n-\mathrm{N}_{\epsilon, n}(\theta)\right])\;.  
\end{align}
Thus, we have an upper bound on the spectral norm of $\bar{G}_n - n vv^\top$ as 
    $\|\bar{G}_n - n vv^\top\| \leq O\big(\sqrt{\epsilon}n + \frac{\sqrt{n}}{\epsilon}\big).$
Now choosing $\epsilon$ to be of the order $\frac{1}{\sqrt{n}}$ with a sufficiently large constant factor, we ensure the norm of $\|\bar{G}_n - n vv^\top\| \leq 0.01n$ for all $n$ more than some finite $n_0$. Thus the eigen-gap is at least $0.99n$ and $|\langle v, u_d\rangle| \leq 1 /99 \approx 0$. With this orientation of $u_d$, and the choice of $\epsilon$, we prove that $\alpha$ has to be of the order of $\frac{1}{n^{1/4}}$ for $\mathrm{OPT}_\epsilon(\theta) \cap \mathrm{OPT}_\epsilon(\theta + O(1/n^{1/4}) u_d) = \emptyset$. The details of this result are provided in the Appendix \ref{app:main}. Now, using our estimates of $\mathbb{E}_\theta[N_{\epsilon,n}(\theta)]$ (see equation \eqref{eq:number_of_optimal_arms}), and by our choice of $\epsilon = O(1/\sqrt{n})$ we have $\mathbb{E}_\theta[Z]$ close to $1$ and $\mathbb{E}_{\theta'}[Z]$ close to $0$. (See the previous discussion for our motivation to choose disjoint $\epsilon$ neighbourhood sets). This gives the right hand side of equation \eqref{eq:info_ineq} a constant $c$ (See appendix \ref{app:main} for full details). Therefore combining with equation \eqref{eq:main_2} we have $\lambda_d \geq c/\alpha^2 = \Omega(\sqrt{n})$.
\end{proof}

\begin{remark}
We observe in the above proof that the eigenvector corresponding to the minimum eigenvalue of $\bar G_n$ is approximately orthogonal to the direction of optimal arm. In fact, we show an even stronger fact: {\em every} eigenvector corresponding to each of the eigenvalues, starting from the second largest to the minimum, must lie in an approximately orthogonal space of the optimal direction.  
\end{remark}
\section{MORE GENERAL ACTION SPACES}
\subsection{Hyper-Surfaces With Continuous Hessian}
The basic conclusion of Theorem \ref{thm:main-result} (growth of the minimum eigenvalue of the design matrix) can be extended beyond LCH action spaces to hyper-surfaces of the form $\{x : f(x) = c\}$ where $f$ is just $C^2$ and locally convex, without requiring a constant Hessian in a neighborhood. However, this comes potentially at the cost of the $\sqrt{n}$ growth rate of the minimum eigenvalue. 

\begin{definition}[Locally Convex surface]
\label{def:locally_convex_action}
Consider an action space $\mathcal{X} = \{x \in \mathbb{R}^d : f(x) = c\}$, where $f : \mathbb{R}^d \to \mathbb{R}$ is a $C^2$ function (i.e., all second order partial derivatives exist and are continuous). With $\mathrm{OPT}_{\mathcal{X}}(\theta)$ being the optimal arm defined as before, let the Hessian of $f$ at $\mathrm{OPT}_{\mathcal{X}}(\theta)$, denoted as $\nabla^2f(\mathrm{OPT}_{\mathcal{X}}(\theta))$, be positive definite. Then, $\mathcal{X}$
is said to be a Locally Convex surface.
\end{definition}
\begin{remark}
Locally Convex action spaces are more general than LCH action spaces because they satisfy the property of the Hessian being positive definite and continuous at the optimal arm, and do away with the additional requirement of being constant in a neighbourhood of the optimal arm. This definition can also be extended to cover the situation when $f$ is a convex function and the action space is defined as a sub-level set $\{x:f(x) \leq c\}$ as described in the next subsection.
\end{remark}

For such action-spaces we show that the minimum eigenvalue still enjoys a polynomial growth rate albeit potentially at a rate less than $\sqrt{n}$. 
\begin{theorem}
\label{thm:locally_convex}
Let $\cX$ be a Locally Convex action space and $\Bar{G}_n$, be  the expected design matrix. For any bandit algorithm which suffers expected regret at most $O(\sqrt{n})$, there exists a real number $s$ in the half-open interval $(0,\frac{1}{2}]$ such that 
\begin{equation*}
    \lambda_{\min}(\Bar{G}_n) = \Omega(n^s)~.
\end{equation*}
\end{theorem}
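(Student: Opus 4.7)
The plan is to adapt the proof sketch of Theorem~\ref{thm:main-result} (LCH case), replacing the exact ellipsoidal local geometry with a second-order Taylor approximation of $f$ around $v := \mathrm{OPT}_{\cX}(\theta)$. Since $f$ is $C^2$ and $H := \nabla^2 f(v)$ is positive definite, continuity of the Hessian guarantees a neighborhood $U$ of $v$ on which $\nabla^2 f$ is uniformly positive definite, with eigenvalues sandwiched by constant multiples of those of $H$. Inside $U$, the hyper-surface $\{f = c\}$ is locally close to an ellipsoid, with error controlled by the modulus of continuity of $\nabla^2 f$ on $U$.

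The first step is a quadratic growth lemma: using KKT optimality at $v$ (so $\theta \parallel \nabla f(v)$) together with Taylor's theorem applied to $f$ along the surface, I would show that for all sufficiently small $\epsilon > 0$, every $A \in \mathrm{OPT}_\epsilon(\theta)$ lies in $U$ and satisfies $\|A - v\|^2 \leq C\epsilon$, with $C$ depending on $\lambda_{\min}(H)$ and $\|\theta\|$. This is the analogue of the bound $\|A_s - v\| \leq O(\sqrt{\epsilon})$ used in the LCH proof of \eqref{eq:step_1}.

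With this lemma in hand, the spectral-norm decomposition \eqref{eq:step_1}, the regret calculation \eqref{eq:number_of_optimal_arms}, and the Davis-Kahan/Weyl machinery go through essentially as in the LCH case, yielding $\|\bar{G}_n - n vv^\top\| \leq O(\sqrt{\epsilon}\, n + \sqrt{n}/\epsilon)$ and hence $|\langle v, u_d\rangle| \approx 0$ for $\epsilon = \Theta(1/\sqrt{n})$. The remaining geometric ingredient is to lower bound the $\alpha$ needed to make $\mathrm{OPT}_\epsilon(\theta)$ and $\mathrm{OPT}_\epsilon(\theta + \alpha u_d)$ disjoint: via the implicit function theorem applied to the first-order optimality condition $\theta \propto \nabla f(v)$, perturbing $\theta$ by $\alpha u_d$ with $u_d$ approximately tangent to the surface at $v$ moves the optimal arm by $\Theta(\alpha)$, with a constant depending on $H^{-1}$ restricted to the tangent space. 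Combined with the diameter bound $O(\sqrt{\epsilon})$, disjointness requires $\alpha \gtrsim \sqrt{\epsilon}$, and plugging into \eqref{eq:info_ineq}--\eqref{eq:main_2} gives $\lambda_d = \Omega(1/\alpha^2) = \Omega(\sqrt{n})$, i.e., $s = 1/2$.

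The main obstacle is that, unlike in the LCH case, the constants in both the quadratic growth lemma and the implicit function step are controlled only by the (possibly slow) modulus of continuity of $\nabla^2 f$ on $U$. If this modulus degrades as one shrinks $\epsilon$, the effective neighborhood radius can collapse and the constants hidden in the $O(\sqrt{\epsilon})$ bounds blow up, forcing a larger $\alpha$ and hence a rate $\lambda_{\min}(\bar{G}_n) = \Omega(n^s)$ with $s < 1/2$. Formalizing this trade-off---tracking how the regularity of $\nabla^2 f$ feeds into the final exponent---is the delicate part; the conclusion $s \in (0, 1/2]$ reflects that $s = 1/2$ is recovered under Hölder or better regularity of $\nabla^2 f$, while only $s > 0$ can be guaranteed under the bare $C^2$ hypothesis.
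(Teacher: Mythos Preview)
Your overall strategy matches the paper's: approximate the surface near $v:=\mathrm{OPT}_{\cX}(\theta)$ by its second-order Taylor expansion, then run the same information-inequality / Davis--Kahan / disjoint-$\epsilon$-neighbourhood machinery as in the LCH proof. The implementation differs in how the local geometry is packaged. The paper makes the ellipsoidal approximation explicit: it builds the osculating ellipsoid $\cE$ at $v$, introduces a nearest-point projection $\bar{\mathbf{P}}_{\cX,\cE}:\cX\to\cE$, shows $\bar{\mathbf{P}}_{\cX,\cE}(\mathrm{OPT}_{\epsilon,\cX}(\theta))\subset \mathrm{OPT}_{O(\epsilon^{p}),\cE}(\theta)$ (and similarly for $\theta'$) for exponents $p,q,r>0$, and then invokes the LCH/ellipsoid disjointness result on $\cE$ as a black box, yielding $s=\tfrac12\min(1,p,q,r)$. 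Your route---quadratic growth via Taylor plus the implicit function theorem directly on $\cX$---is more intrinsic and avoids the projection operator; carried out carefully it should in fact give $s=1/2$ outright, since the neighbourhood $U$ on which the Hessian eigenvalues are sandwiched can be fixed once and for all, so the constant $C$ in $\|A-v\|\le C\sqrt{\epsilon}$ does \emph{not} degrade as $\epsilon\to 0$. Your closing worry that the modulus of continuity of $\nabla^2 f$ forces $s<1/2$ is therefore a misdiagnosis: the paper's weaker conclusion $s\in(0,1/2]$ comes from their projection-error bookkeeping (the exponents $p,q,r$), not from any blow-up of the quadratic-growth or IFT constants.

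One genuine point you gloss over: your IFT step asserts that perturbing $\theta$ by $\alpha u_d$ moves the optimal arm by $\Theta(\alpha)$, which requires $u_d$ to have a component tangent to $\cX$ at $v$ bounded away from zero. Davis--Kahan gives $|\langle u_d,v\rangle|\le 1/9$, but the tangent space at $v$ is $\theta^{\perp}$ (since $\theta\parallel\nabla f(v)$), not $v^{\perp}$; for non-spherical surfaces these differ. The paper handles this by whitening the approximating ellipsoid back to a sphere, where $v$ and $\theta$ align and the issue disappears. In your intrinsic approach you need an additional short argument: since $\langle\theta,v\rangle>0$ is the optimal reward and is bounded away from zero, $|\langle u_d,v\rangle|$ small forces the component of $u_d$ along $\hat\theta$ to be bounded away from $1$, hence its tangential component is bounded below.
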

The idea of the proof is to approximate a local neighbourhood around the optimal arm by an LCH surface, and to argue sufficient separation of optimal neighbourhoods in the LCH surface to ensure that the optimal neighbourhoods in the original neighbourhood are separated as well. The rest of the argument remains the same as for LCH spaces, and can be found in full detail in Appendix \ref{sec:app-convex}.
\begin{remark}
The exponent $s$ defined in Theorem \ref{thm:locally_convex} depends explicitly on the geometry of the action-space. In general it depends on how well the surface approximates a LCH surface and in the Appendix \ref{sec:app-convex} we provide a full methodology on how to calculate $s$. Here we just remark that for LCH action surfaces we recover the $\sqrt{n}$ growth rate.
\end{remark}

\subsection{Action Spaces With ``Volume"}
Our results of Theorem \ref{thm:main-result} and Theorem \ref{thm:locally_convex} continue to hold even if we replace the action space $\{x: f(x) = c\}$ by action spaces which are defined as sub-level sets for a $C^2$ convex function function $\{x: f(x) \leq c\}$. Note that any bandit algorithm would make the greedy choice with respect to an estimate of the bandit parameter $\theta$. This effectively reduces the playable action space to that on the surface (maximization of a linear function over a convex domain occurs at the boundary) and we can take the action space to be the surface. We can also prove this using the basic principles used in the proof of Theorem \ref{thm:main-result} and by extension Theorem \ref{thm:locally_convex}. For example, let $\mathcal{X}$ be a ball in $\mathbb{R}^d$, and $\theta$ be a bandit parameter. Without loss in generality, we can take $\| \theta \| = 1$ (otherwise, we can make a change of variable argument like in the ellipsoidal case to show that the result holds, now including a factor of $\| \theta \|$). The crux of the proof relies on two factors. (a) First, to show that $u_d$, the eigen-vector, corresponding to the minimum eigen-value is approximately perpendicular to $\mathrm{OPT}_{\mathcal{X}}(\theta)$. (b) Second to show that $\mathrm{OPT}_{\epsilon, \mathcal{X}}(\theta) \cap \mathrm{OPT}_{\epsilon, \mathcal{X}}(\theta + \alpha u_d) = \emptyset$. For the first part, the main idea is to bound the norm $\|A_s - \mathrm{OPT}_{\mathcal{X}}(\theta)\|$ for any $A_s \in \mathrm{OPT}_{\epsilon, \mathcal{X}}(\theta)$. To do so, note that we had used $\|A_s\| = 1$ for the surface action spaces. Now, when action space is a ball, we can take an upper bound on
$\|A_s\| \leq 1$ while the remainder of the proof follows as before. For the second part, note that the boundary of $\mathrm{OPT}_{\epsilon, \mathcal{X}}(\theta)$ for a ball is simply $\mathrm{OPT}_{\epsilon, \mathcal{S}^{d-1}}(\theta)$ as defined earlier. Thus ensuring the $\epsilon$-optimal neighbourhoods of the sphere are separated ensures the separation of the $\epsilon$-optimal neighbourhoods of the ball. Now we can proceed as before and find the order of the separation to get the same conclusions as in Theorem \ref{thm:main-result} and Theorem \ref{thm:locally_convex}.

\section{NUMERICAL RESULTS}
\label{sec:experiments}
\begin{figure}[htbp]
\centering
    \subfloat[$d=3$]{
    \includegraphics[width=0.5\linewidth]{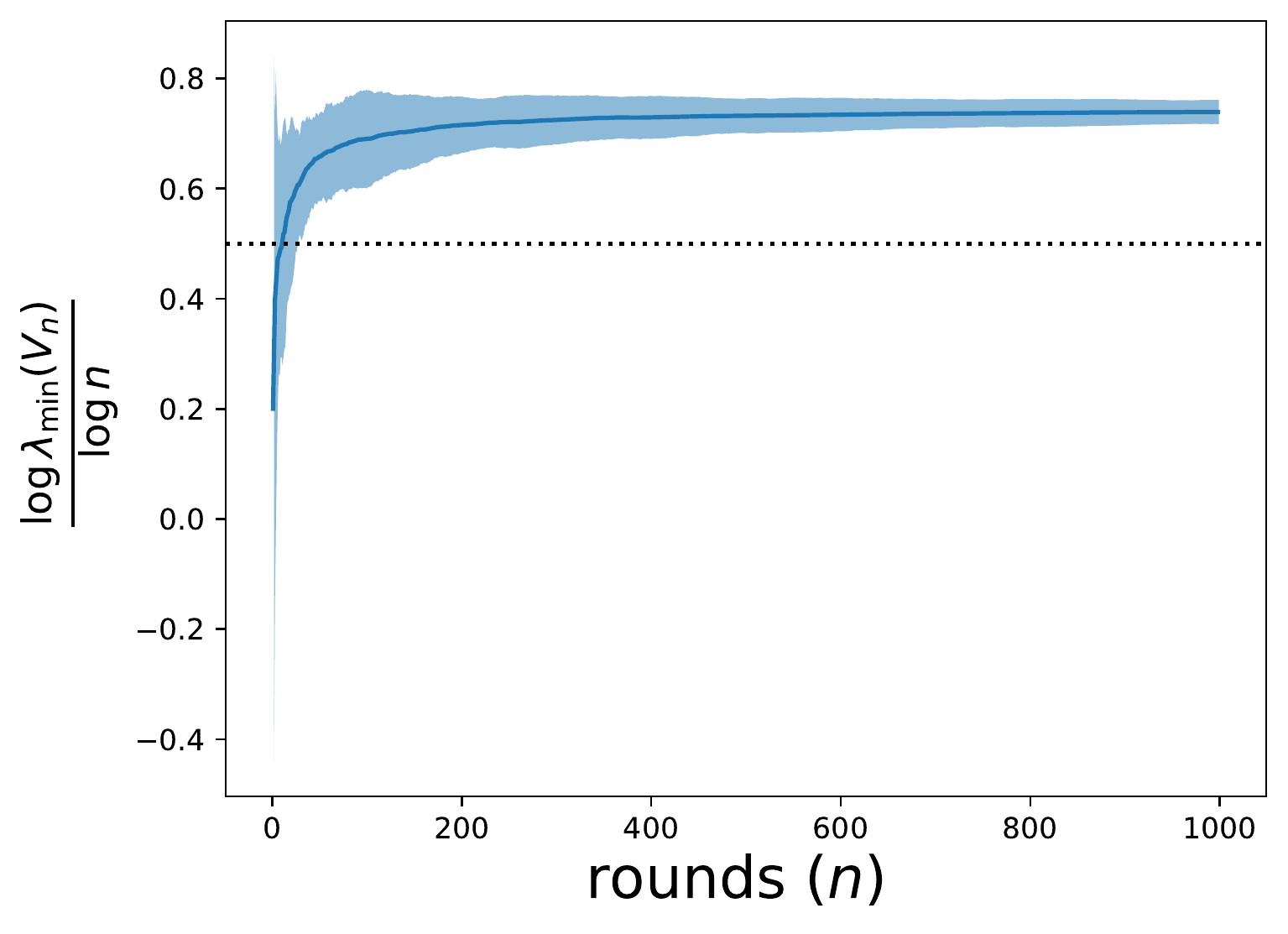}}
    \subfloat[$d=5$]{
    \includegraphics[width=0.5\linewidth]{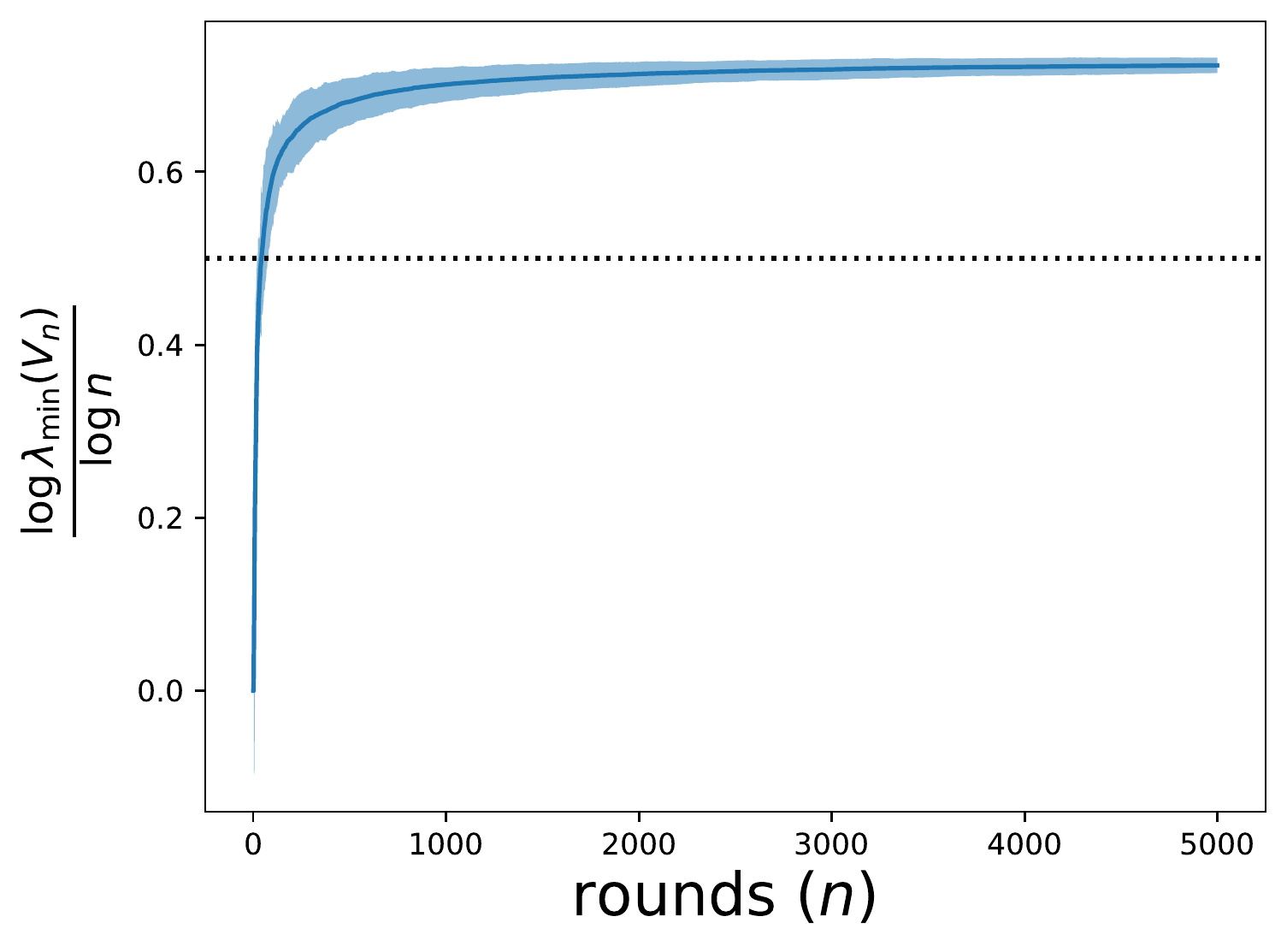}}\\
    \subfloat[$d=10$]{
    \includegraphics[width=0.5\linewidth]{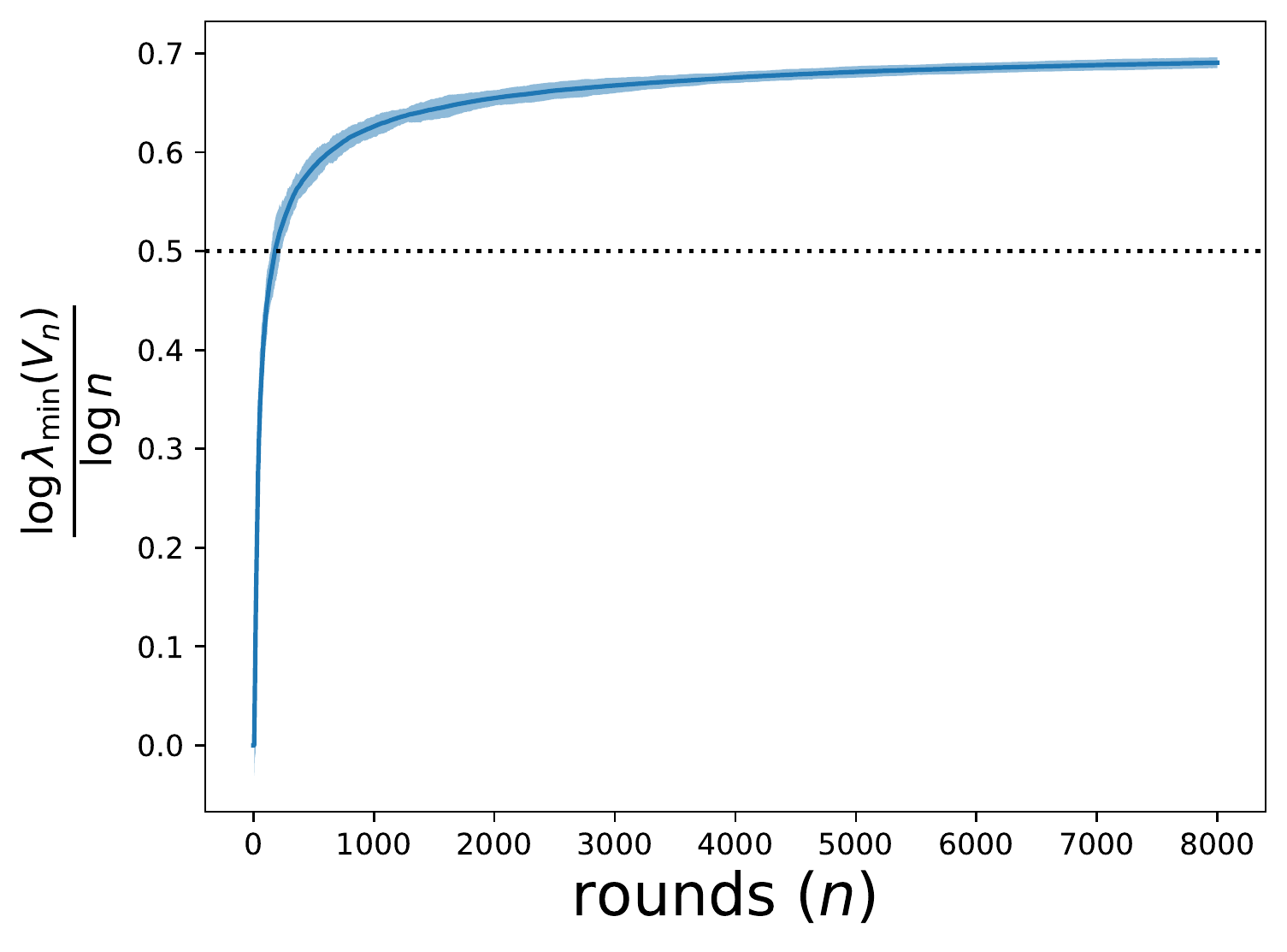}} 
    \subfloat[$d=20$]{
    \includegraphics[width=0.5\linewidth]{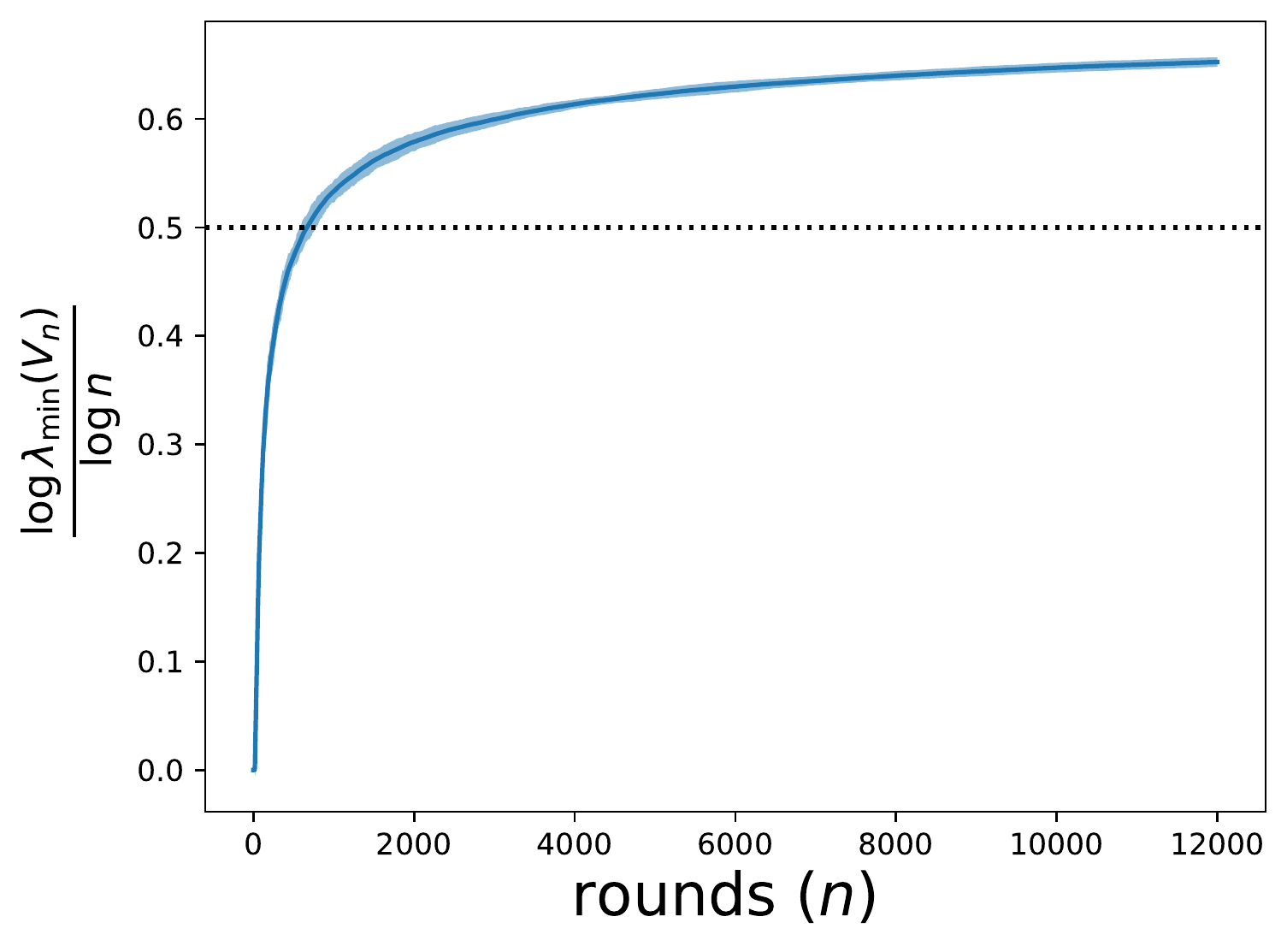}}
    \caption{\footnotesize{Scaling of the minimum eigenvalue of design matrix generated by the Thompson sampling algorithm with time. The plots represent averages over $20$ independent trials. The $X$ axis denotes the number of rounds $n$ and the $Y$-axis denotes $\frac{\log{\lambda_{\min}(V_n)}}{\log{n}}$. The dotted black line is the constant (exponent) $1/2$. Note how $\frac{\log{\lambda_{\min}(V_n)}}{\log{n}}$ crosses and settles to a value above $1/2$ in each case.}}
    \label{fig:exp_1}
    
\end{figure}

In this section, we carry out experiments to understand the rate of growth of the minimum eigenvalue. We find, using a "good" bandit algorithm and a "good" action space, that the minimum eigen-value of the design matrix grows at a rate of more than $\sqrt{n}$, with high probability. We use the well-known linear Thompson Sampling (TS) algorithm  \citep{agrawal2013thompson} as a representative algorithm for linear bandits. 
It is well known that the regret of TS is $\Tilde{O}(\sqrt{n})$ with high probability, where the $\Tilde{O}$, hides logarithmic factors. We use a spherical action space $\cX = \cS^{d-1}$ as a candidate for the LCH action surfaces and we fix the unknown bandit parameter $\theta$ at $e_1$. We use a regularization parameter of $\lambda = 1.0$.   

We observe from Figure \ref{fig:exp_1} the dispersion and mean trend of the sample trajectory dependent quantity $\log{\lambda_{\min}(V_n)}/\log{n}$. We expect to see to see this term cross the benchmark line of $1/2$ for all $n$ more than a finite $n_0$. In order to demonstrate the high probability phenomenon, we form a high confidence band of the mean observation of $\log{\lambda_{\min}(V_n)}/\log{n}$ with three standard deviations of width. We plot this confidence band and we show that the lower envelope of the band remains more than $1/2$. It could be observed that the trend of $\log{\lambda_{\min}(V_n)}/\log{n}$ remains increasing, but this could be accounted for the hidden logarithmic factors in the regret of the TS algorithm itself.

\begin{figure}[htbp]
\centering
    \subfloat[$n_0$ trend with dimension $d$]{
    \includegraphics[height=0.4\linewidth,width=0.5\linewidth]{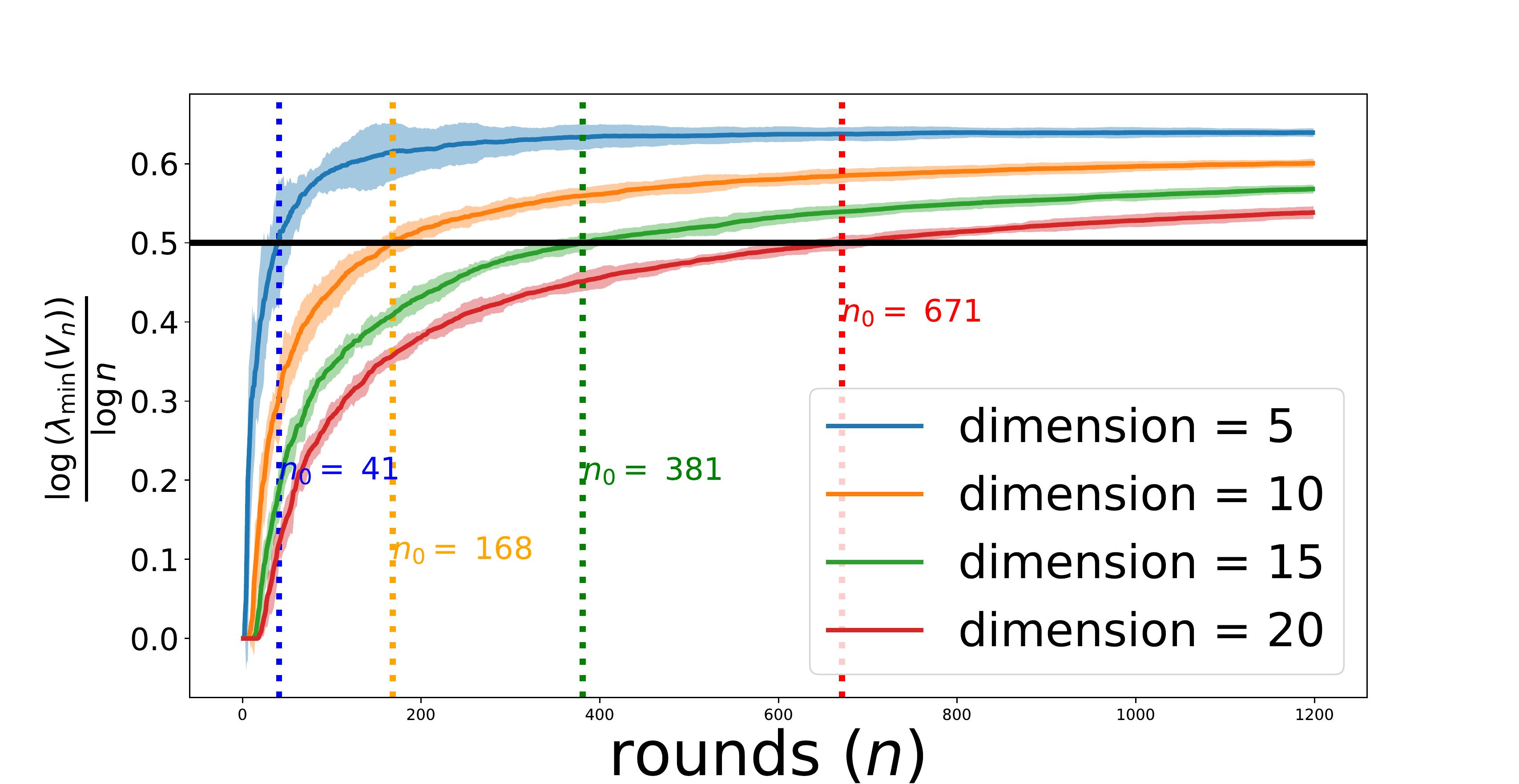}}
    \subfloat[$\gamma$ trend with dimension $d$]{
    \includegraphics[height=0.4\linewidth,width=0.5\linewidth]{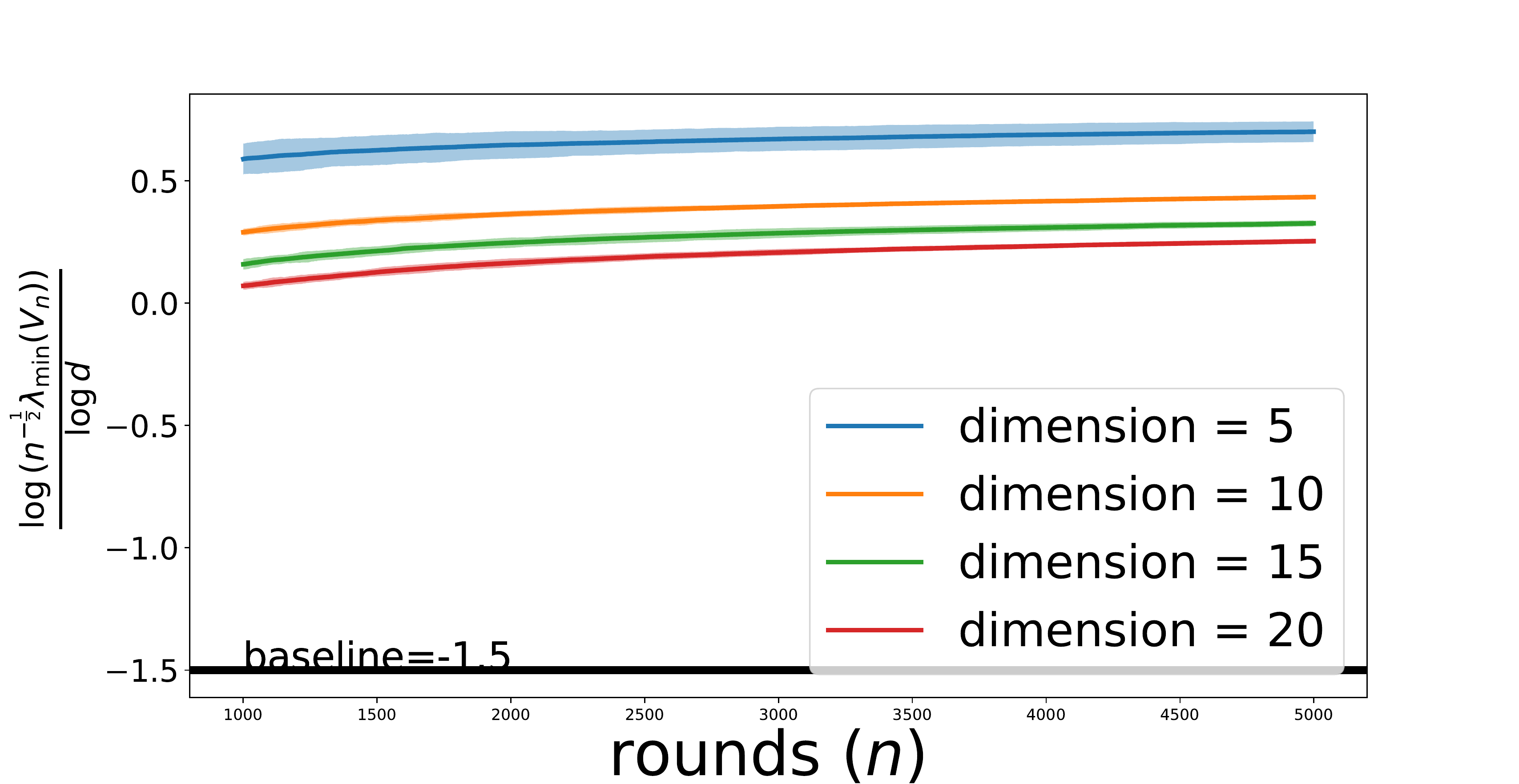}}
    \caption{\footnotesize{Scaling of the theoretical constants $\gamma$ and $n_0$ with the dimension $d$. Note that $n_0$ increases with the dimension $d$ whereas $\gamma$ decreases with the dimension $d$. Since the algorithmic constants varies appropriately with the dimension $d$, in the case of TS, our experimental results corroborates with our theory.}}
    \label{fig:exp_2}
    
\end{figure}
We note that for the spherical action space, the constants $n_0$ and $\gamma$ of Theorem~\ref{thm:main-result} would vary with the dimension $d$ as $\Omega(d^3)$ and $\Omega(1/d^{3/2})$ for TS (see proof of Theorem~\ref{thm:main-result}). In Figure \ref{fig:exp_2} we explicitly calculate the the mean $n_0$, as the time above and over which $\log{\lambda_{\min}(V_n)}/\log{n}$ crosses the $0.5$ bench mark, and observe that our theoretical lower bound of $n_0 = \Omega(d^3)$ is a loose lower bound and that in practical settings the $n_0$ can be much less than the order $d^3$. Similarly for $\gamma$, defined as $\liminf{\frac{\lambda_{\min}(V_n)}{\sqrt{n}}}$, is plotted as the mean trend of $\frac{\log{\lambda_{\min}(V_n)/\sqrt{n}}}{\log{d}}$ for rounds $n$ more than $1000$, as a reasonable upper estimate of $n_0$ and see that this quantity remains well above the benchmark of $-1.5$, thus showing that $\gamma = \Omega(1/d^{3/2})$, is indeed a loose lower bound.

\section{APPLICATIONS}
\label{sec:applications}
We now give two instances where the eigenvalue bound obtained in Theorem~\ref{thm:main-result} can be used to its advantage due to its implication of fast inference about the unknown parameter. We first start by noting that for parameter estimation we need a high probability version of the Theorem~\ref{thm:main-result}.

In Theorem~\ref{thm:main-result}, the eigenvalue bound is shown on the expected design matrix. In the simulations (Section~\ref{sec:experiments}), we observe (over multiple trials) that the lower envelope of $\lambda_{\min}(V_n)$ scales as $\Omega (\sqrt{n})$, which hints towards a high probability result. It turns out that under an appropriately defined \emph{stability} condition, the expectation bound can be translated to a high probability bound. In Appendix \ref{sec:high_probability} we formally define this stability condition and its consequences. However, in this section, for clarity of exposition, we take the high probability bound as granted, and using this, we consider a couple of applications in bandit model selection and clustering. Recall that we define the Gram matrix as $V_n=\sum_{s=1}^{n}A_s A_s^\top$.
\begin{assumption}
\label{ass:high-prob}
Given an action-set $\cX$ which belongs to the class of \emph{Locally Constant Hessian} surfaces (see Definition~\ref{def:action}) for a bandit parameter $\theta$, for any bandit algorithm which suffers regret, $R_n(\theta)$, at most $O(\sqrt{n})$, there exists a $\delta \in (0,1]$, such that
\begin{align*}
 \lambda_{\min}(V_n) = \Omega(\sqrt{n})~,   
\end{align*}
with probability at least $1-\delta$.
\end{assumption}
\begin{remark}
Using the aforementioned \emph{stability} condition we can get similar high probability versions for \emph{Locally Convex} action spaces. However as discussed earlier this could potentially loose the $\sqrt{n}$ growth rate of the minimum eigen-value. For ease of exposition, we shall assume in this section that the action-space in consideration is an LCH surface unless mentioned otherwise.
\end{remark}
\subsection{Model Selection in Linear Bandits}
A natural measure of complexity of linear bandits is the norm of the parameter $\theta$. Typically it is assumed that $\theta$ lies in a norm ball with known radius, i.e., $\|\theta\| \leq b$ \citep{oful,pmlr-v108-chatterji20b}. This leads to non-adaptive algorithms and the algorithms use $b$ as a proxy for the problem complexity, which can be a huge over-estimate. We analyze a linear bandit algorithm, that adapts to the problem complexity $\|\theta\|$, and as a result, the regret obtained will depend on $\|\theta\|$. Specifically, we start with an over estimate of $\|\theta\|$, and successively refine this estimate over multiple epochs. We show that this refinement strategy yields a consistent sequence of estimates of $\|\theta\|$, and as a consequence, our regret bound depends on $\|\theta\|$, but not on its upper bound $b$. \citet{ghosh2021problem} considers a similar model selection problem in a related setting of \emph{stochastic contextual bandits}, with restrictive assumptions on the contexts. We show that model selection guarantees continue to hold even in the context free setting without any additional assumption by using the result in Theorem~\ref{thm:main-result} and the subsequent Assumption~\ref{ass:high-prob}. Before that, we restate the algorithm of \citet{ghosh2021problem}.


\subsubsection{Algorithm and its Regret Bound}

The algorithm -- called Adaptive Linear Bandits (ALB) -- uses the \texttt{OFUL} algorithm of \citet{oful} as a black-box. The learning proceeds in epochs -- at each epoch $i \geq 1$, \texttt{OFUL} is run for $n_i=2^{i-1}n_1$ episodes with confidence level $\delta_i=\frac{\delta}{2^{i-1}}$ and norm estimate $b^{(i)}$, where the initial epoch length $n_1$ and confidence level $\delta$ are parameters of the algorithm. We begin with $b$ as an initial (over) estimate of $\norm{\theta}$, and at the end of $i$-th epoch, based on the confidence set $\cB_{n_i}$ build by \texttt{OFUL}, we choose the new estimate as $ b^{(i+1)} = \max_{\theta \in \cB_{n_{i}}} \|\theta\|$. We argue that this sequence of estimates is indeed consistent, and as a result, the regret depends on $\norm{\theta}$.

\begin{algorithm}[t!]
  \caption{Adaptive Linear Bandit (ALB)}
  \begin{algorithmic}[1]
 \STATE  \textbf{Input:} An upper bound $b$ of $\norm{\theta}$, initial epoch length $n_1$, confidence level $\delta \in (0,1]$
 \STATE Initialize estimate of $\|\theta\|$ as $b^{(1)} = b$, set $\delta_1=\delta$
  \FOR{ epochs $i=1,2 \ldots $}
  \STATE Play \texttt{OFUL} with norm estimate $b^{(i)}$ for $n_i$ rounds with confidence level $\delta_i$ 
  \STATE Refine estimate of $\|\theta\|$ as $ b^{(i+1)}\! =\! \max_{\theta \in \cB_{n_{i}}} \!\!\|\theta\|$
  \STATE Set $n_{i+1} = 2 n_{i}$, $\delta_{i+1} = \frac{\delta_i}{2}$
    \ENDFOR
  \end{algorithmic}
  \label{algo:norm}
\end{algorithm}
Now, we present the main result of this section. We show that, by virtue of Theorem~\ref{thm:main-result} and the subsequent Assumption~\ref{ass:high-prob}, the norm estimates $b^{(i)}$ computed 
by \texttt{ALB} (Algorithm~\ref{algo:norm}) indeed converges to the true norm $\norm{\theta}$ at an exponential rate with high probability.

\begin{lemma}[Convergence of norm estimates]
\label{lem:sequence}
Suppose Assumption~\ref{ass:high-prob} holds. Also, suppose that, for any $\delta \in (0,1]$, the length $n_1$ of the initial epoch satisfies 
\begin{align*}
        n_1 \!\geq\! \max\! \left\{n_0, C \,d^2 \left ( \max\{p,q\}  \, b^{(1)} \right)^4 \right\},
    \end{align*}
where $p \!=\! O\!\left(\!\frac{1}{\sqrt{\gamma_0}}\!\right)$, $q \!=\! O\!\left(\!\sqrt{\!\frac{\log (n_1/\delta)}{\gamma_0}}  \right)$, and $C > 1$ is some sufficiently large universal constant. Then, with probability exceeding $1-4\delta$, the sequence $\{b^{(i)}\}_{i=1}^\infty$ converges to $\|\theta\|$ at a rate $\mathcal{O}\left(\frac{i}{2^{i/4}}\right)$, and $b^{(i)} \leq c_1 \|\theta\| + c_2$, where $c_1,c_2 > 0$ are universal constants.
\end{lemma}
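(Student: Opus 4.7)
The plan is to couple the standard \texttt{OFUL} confidence-ellipsoid guarantee with the eigenvalue bound of Assumption~\ref{ass:high-prob}, so as to control the Euclidean diameter of $\cB_{n_i}$. Because $b^{(i+1)} = \max_{\theta' \in \cB_{n_i}} \|\theta'\|$, and (with high probability) $\theta \in \cB_{n_i}$, any upper bound on this Euclidean radius immediately bounds $|b^{(i+1)} - \|\theta\||$. I would carry out this argument inductively over epochs, maintaining at every epoch the invariant that $b^{(i)}$ is a valid upper bound on $\|\theta\|$.

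At epoch $i$, conditional on $b^{(i)} \geq \|\theta\|$, the \texttt{OFUL} guarantee yields $\cB_{n_i} = \{\theta' : \|\theta' - \hat\theta_{n_i}\|_{V_{n_i}} \leq \beta_{n_i}\}$ with $\beta_{n_i} = O\!\bigl(b^{(i)} + \sqrt{d \log(n_i / \delta_i)}\bigr)$ and $\theta \in \cB_{n_i}$, each holding with probability at least $1 - \delta_i$. Converting $\|\cdot\|_{V_{n_i}}$ to the Euclidean norm introduces a factor $1/\sqrt{\lambda_{\min}(V_{n_i})}$, and Assumption~\ref{ass:high-prob} supplies $\lambda_{\min}(V_{n_i}) \geq \gamma_0 \sqrt{n_i}$ (legitimately so, since \texttt{OFUL} with a valid norm estimate has $\tilde O(\sqrt{n_i})$ regret). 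Hence for every $\theta' \in \cB_{n_i}$,
\[
\|\theta' - \hat\theta_{n_i}\| \;\leq\; \frac{\beta_{n_i}}{\gamma_0^{1/2}\, n_i^{1/4}},
\]
and two applications of the triangle inequality give $|b^{(i+1)} - \|\theta\|| \leq 2 \beta_{n_i} / (\gamma_0^{1/2}\, n_i^{1/4})$. Substituting $n_i = 2^{i-1} n_1$ and $\delta_i = \delta / 2^{i-1}$, the dominant factor is of order $\sqrt{i + \log(n_1/\delta)} / 2^{(i-1)/4}$, which after absorbing constants is $O(i / 2^{i/4})$. Also, because $\theta \in \cB_{n_i}$, we have $b^{(i+1)} \geq \|\theta\|$ automatically, which preserves the inductive invariant for the next epoch.

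The main technical obstacle is preventing the multiplicative factor $b^{(i)}$ in $\beta_{n_i}$ from destabilising the recursion, and aggregating the per-epoch failure probabilities. The hypothesis on $n_1$ is tuned precisely so that the first-epoch contraction factor $O\!\bigl((b^{(1)} + \sqrt{d \log(n_1/\delta)}) / (\gamma_0^{1/2}\, n_1^{1/4})\bigr)$ is below a constant strictly less than $1$; this is what the $d^2$ and the quartic dependence on $\max\{p,q\} b^{(1)}$ encode. A short induction then yields $b^{(i)} \leq c_1 \|\theta\| + c_2$ for absolute constants $c_1, c_2$, after which the contraction rate above applies at every subsequent epoch. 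For the probability bookkeeping, two event streams must be union-bounded across epochs: \texttt{OFUL}'s coverage event (total failure mass $\sum_i \delta_i = 2\delta$) and the eigenvalue event of Assumption~\ref{ass:high-prob} (another $2\delta$ by the same telescoping); the inductive validity of $b^{(i)}$ is deterministic given these. Combining everything yields the $1 - 4\delta$ guarantee in the statement.
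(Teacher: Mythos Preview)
Your proposal is correct and follows essentially the same approach as the paper: couple the \texttt{OFUL} confidence ellipsoid with the eigenvalue lower bound from Assumption~\ref{ass:high-prob} to control the Euclidean radius of $\cB_{n_i}$, derive the recursion $b^{(i+1)} \leq \|\theta\| + O\bigl(b^{(i)}/(\gamma_0^{1/2} n_i^{1/4}) + i\sqrt{d\log(n_1/\delta)}/(\gamma_0^{1/2} n_i^{1/4})\bigr)$, use the hypothesis on $n_1$ to make the $b^{(i)}$-coefficient contractive, and union-bound the two failure streams (coverage and eigenvalue) over the geometric $\delta_i$ to reach $1-4\delta$. The paper's only cosmetic difference is that it unrolls the recursion for $b^{(2)}, b^{(3)}, \ldots$ explicitly and sums the resulting geometric series, rather than phrasing it as an inductive contraction, but the content is identical.
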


\textbf{Comparison with prior work.} \citet{ghosh2021problem} consider the setting stochastic contextual linear bandits, and show that the norm estimates converge at a rate $\mathcal{O}\left(\frac{i}{2^{i/2}}\right)$. This seemingly better rate, however, comes at the cost of restrictive assumptions on the contexts. Specifically, they assume that the minimum eigenvalue of the conditional covariance matrix (given observations up to time $t$) is bounded away from zero. This restriction on the contexts severely limits the applicability of their result. In contrast, we obtain a slightly worse, but still exponential, convergence rate $\mathcal{O}\left(\frac{i}{2^{i/4}}\right)$, albeit with a milder assumption in the context of our main result (Theorem \ref{thm:main-result}).

Armed with the above result, we can now prove a sublinear regret bound of \texttt{ALB}.
\begin{corollary}[Cumulative regret of \texttt{ALB}]
\label{thm:norm}
Fix any $\delta \in (0,1]$, and suppose that the hypothesis of Lemma \ref{lem:sequence} holds. Then, with probability exceeding $1- 6\delta$, \texttt{ALB} enjoys the regret bound
\begin{align*}
    R(n) &\!  =\!  \Tilde{\mathcal{O}}\!\left(\| \theta\| \sqrt{dn\log n_1}+d\sqrt{n\log n_1\log(n_1/\delta)} \!\right)\\ &\leq \Tilde{\mathcal{O}} \left( (\|\theta\|+1) \, d \sqrt{n \log (n/\delta)} \right),
\end{align*}
where $\Tilde{\mathcal{O}}$ hides a $\mathsf{polylog}(n/n_1)$ factor.
\end{corollary}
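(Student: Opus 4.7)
The plan is to condition on the high-probability event supplied by Lemma~\ref{lem:sequence} and then decompose the cumulative regret epoch-by-epoch, invoking the \texttt{OFUL} regret guarantee inside each epoch with the current norm estimate $b^{(i)}$. First, I would observe that under the hypothesis of Lemma~\ref{lem:sequence}, with probability at least $1-4\delta$ the entire sequence $\{b^{(i)}\}_{i\geq1}$ satisfies the upper envelope $b^{(i)} \leq c_1 \|\theta\| + c_2$ uniformly in $i$; this is the key property that will prevent the per-epoch regret from blowing up with $i$.

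Next, I would apply the standard \texttt{OFUL} regret bound inside epoch $i$: with the norm bound $b^{(i)}$ and confidence level $\delta_i = \delta/2^{i-1}$, the regret accrued in epoch $i$ is
\begin{align*}
R^{(i)} = \tilde{O}\!\left( b^{(i)} \sqrt{d\, n_i \log n_i} + d\sqrt{n_i \log n_i \log(n_i/\delta_i)} \right),
\end{align*}
holding with probability at least $1-\delta_i$. Since $\sum_{i\geq1}\delta_i \leq 2\delta$, a union bound across epochs ensures the above holds in every epoch with probability at least $1-2\delta$; combining with the event from Lemma~\ref{lem:sequence} via another union bound yields the overall $1-6\delta$ probability guarantee.

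The core calculation is then to sum $R^{(i)}$ over the at most $I = \lceil \log_2(n/n_1) + 1 \rceil$ epochs required to cover the horizon $n$. Using the envelope $b^{(i)} \leq c_1\|\theta\| + c_2$, the geometric doubling $n_i = 2^{i-1} n_1$, and the identity $\sum_{i=1}^{I}\sqrt{n_i} = \sqrt{n_1}\sum_{i=1}^{I}(\sqrt{2})^{i-1} = O(\sqrt{n})$, along with $\log(n_i/\delta_i) = O(\log(n/\delta))$ since $n_i/\delta_i \leq 4n/\delta$, I obtain
\begin{align*}
R(n) = \sum_{i=1}^{I} R^{(i)} = \tilde{O}\!\left( \|\theta\|\sqrt{dn \log n_1} + d\sqrt{n \log n_1 \log(n_1/\delta)} \right),
\end{align*}
which, after absorbing constants, is dominated by $\tilde{O}((\|\theta\|+1)\, d\sqrt{n\log(n/\delta)})$ as claimed; the $\mathsf{polylog}(n/n_1)$ slack is exactly the number of epochs that gets hidden in the $\tilde{O}$.

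The main obstacle, and the only place where real care is needed, is ensuring that the envelope $b^{(i)} \leq c_1\|\theta\| + c_2$ from Lemma~\ref{lem:sequence} can legitimately be pulled out of the sum uniformly in $i$, rather than using the raw $O(i/2^{i/4})$ convergence rate (which would otherwise contribute an additional factor on each summand). The uniform bound bypasses this, and the remaining arithmetic is routine: a geometric sum in $\sqrt{n_i}$ and a union bound over the two sources of randomness (the norm-refinement event and the per-epoch \texttt{OFUL} confidence event).
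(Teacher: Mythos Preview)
Your proposal is correct and follows essentially the same route as the paper: epoch-wise decomposition, the per-epoch \texttt{OFUL} bound, the uniform envelope $b^{(i)}\le c_1\|\theta\|+c_2$ from Lemma~\ref{lem:sequence}, a geometric sum $\sum_i\sqrt{n_i}=O(\sqrt{n})$, and the same $4\delta+2\delta$ union bound to reach $1-6\delta$. One minor slip: $n_i/\delta_i=4^{\,i-1}n_1/\delta$ is not bounded by $4n/\delta$, but since $2^{i-1}\le n/n_1$ you still get $\log(n_i/\delta_i)=O(\log(n/\delta))$, so the conclusion stands (the paper absorbs this into $\mathsf{poly}(i)=\mathsf{polylog}(n/n_1)$ factors instead).
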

\begin{remark}
Note that the above regret depends on $\|\theta\|$ and hence is adaptive to the norm complexity. Furthermore, the term $(\|\theta\|+1)$ can be reduced to $(\|\theta\|+ \epsilon)$ for an arbitrary $\epsilon>0$ by increasing the length of initial epoch in ALB.
\end{remark}

Lemma~\ref{lem:sequence} and Corollary~\ref{thm:norm} together highlight the advantage of having an eigenvalue bound as in Theorem~\ref{thm:main-result} in this specific setting of model selection due to its implication in fast inference and regret minimization, simultaneously.

\subsection{Clustering without Exploration in Linear Bandits}
\label{sec:cluster}
We consider a multi-agent system with $N$ users, which communicate with a ``center''. Moreover, the agents are partitioned into $k$ clusters. We aim to identify the cluster identity of each user so that collaborative learning is possible within a cluster. Note that previously \citep{clustering_online,ghosh2021collaborative} tackled the problem of online clustering in a stochastic contextual bandit framework, where the players have additional context information to make a decision. More importantly, in the mentioned works, several restrictive assumptions were made on the behavior of the stochastic context, such as, based on the observation upto time $t$, (a) the conditional mean is $0$, (b) the conditional covariance matrix is positive definite, time uniform with the minimum eigenvalue bounded away from $0$ and (c) the conditional variance of the contexts projected in a fixed direction is bounded. Apart from these three, there are a few additional technical assumptions made, see \cite[Lemma 1]{clustering_online} for example. We believe it is difficult to find natural examples of contexts where all these assumptions are satisfied simultaneously, and the authors of the aforementioned papers also do not provide any.

To this end, we provide clustering guarantee without these restrictive assumptions. We only require the action space satisfying Definition~\ref{def:action}, which includes standard spaces like spheres and ellipsoids. Furthermore, we obtain the underlying clustering without \emph{pure (forced) exploration}---thanks to the lower bound of Theorem~\ref{thm:main-result} and the subsequent Assumption~\ref{ass:high-prob}. This is quite useful in recommendation systems and advertisement placement, where forced exploration is often very tricky and not at all desired.
We stick to the framework of standard linear bandits, and we assume a clustering framework, where the user parameters $\{\theta^*_1,\ldots,\theta^*_N\}$ are partitioned into $k$ groups. All users in a cluster have the same preference parameter, and hence, without loss of generality, for all $j \in [k]$, we denote $\theta^*_j$ as the preference parameter for cluster $j$.  We employ the standard OFUL of \citet{oful} as our learning algorithm. Note that, by virtue of Theorem~\ref{thm:main-result} and the subsequent Assumption~\ref{ass:high-prob}, we can estimate the underlying parameter for any agent $i \in [N]$ in $\ell_2$ norm. We use this information to do the clustering task. Our goal here is to propose an algorithm that finds the correct clustering of all $N$ users while simultaneously obtain low regret in the process. Before providing the algorithm and regret guarantee for clustering let us define the (minimum) separation parameter as $\Delta = \min_{i,j; i\neq j}\|\theta^*_i -\theta^*_j\|$. Note that if $\Delta$ is large, the problem is easier to solve, and vice versa. Hence, $\Delta$ is also called the SNR (signal to noise ratio, since we assume that the noise variance is unity) of the problem.

\subsubsection{Algorithm and its regret bound}
\label{sec:algo_cluster}
We now present the clustering algorithm, formally given in Algorithm~\ref{alg:cluster}. We let all the agents play the learning algorithm \texttt{OFUL} of \citet{oful} for $n$ rounds. Note that, with a lower bound on the minimum eigenvalue of the Gram matrix (Theorem~\ref{thm:main-result}, Assumption~\ref{ass:high-prob}), we now can compute how close the estimated parameters $\{\hat{\theta}^{(i)}\}_{i=1}^N$ are to the true parameters in $\ell_2$ norm. The choice of norm is important here. Note that \texttt{OFUL}  also obtains an estimate of the underlying parameter. However, that closeness is only guaranteed in a problem dependent matrix norm, which can not be directly used for inference problems like clustering. Note that Theorem.. basically converts this problem dependent norm to (an universal) $\ell_2$ norm closeness guarantee, which enables us to perform clustering. After $n$ rounds, the center performs a pairwise clustering with threshold $\gamma$. If the pairwise distance between two estimates are less than $\gamma$ they are estimated to belong to the same cluster, otherwise different. This procedure is given in the \texttt{EDGE-CLUSTER} subroutine. With properly chosen threshold, we show that Algorithm~\ref{alg:cluster} clusters the agents correctly with high probability.
\begin{algorithm}[t!]
\caption{Cluster without Explore}
  \begin{algorithmic}[1]
    \STATE \textbf{Input:} No. of users $N$, learning horizon $n$, high probability parameter $\delta$, threshold $\eta$\\
    \vspace{1mm}
  {\textbf{ Individual Learning Phase}}    
\STATE All agents play \texttt{OFUL}($\delta)$ independently for $n$ rounds
    \STATE $\{\hat{\theta}^{(i)}\}_{i=1}^N \gets $ All agents' estimates at the end of round $n$ and send to the center\\
{\textbf{ Cluster the Users at center}}    
\STATE $\texttt{User-Cluster} \gets $ {\ttfamily EDGE-CLUSTER$(\{\hat{\theta}^{(i)}\}_{i=1}^N, \eta)$} \\
\vspace{1mm} 
{\underline{ {\ttfamily EDGE-CLUSTER}}}    
   \STATE  \textbf{Input:} All estimates $\{\hat{\theta}^{(i)}\}_{i=1}^N$,  threshold $\eta \geq 0$.
 \STATE Construct an undirected Graph $G$ on $N$ vertices as follows:
$
 || \widehat{\theta}^*_i - \widehat{\theta}^*_j || \leq \eta \Leftrightarrow  i \sim_G j
$
 \STATE $\mathcal{C} \gets \{C_1, \cdots, C_k \} $ all strongly connected components of $G$
 \STATE \textbf{Return :} $\mathcal{C}$
  \end{algorithmic}
  \label{alg:cluster}
\end{algorithm}

\begin{lemma}
\label{lem:cluster}
 Suppose we choose the threshold $\eta = \frac{4}{n^{1/4}}\sqrt{\frac{2d \log(n/\delta)}{\gamma \log(d/\delta)}}$, and the separation $\Delta$ satisfies
    $\Delta > 2\eta = \frac{8}{n^{1/4}} \sqrt{\frac{2d \log(n/\delta)}{\gamma \log^{1/2}(d/\delta)}}.$
 Then, Algorithm~\ref{alg:cluster} clusters all $N$ agents correctly with probability at least $1- 4 \binom{N}{2}\delta$. Furthermore, the regret of any agent $i \in [N]$ is given by $R_i \leq \mathcal{O}(d\sqrt{n}\log(1/\delta))$ with probability at least $1-\delta$.
 \end{lemma}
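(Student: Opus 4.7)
}

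The plan is to reduce the clustering statement to a per-agent $\ell_2$ estimation bound for $\widehat{\theta}^{(i)}$, which is where Theorem~\ref{thm:main-result} (via Assumption~\ref{ass:high-prob}) does the real work. Each agent runs \texttt{OFUL} independently, so for agent $i$ the standard self-normalized confidence set of \citet{oful} gives
\[
\|\widehat{\theta}^{(i)}-\theta^*_i\|_{V_n^{(i)}} \;\le\; \beta_n(\delta) \;=\; O\!\left(\sqrt{d\log(n/\delta)}\right)
\]
with probability at least $1-\delta$, where $V_n^{(i)}=\sum_{s=1}^n A_s^{(i)}A_s^{(i)\top}$. Simultaneously, since \texttt{OFUL} has regret $O(d\sqrt{n}\log(1/\delta))$ (hence $O(\sqrt{n})$ in $n$), Assumption~\ref{ass:high-prob} applied to the action set $\cX$ (which is LCH by hypothesis) yields $\lambda_{\min}(V_n^{(i)})\ge \gamma\sqrt{n}$ with probability at least $1-\delta$ for each agent $i$. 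Combining these two events via
\[
\|\widehat{\theta}^{(i)}-\theta^*_i\|_2 \;\le\; \frac{\|\widehat{\theta}^{(i)}-\theta^*_i\|_{V_n^{(i)}}}{\sqrt{\lambda_{\min}(V_n^{(i)})}} \;\le\; \frac{O(\sqrt{d\log(n/\delta)})}{\sqrt{\gamma\sqrt{n}}} \;=\; \frac{O(1)}{n^{1/4}}\sqrt{\frac{d\log(n/\delta)}{\gamma}}
\]
gives, after absorbing constants, $\|\widehat{\theta}^{(i)}-\theta^*_i\|_2\le \eta/2$ for the stated choice of $\eta$. Two bad events per agent means a failure probability of at most $2\delta$ per agent.

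Next I would translate the per-agent $\ell_2$ bound into correctness of \texttt{EDGE-CLUSTER} by a triangle-inequality argument on pairs. For any two agents $i,j$ for which both high-probability events hold: if they share a cluster, then
\[
\|\widehat{\theta}^{(i)}-\widehat{\theta}^{(j)}\|_2 \;\le\; \|\widehat{\theta}^{(i)}-\theta^*_i\|_2+\|\widehat{\theta}^{(j)}-\theta^*_j\|_2 \;\le\; \eta,
\]
so the edge is placed, while if they belong to different clusters,
\[
\|\widehat{\theta}^{(i)}-\widehat{\theta}^{(j)}\|_2 \;\ge\; \|\theta^*_i-\theta^*_j\|_2 - \eta \;\ge\; \Delta - \eta \;>\; \eta,
\]
by the hypothesis $\Delta>2\eta$, so no edge is placed. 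Hence the graph $G$ has precisely the true clusters as its connected components, and \texttt{EDGE-CLUSTER} returns the true partition. A union bound over the $\binom{N}{2}$ pairs with $4$ bad events per pair (the OFUL confidence failure and the eigenvalue failure for each of the two agents) yields the claimed error probability $4\binom{N}{2}\delta$.

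For the regret part, since each agent runs \texttt{OFUL} independently, the standard \texttt{OFUL} analysis of \citet{oful} gives $R_i \le O(d\sqrt{n}\log(1/\delta))$ with probability $\ge 1-\delta$ directly, with no coupling across agents required.

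I expect the main delicate point to be bookkeeping of constants so that the threshold in the lemma exactly matches what falls out of the confidence ellipsoid / eigenvalue ratio; in particular, tracking how the $\sqrt{\log(d/\delta)}$ factor in the denominator of $\eta$ arises (most likely from the explicit constant $\gamma$ produced in the appendix version of Theorem~\ref{thm:main-result}, which scales with $\log(d/\delta)$ under Assumption~\ref{ass:high-prob}). Aside from this constant chasing, every step is a routine union bound combined with the two headline ingredients: the \texttt{OFUL} ellipsoid and the $\Omega(\sqrt{n})$ minimum-eigenvalue bound.
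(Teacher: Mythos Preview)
Your proposal is correct and follows essentially the same route as the paper: per-agent OFUL ellipsoid plus the high-probability minimum-eigenvalue bound to get an $\ell_2$ estimation guarantee, then the same two-case triangle-inequality argument on pairs, a union bound over $\binom{N}{2}$ pairs with four events each, and the OFUL regret bound quoted directly. Your guess about the $\sqrt{\log(d/\delta)}$ factor is right: in the paper's proof it enters through the eigenvalue lower bound, which is invoked (via Weyl's inequality together with the regularizer) in the form $\lambda_{\min}(\bar V_n) > \tfrac{\gamma}{2}\sqrt{n}\sqrt{\log(d/\delta)}$ rather than just $\gamma\sqrt{n}$.
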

 
Note that the separation decays with $n$, and for a large $n$, this is just a mild requirement. Also, our clustering algorithm does not incur regret from pure exploration, and the regret is just from playing the OFUL algorithm.

 \begin{remark}[Clustering gain]
 We run Algorithm~\ref{alg:cluster} upto to the instant where all users are clustered correctly wit high probability. In particular, we do not characterize the clustering gain of Algorithm~\ref{alg:cluster} since this is not the main focus of the paper. In order to see the gain, one needs to do the following: after the agents are clustered, the center treats each cluster as a single agent, and averages reward from all users in the same cluster. This ensures standard deviation of the resulting noise goes down by a factor of $1/\sqrt{N'}$ (clustering gain, similar to \cite{gentile2014online,ghosh2021collaborative}), where $N'$ is the number of users in the cluster. 
 \end{remark}
\textbf{Clustering without separation assumption on $\Delta$.} In the above result, we assume that the separation $\Delta$ satisfies the above condition. Instead, if the learner knows $\Delta$, one can set
\begin{align*}
   \frac{n}{\log^2(n/\delta)} \geq C \frac{d^2}{\gamma^2 \, \Delta^4} \left(\frac{1}{\log(d/\delta)} \right) = \Tilde{\Omega}(\frac{d^2}{\gamma^2 \, \Delta^4}),
\end{align*}
and the threshold $\eta = \Delta/2$ to obtain the same result.

\section{CONCLUSION}

We present a minimum eigenvalue bound on the expected design matrix---a theoretical extension to what is already known for discrete action space. In particular, we show that in action spaces with locally "nice" curvature properties, the above-mentioned minimum eigenvalue grows at the order of $\sqrt{n}$. We show that this eigenvalue bound enables us to obtain inference and minimize regret simultaneously, in the linear bandit setup. We then apply our findings in two practical applications, \emph{bandit clustering} and \emph{model selection}. We emphasize these results pave the way for new research in the area of stability and robustness of practical bandit algorithms. Such ideas are not new and the study of differentially private algorithms \citep{dwork2014algorithmic} focuse on this area precisely. Another line of research is to study the question of asymptotic optimality for algorithms based on the principle of optimism in the continuous action domain. 

\clearpage

\bibliographystyle{plainnat}
\bibliography{References}
\newpage
\onecolumn

\section{LOCALLY CONSTANT HESSIAN SURFACES (PROOF OF THEOREM~\ref{thm:main-result})}
\label{app:main}
In this section we shall prove Theorem~\ref{thm:main-result}. We shall divide the proof of the theorem in three parts. In Section~\ref{subsec:sphere} we shall prove it for spherical surface action sets. In Section~\ref{subsec:Ellipsoidal} we shall extend the proof for ellipsoidal surface action sets and finally in Section \ref{subsec:LCH} we prove the result for the generalization to action spaces which are LCH surfaces.

\subsection{Spherical Action Sets}
\label{subsec:sphere}

\begin{theorem}
\label{theorem:sphere}
Let the set of arms be $\cX = \cS^{d-1} := \left\{x \in \mathbb{R}^d: \| x\| = 1\right\}$, the surface of the $d$ dimensional unit sphere. Let $\bar{G}_n = \mathbb{E}_\theta\left[\sum_{s=1}^n A_sA_s^\top\right]$, where $\theta$ is a bandit parameter and $A_s$ are arms in $\cX$ drawn according to some bandit algorithm.
For any bandit algorithm which suffers expected regret, $R_n(\theta)$, at most $O(\sqrt{n})$, 
\begin{align*}
 \lambda_{min}(\bar{G}_n) = \Omega(\sqrt{n})\;.   
\end{align*}
That is there exists constants $\gamma > 0$ and a finite time $n_0$ such that for all $n \geq n_0$, we have $\lambda_{min}(\bar{G}_n) \geq \gamma\sqrt{n}$. 
\end{theorem}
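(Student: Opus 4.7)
The plan is to follow the proof-sketch strategy laid out in Section~2.1, but specialized to the sphere $\cS^{d-1}$. The core mechanism is the change-of-measure information inequality
\[
\tfrac{1}{2}\|\theta-\theta'\|^2_{\bar G_n} \;\geq\; \mathrm{KL}\bigl(\mathrm{Ber}(\E_\theta[Z])\,\|\,\mathrm{Ber}(\E_{\theta'}[Z])\bigr),
\]
applied with $\theta' = \theta + \alpha u_d$, where $u_d$ is the unit eigenvector corresponding to $\lambda_{\min}(\bar G_n)$. This choice collapses the left-hand side to $\alpha^2 \lambda_d$. With $Z := N_{\epsilon,n}(\theta)/n$, the right-hand side becomes a constant provided we can arrange that (i) the neighborhoods $\mathrm{OPT}_\epsilon(\theta)$ and $\mathrm{OPT}_\epsilon(\theta')$ are disjoint and (ii) the sub-linear regret forces almost all mass of $Z$ to lie in $\mathrm{OPT}_\epsilon(\theta)$ under $\E_\theta$ and outside of it under $\E_{\theta'}$. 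Rearranging would then give $\lambda_d \geq c/\alpha^2$, so the entire game reduces to choosing $\alpha$ as small as possible.

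First I would handle step (ii): a direct regret calculation shows that, since every arm outside $\mathrm{OPT}_\epsilon(\theta)$ contributes at least $\epsilon$ to the regret, $\E_\theta[n - N_{\epsilon,n}(\theta)] \leq c\sqrt{n}/\epsilon$. The same estimate holds for $\theta'$ (its regret is still $O(\sqrt{n})$ up to a multiplicative constant depending only on $\|\theta\|$ and $\alpha$, which will be bounded). Picking $\epsilon = \Theta(1/\sqrt{n})$ with a sufficiently large constant makes $\E_\theta[Z]$ close to $1$ and $\E_{\theta'}[Z]$ close to $0$, so the KL term is bounded below by a universal positive constant, provided disjointness of the neighborhoods can be secured.

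The main obstacle is step (i): controlling how small $\alpha$ must be to guarantee disjointness, which depends sharply on the angle between $u_d$ and the optimal direction $v := \theta/\|\theta\|$. I would use Davis--Kahan with $A = n\, vv^\top$ and $H = \bar G_n - n\, vv^\top$, so that $v$ spans the top eigenspace of $A$ while $u_d$ is an eigenvector of $A+H$ corresponding to an eigenvalue that, by Weyl's inequality, is at most $\|H\|$ (since $\lambda_i(A) = 0$ for $i \geq 2$). Thus I need to upper bound $\|\bar G_n - n\, vv^\top\|$. Splitting the sum over rounds where $A_s \in \mathrm{OPT}_\epsilon(\theta)$ versus not, and using that on the sphere $\|A_s\| = 1$ and $\|A_s - v\| = O(\sqrt{\epsilon})$ for $A_s$ in the $\epsilon$-optimal cap, I get
\[
\|\bar G_n - n\, vv^\top\| \;\leq\; O(\sqrt{\epsilon}\, n) + O\!\left(\tfrac{\sqrt{n}}{\epsilon}\right) \;=\; O(n^{3/4})
\]
at $\epsilon = \Theta(1/\sqrt{n})$. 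Taking $n$ larger than some $n_0$ ensures this is at most, say, $0.01 n$; then the eigen-gap between $\lambda_1(A) = n$ and the relevant eigenvalues of $\bar G_n$ is at least $0.99 n$, and Davis--Kahan yields $|\langle v, u_d\rangle| \leq 0.02$, i.e., $u_d$ is essentially orthogonal to $v$.

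Finally, I translate the near-orthogonality of $u_d$ and $v$ into a bound on $\alpha$. Geometrically, perturbing $\theta$ by $\alpha u_d$ in a direction almost tangent to the sphere at $v$ shifts the optimal arm on $\cS^{d-1}$ by an angle of order $\alpha/\|\theta\|$, while the $\epsilon$-cap around each optimum has angular width of order $\sqrt{\epsilon} = \Theta(n^{-1/4})$. Hence to make the two caps disjoint it suffices to take $\alpha = \Theta(n^{-1/4})$ with a sufficiently large constant (a short calculation bounding $\max_{x \in \mathrm{OPT}_\epsilon(\theta)} \langle x, \theta'\rangle$ compared to $\max_{x \in \cX} \langle x, \theta' \rangle - \epsilon$ makes this precise). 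Plugging back, $\lambda_d = \lambda_{\min}(\bar G_n) \geq c/\alpha^2 = \Omega(\sqrt{n})$, completing the proof. The two places that require the most care are the spectral-norm bound above (where the $\sqrt{\epsilon}$ cap width has to be derived from the sphere geometry rather than assumed) and the disjointness calculation (where one must carefully track that $\|\theta'\|$ stays bounded so that the regret argument under $\theta'$ still applies with the same $O(\sqrt{n})$ constant).
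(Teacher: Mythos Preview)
Your proposal is correct and follows essentially the same approach as the paper's proof: the information inequality with $\theta'=\theta+\alpha u_d$, the regret-based bound on $\E_\theta[N_{\epsilon,n}(\theta)]$, the Davis--Kahan/Weyl argument with $A=nvv^\top$ and the same $O(\sqrt{\epsilon}\,n)+O(\sqrt{n}/\epsilon)$ spectral-norm bound on $H$, and the choice $\epsilon=\Theta(n^{-1/2})$, $\alpha=\Theta(n^{-1/4})$. The only cosmetic difference is that the paper establishes disjointness of the $\epsilon$-caps via a planar trigonometric argument (law of sines) rather than your direct inner-product comparison, but the content is identical.
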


\begin{proof}
\label{proof:sphere}
We start with a result which is standard while proving such lower bounds (\cite{lattimore2017end, lattimore2020bandit, hao2020adaptive}) which is using the measure change inequality of \cite{kaufmann2016complexity} (see Lemma \ref{lemma:Garivier_Kauffmann}) combined with the Divergence Decomposition Lemma under the Linear Bandit Setup \cite{lattimore2020bandit} (see Lemma \ref{lemma:Divergence_Decomposition}) to get 
\begin{align}
\label{eq:eq8}
\frac{1}{2}\|\theta - \theta'\|^2_{\bar{G}_n} \geq \mathrm{KL}(\mathrm{Ber}(\mathbb{E}_{\theta}[Z]) || \mathrm{Ber}(\mathbb{E}_{\theta'}[Z]))\; .    
\end{align}
 For any time $n$ we have an eigenvalue decomposition of $\bar{G}_n$ as  
\begin{align}
\label{eq:eq9}
\bar{G}_n = \sum_{i=1}^d \lambda_i u_i u_i^\top\;,    
\end{align}
where $\lambda_1 \geq \lambda_2 \geq \cdots, \geq \lambda_d \triangleq \lambda_{min}(\bar{G}_n)$.
From the regret property we have, for all $M$, such that for any bandit parameter $\theta$ satisfying $\norm{\theta} \leq 2M$, there exists a constant $c > 0$  such that expected regret $\expect{R_n(\theta)} \leq c\sqrt{n}$ for any $n$. Without loss of generality let us fix the bandit parameter $\theta = e_1$ (Otherwise we can always rename our coordinate system s.t. in the new system $\theta = e_1$) and $\theta' = \theta + \alpha u_d$ for some $\alpha$ to be decided later. This gives,
\begin{align}
\label{eq:eq10}
  \|\theta - \theta'\|^2_{\bar{G}_n} = \alpha^2 \lambda_d\;.  
\end{align}

\paragraph{$\epsilon$-neighbourhood optimal arms}

We shall define $\mathrm{OPT}_\epsilon(\theta)$ as the set of $\epsilon-$ suboptimal arms with respect to $\theta$
\begin{align*}
  \mathrm{OPT}_\epsilon(\theta) \triangleq \{ x \in \cX \mid x^T\theta \geq \sup_x x^T \theta - \epsilon \}  
\end{align*}
We shall with abuse of notation denote $\mathrm{OPT}_0(\theta) = \mathrm{OPT}(\theta) = e_1$. Let us also define $\mathrm{N}_{\epsilon, n}(\theta)$ to be the number of times in time $n$ that an arm in $\mathrm{OPT}_\epsilon(\theta)$ has been played.
\begin{align*}
  \mathrm{N}_{\epsilon, n}(\theta) \triangleq \sum_{s=1}^n\mathrm{1}\{A_s \in \mathrm{OPT}_\epsilon(\theta)\} . 
\end{align*}
Under the hypothesis that for any bandit parameter $\theta$ satisfying $\norm{\theta} \leq 2$, there exists a constant $c > 0$ such that $\mathbb{E}R_n(\theta) \leq c\sqrt{n}$, we have
\begin{align*}
     c\sqrt{n} \geq \mathbb{E}R_n(\theta) \geq \mathbb{E}[\sum_{s=1}^n \mathrm{1}\{A_s \notin \mathrm{OPT}_\epsilon(\theta)\}(\max_x x^T\theta - A_s^T\theta)]
     \newline
    \geq \eps(\mathbb{E}[n-\mathrm{N}_{\epsilon, n}(\theta)]) \;.
\end{align*}
Rearranging, we get $\mathbb{E}_\theta[\mathrm{N}_{\epsilon, n}(\theta)] \geq n - \frac{c\sqrt{n}}{\epsilon}$.

\paragraph{Showing that $u_d$ is approximately perpendicular to $\mathrm{OPT}(\theta)$.}

Let $v \equiv \mathrm{OPT}(\theta)$. Then, by definition $v = \theta = e_1$. Let us decompose $\bar{G}_n$ into $nvv^\top$, the unperturbed component, $A$, and $\bar{G}_n - nvv^\top$ the perturbation, $H$, as per our notation of the perturbation lemmas Davis-Kahan(see Lemma \ref{lemma:Davis_Kahan}) and Weyl's lemma (see Lemma \ref{lemma:Weyl's}).
\begin{align}
\label{eq:eq12}
  \bar{G}_n = \underbrace{nvv^\top}_{A}+ \underbrace{\bar{G}_n - nvv^\top}_{H, \text{"perturbation"}}  
\end{align}
Note that both the unperturbed matrix $A$, the perturbation matrix $H$ are hermitian. Thus we can use Weyl's Lemma (Lemma \ref{lemma:Weyl's}). This gives us for all $i=2,3,\cdots,d$
\begin{equation}
\label{eq:eq13}
  \lambda_i(\bar{G}_n) \leq \underbrace{\lambda_i(A)}_{=0} + \lambda_{\max}(H) , 
\end{equation}
where we have used $\lambda_i(e_1e_1^\top) = 0$ for all $i=2,3,\cdots,d$. Now note from the definition of the perturbation matrix $H$ and our notation for the spectral norm
\begin{align*}
    \lambda_{\max}(H) = \|\bar{G}_n - n vv^\top\|
    \newline
    =\norm{\expect{\sum_{s=1}^n A_sA_s^\top - vv^\top}},
\end{align*}
where in the last equation we have expanded the definition of $\bar{G}_n$.
We now decompose the last sum into two parts, all arms, $A_s$ which belong in the group $\mathrm{OPT}_\epsilon(\theta)$ and those that do not. We are doing this because we shall see that all arms that belong in the group of $\mathrm{OPT}_\epsilon(\theta)$ will have norm $\|A_s - v \|$ small than those which do not. We get
\begin{equation}
\label{eq:eq14}
    \lambda_{max}(H)=\norm{\expect{\sum_{s: A_s \in \mathrm{OPT}_\epsilon(\theta) }(A_sA_s^\top - vv^\top) + \sum_{s: A_s \notin \mathrm{OPT}_\epsilon(\theta) }(A_sA_s^\top - vv^\top)}}.
\end{equation}
Now, by interchanging norm and expectation and using triangle inequality for norms we have that
\begin{equation}
\label{eq:eq15}
    \lambda_{max}(H)\leq \sum_{s: A_s \in \mathrm{OPT}_\epsilon(\theta) }\expect{\norm{A_sA_s^\top - vv^\top}} + \sum_{s: A_s \notin \mathrm{OPT}_\epsilon(\theta) }\expect{\norm{(A_sA_s^\top - vv^\top)}} \;.
\end{equation}
Thus, we see that we need to control the norm of $A_sA_s^\top - vv^\top$ for two separate cases. Before we do that, let us first rearrange $A_sA_s^\top - vv^\top$
to get 
\begin{align}
    \|A_sA_s^\top - vv^\top\| = \|(A_s - v)A_s^\top + v(A_s - v)^\top\| \leq \|A_s - v\|(\|A_s\| + \|v\|) 
\leq 2\|A_s - v\| \label{eq:eq16}
\end{align}
where in the last equation we have used the fact that both $A_s$ and $v$ belongs to $\cS^{d-1}$. Note that here we need to have kept $\cS^{d-1}$ and a generic upper bound on the arm set $\cX$ would have sufficed. Thus, we need to control $\|A_s - v\|$ in the two cases. When $A_s \notin \mathrm{OPT}_\epsilon(\theta)$ we can use a generic upper bound on the bounded property of the Action Space to get $\|A_s - v\| \leq 2$. Now, let us consider the case when $A_s \in \mathrm{OPT}_\epsilon(\theta)$. Then by definition of $\mathrm{OPT}_\epsilon(\theta)$ we have
\begin{align}
\label{eq:eq17}
 A_s^\top\theta \geq v^\top\theta - \epsilon = 1- \epsilon   
\end{align}
where we have again used that $v = e_1 = \theta$ because of the action-space $\cX$. Note here again that for a general geometry $v^\top \theta$ would still be some constant depending upon the geometry of the action-space.

Now let us consider the 2 orthogonal components of such an $A_s$,
$A_s - (A_s^\top v)v$ and $(A_s^\top v)v$.
We have by Pythagorean Theorem
\begin{align*}
  \|A_s\|^2 = \|(A_s^\top v)v\|^2 + \|A_s - (A_s^\top v)v\|^2 . 
\end{align*}
Therefore again utilizing that $\|\|A_s\|^2 = 1$ (although a generic bound would have sufficed) and rearranging
\begin{align*}
    \|A_s - (A_s^\top v)v\|^2 = 1 - \|(A_s^\top v)v\|^2.
\end{align*}
Now we utilize the fact that in the spherical geometry we can exactly compute $A_s^\top v = A_s^\top \theta \geq 1 -\epsilon$, (see Equation~\eqref{eq:eq17}), to get 
\begin{align*}
  1 - \|(A_s^\top v)v\|^2  \leq 1 - (1-\epsilon)^2 = (2\epsilon - \epsilon^2)\;,
\end{align*} where we have used homogenity of norms in the inequality.
For other geometry we need to use the geometry of the set $\mathrm{OPT}_\epsilon(\theta)$ to estimate $A_s^\top v$.

Now, we can conclude that for any $A_s \in \mathrm{OPT}_\epsilon(\theta)$, by virtue of orthogonality between $A_s - (A_s^\top v)v$ and $v -(A_s^\top v)v$, we have
using the Pythagorean theorem 
\begin{align*}
  \|A_s - v\|^2 = \|A_s - (A_s^\top v)v\|^2 + \|v -(A_s^\top v)v\|^2.  \end{align*}
Now using the fact that $v -(A_s^\top v)v = (1-A_s^\top v)v \leq \epsilon v$, we get
\begin{align*}
    \|A_s - (A_s^\top v)v\|^2 + \|v -(A_s^\top v)v\|^2 \leq (2\epsilon - \epsilon^2) + \epsilon^2 = 2\epsilon\;.
\end{align*}
Thus we have for any $A_s \in \mathrm{OPT}_\epsilon(\theta)$ the following estimate
\begin{equation}
\label{eq:eq18}
    \|A_s - v\| \leq \sqrt{2\epsilon}~.
\end{equation}
Note that by changing the geometry we will have different constants but still of the same $\sqrt{\epsilon}$ order.

Thus we have from Equations ~\eqref{eq:eq15}, \eqref{eq:eq16} and \eqref{eq:eq18} along with the trivial bound of $\|A_s - v\| \leq 2$, the following estimate on the spectral norm of the perturbation matrix $H$ : 
\begin{align*}
  \lambda_{max}(H) \leq 2\sqrt{2\epsilon}\mathbb{E}_{\theta}[\mathrm{N}_{\epsilon, n}(\theta)] + 4\mathbb{E}_{\theta}[n - \mathrm{N}_{\epsilon, n}(\theta)] \;,
\end{align*}
where we have used the definitions of $\mathrm{N}_{\epsilon, n}(\theta) = \sum_{s=1}^n \mathrm{1}\{A_s \in \mathrm{OPT}_\epsilon(\theta)\}$ and $n - \mathrm{N}_{\epsilon, n}(\theta) = \sum_{s=1}^n \mathrm{1}\{A_s \notin \mathrm{OPT}_\epsilon(\theta)\} $.
Then using a crude upper bound of $n$ on $\mathbb{E}_{\theta}[\mathrm{N}_{\epsilon, n}(\theta)]$ and using the expression for the lower bound of $\mathbb{E}_{\theta}[\mathrm{N}_{\epsilon, n}(\theta)]$ to get an upper bound on $\mathbb{E}_{\theta}[n - \mathrm{N}_{\epsilon, n}(\theta)]$, we get
\begin{align}
\label{eq:eq19}
  \leq 2\sqrt{2\epsilon}n + 4\frac{c\sqrt{n}}{\epsilon} \;. 
\end{align}

Till now $\epsilon$ was a free parameter. We choose $\epsilon = \frac{c}{0.01\sqrt{n}}$ such that, $\frac{c\sqrt{n}}{\epsilon} = 0.01n$.
This gives from Equation \eqref{eq:eq19}
\begin{align}
\label{eq:crux}
  \lambda_{max}(H) \leq 2\sqrt{2}\sqrt{\frac{c}{0.01}}n^{\frac{3}{4}} + 4*0.01*n ~. 
\end{align}
Thus for a large but finite $n$, depending upon $c$ and also on the geometry of the arm set, we have  
\begin{equation}
\label{eq:eq20}
    \lambda_{max}(H) \leq 0.1n~.
\end{equation}
Hence, from Equation \eqref{eq:eq13}, we have
\begin{equation}
\label{eq:eq21}
    \lambda_i(\bar{G}_n) \leq 0.1 n \text{ }\forall i = 2,3,\ldots,d \; .
\end{equation}

Thus we have from Equation \eqref{eq:eq20} the norm of the perturbation matrix; from equation \eqref{eq:eq9} we have the eigen decompostion of $\bar{G}_n$; we also have the eigen-decomposition of $n vv^\top$ as $nvv^\top = nvv^\top + \sum_{i=2}^d 0 \tilde{u}_i\tilde{u}_i^\top$ and from equation \eqref{eq:eq21} we have the eigen-gap $\delta = \lambda_1(nvv^\top) - \lambda_2(\bar{G}_n) \geq n - 0.1n = 0.9n$.

Therefore the Davis Kahan Theorem, (see Theorem \ref{lemma:Davis_Kahan}) gives us
\begin{equation*}
\|v^T \begin{bmatrix}
u_2 & u_3 & \cdots & u_d\\
\end{bmatrix}\| \leq \frac{\|H\|}{n-0.1n} \leq \frac{0.1n}{0.9n} = \frac{1}{9}  \;,  \end{equation*}
which implies
\begin{align*}
\max_{i \in 2,\ldots, d} v^Tu_i \leq \frac{1}{9},    
\end{align*}
which we interpret as the eigenvector corresponding to $\lambda_{min}(\bar{G}_n)$ is sufficiently orthogonal to $\mathrm{OPT}(\theta)$ when the optimal set decreases as roughly $\frac{1}{\sqrt{n}}$.

\paragraph{The amount $\alpha$ to perturb $\theta$ to $\theta'$ }
In this penultimate section, we will try to find $\alpha$ such that the $\epsilon$-optimal arms for $\theta$ and $\theta'$ are necessarily disjoint. 

Formally we would want to find the smallest $\alpha$ s.t. $$\mathrm{OPT}_\epsilon(\theta) \cap \mathrm{OPT}_\epsilon(\theta + \alpha u_d) = \emptyset$$ 
subject to the constraint $|\langle u_d, \mathrm{OPT}(\theta) \rangle| \leq \frac{1}{9}$
 
 \begin{lemma}
 \label{lemma:step}
  There exists a constant $\beta$ such that $\mathrm{OPT}_\epsilon(\theta) \cap \mathrm{OPT}_\epsilon(\theta + \frac{\beta}{n^{1/4}} u_d) = \emptyset$ with $\epsilon$ defined as above with $u_d$ is subject to the constraint $|\langle u_d, \mathrm{OPT}(\theta) \rangle| \leq \frac{1}{9}$.
 \end{lemma}

\begin{proof}


We will be focusing on $3$ points $\theta$, $\theta + \alpha u_d$ and the centre of the sphere. Since a plane can be drawn passing through any 3 non-collinear points we will be working in terms of the triangle formed by these three points. We will refer to Figure \ref{img:proof} for the proof.

Let us first note that for any given arbitrary $\epsilon$, the $\mathrm{OPT}_\epsilon(\theta)$ subtends an angle $\psi$ about the centre. (See left image of Figure \ref{img:proof}). In order to ensure that $\mathrm{OPT}_\epsilon(\theta)$ and $\mathrm{OPT}_\epsilon(\theta + \alpha u_d)$ are disjoint, we need to ensure that angular displacement of $\theta'$ is at least twice that is $2\psi$ from $\theta$ (see left image of Figure \ref{img:proof}) and thus choose the step size $\alpha$ based on the angular displacement $\psi$.  

\tikzset{every picture/.style={line width=0.75pt}}

\begin{figure*}[t!]
\centering
\subfloat[][The $\epsilon$-nbd of optimal arms with respect to $\theta$ subtends an angle $\psi$ in the centre. So in order for the $\epsilon$-nbd of optimal arms about $\theta'$ to be disjoint, the angular displacement of $\theta'$ must be at least $2\psi$   ]{
\begin{tikzpicture}[x=0.75pt,y=0.75pt,yscale=-1,xscale=1]

\draw    (243,168) -- (324.25,168) ;
\draw [shift={(326.25,168)}, rotate = 180] [color={rgb, 255:red, 0; green, 0; blue, 0 }  ][line width=0.75]    (10.93,-3.29) .. controls (6.95,-1.4) and (3.31,-0.3) .. (0,0) .. controls (3.31,0.3) and (6.95,1.4) .. (10.93,3.29)   ;
\draw    (326.25,168) -- (334.75,99.98) ;
\draw [shift={(335,98)}, rotate = 97.13] [color={rgb, 255:red, 0; green, 0; blue, 0 }  ][line width=0.75]    (10.93,-3.29) .. controls (6.95,-1.4) and (3.31,-0.3) .. (0,0) .. controls (3.31,0.3) and (6.95,1.4) .. (10.93,3.29)   ;
\draw    (243,168) -- (333.41,99.21) ;
\draw [shift={(335,98)}, rotate = 142.73] [color={rgb, 255:red, 0; green, 0; blue, 0 }  ][line width=0.75]    (10.93,-3.29) .. controls (6.95,-1.4) and (3.31,-0.3) .. (0,0) .. controls (3.31,0.3) and (6.95,1.4) .. (10.93,3.29)   ;
\draw   (159.75,168) .. controls (159.75,122.02) and (197.02,84.75) .. (243,84.75) .. controls (288.98,84.75) and (326.25,122.02) .. (326.25,168) .. controls (326.25,213.98) and (288.98,251.25) .. (243,251.25) .. controls (197.02,251.25) and (159.75,213.98) .. (159.75,168) -- cycle ;
\draw  [fill={rgb, 255:red, 0; green, 0; blue, 0 }  ,fill opacity=0.39 ] (290.15,107.25) -- (303.85,95.58) -- (327.85,123.75) -- (314.15,135.42) -- cycle ;
\draw  [color={rgb, 255:red, 0; green, 0; blue, 0 }  ,draw opacity=0.58 ][fill={rgb, 255:red, 0; green, 0; blue, 0 }  ,fill opacity=0.42 ] (317.39,149.43) -- (335.39,149.57) -- (335.11,186.57) -- (317.11,186.43) -- cycle ;
\draw [color={rgb, 255:red, 0; green, 0; blue, 0 }  ,draw opacity=0.51 ][line width=0.75]    (243,168) -- (326,188) ;
\draw [color={rgb, 255:red, 0; green, 0; blue, 0 }  ,draw opacity=0.51 ][line width=0.75]    (243,168) -- (327,147) ;
\draw  [color={rgb, 255:red, 0; green, 0; blue, 0 }  ,draw opacity=0.69 ][line width=1.5] [line join = round][line cap = round] (264,165) .. controls (264,165.05) and (266.09,184) .. (261,184) ;

\draw (253,79) node [anchor=north west][inner sep=0.75pt]   [align=left] {$\displaystyle OPT_{\epsilon }( \theta ')$};
\draw (331,81) node [anchor=north west][inner sep=0.75pt]   [align=left] {$\displaystyle \theta '$};
\draw (327,152) node [anchor=north west][inner sep=0.75pt]   [align=left] {$\displaystyle \theta $};
\draw (332,114) node [anchor=north west][inner sep=0.75pt]   [align=left] {$\displaystyle \alpha u_{d}$};
\draw (340,167) node [anchor=north west][inner sep=0.75pt]   [align=left] {$\displaystyle OPT_{\epsilon }( \theta )$};
\draw (112,88) node [anchor=north west][inner sep=0.75pt]   [align=left] {$\displaystyle \mathcal{X} =\mathcal{S}{^{d-1}}$};
\draw (232,169) node [anchor=north west][inner sep=0.75pt]   [align=left] {$\displaystyle \angle \psi $};

\end{tikzpicture}}
\subfloat[][$\alpha u_d$ makes an angle of $\pi/2-\delta$ to $\theta$]{
\begin{tikzpicture}[x=0.50pt,y=0.50pt,yscale=-1,xscale=1]

\draw    (154,232) -- (356,234.97) ;
\draw [shift={(358,235)}, rotate = 180.84] [color={rgb, 255:red, 0; green, 0; blue, 0 }  ][line width=0.75]    (10.93,-3.29) .. controls (6.95,-1.4) and (3.31,-0.3) .. (0,0) .. controls (3.31,0.3) and (6.95,1.4) .. (10.93,3.29)   ;
\draw    (358,235) -- (393.62,49.96) ;
\draw [shift={(394,48)}, rotate = 100.9] [color={rgb, 255:red, 0; green, 0; blue, 0 }  ][line width=0.75]    (10.93,-3.29) .. controls (6.95,-1.4) and (3.31,-0.3) .. (0,0) .. controls (3.31,0.3) and (6.95,1.4) .. (10.93,3.29)   ;
\draw    (154,232) -- (392.41,49.22) ;
\draw [shift={(394,48)}, rotate = 142.52] [color={rgb, 255:red, 0; green, 0; blue, 0 }  ][line width=0.75]    (10.93,-3.29) .. controls (6.95,-1.4) and (3.31,-0.3) .. (0,0) .. controls (3.31,0.3) and (6.95,1.4) .. (10.93,3.29)   ;
\draw [dash pattern={on 5.63pt off 4.5pt}][line width=1.5]  (348,230) -- (488,230)(362,95) -- (362,245) (481,225) -- (488,230) -- (481,235) (357,102) -- (362,95) -- (367,102)  ;
\draw  [color={rgb, 255:red, 0; green, 0; blue, 0 }  ][line width=0.75] [line join = round][line cap = round] (179,215) .. controls (181.76,226.04) and (182.49,237.63) .. (182,249) .. controls (181.9,251.27) and (179,251.1) .. (179,253) ;
\draw  [color={rgb, 255:red, 0; green, 0; blue, 0 }  ][line width=0.75] [line join = round][line cap = round] (363,159) .. controls (363.88,159) and (387,157.24) .. (387,165) ;
\draw  [color={rgb, 255:red, 0; green, 0; blue, 0 }  ][line width=0.75] [line join = round][line cap = round] (389,74) .. controls (380.06,67.3) and (367,61.17) .. (367,50) ;

\draw (385,116) node [anchor=north west][inner sep=0.75pt]   [align=left] {$\displaystyle \alpha u_{d}$};
\draw (224,239) node [anchor=north west][inner sep=0.75pt]   [align=left] {$\displaystyle \mathrm{OPT}( \theta ) =e_{1}$};
\draw (382,29) node [anchor=north west][inner sep=0.75pt]   [align=left] {$\displaystyle \mathrm{OPT}( \theta ')$};
\draw (146,238) node [anchor=north west][inner sep=0.75pt]   [align=left] {$\displaystyle \angle 2\psi $};
\draw (372,177) node [anchor=north west][inner sep=0.75pt]   [align=left] {$\displaystyle \angle \delta $};
\draw (264,39) node [anchor=north west][inner sep=0.75pt]   [align=left] {$\displaystyle \angle \left(\frac{\pi }{2} -\delta -2\psi \right)$};

\end{tikzpicture}}
\caption{Proof of Lemma \ref{lemma:step}}
\label{img:proof}
\end{figure*}
Note that even in continuous action-sets with arbitrary geometry, if the geometry is smooth, the idea still remains the same. Thus we have, 
\begin{align}
\label{eq:eq22}
    \psi = \cos^{-1}(1-\epsilon)
\end{align} (see the left image of Figure \ref{img:proof} for better clarity $\langle x, \theta \rangle = \cos \psi \geq 1-\epsilon$ for any $x \in \mathrm{OPT}_\epsilon(\theta)$ ) and hence the displacement must be at least $2\cos^{-1}(1-\epsilon)$. 

Now from the constraint condition that $v^\top u_d \leq 1/9$, we refer to the right hand side of figure $\ref{img:proof}$, where $\delta$ is the complementary angle between by $v$ (and hence $\theta$) and $u_d$.
Thus \begin{align}
\label{eq:eq23}
    \sin \delta = \cos(\pi/2 - \delta) = \langle v, u_d \rangle\leq 1/9
\end{align}  (see the right image of Figure \ref{img:proof}) and hence $\cos \delta \geq \sqrt{\frac{80}{81}}$. 
Therefore applying the law of sines we have (see the right image of Figure \ref{img:proof} )
\begin{align}
\label{eq:eq24}
\frac{\alpha}{\sin 2\psi} = \frac{1}{\sin (\pi/2 - \delta - 2\psi)} \implies \alpha = \frac{\sin(2\psi)}{\sin (\pi/2 - \delta - 2\psi)} = \frac{\sin(2\psi)}{\cos (\delta + 2\psi)}
\end{align}
where $\alpha$ is the step-size to be determined.

Now note from left image of Figure \ref{img:proof} that we have (see Equation \eqref{eq:eq22})
\begin{align*}
    \cos \psi = 1-\epsilon \implies \sin \psi = \sqrt{2\epsilon-\epsilon^2} \implies \cos^{-1}(1-\epsilon) = \sin^{-1}(\sqrt{2\epsilon-\epsilon^2}) \approx \sqrt{2\epsilon - \epsilon^2} \approx\sqrt{2\epsilon}
\end{align*}
for small angles and therefore $\sin 2\psi \approx 2\psi = 2\cos^{-1}(1-\epsilon)\approx 2\sqrt{2\epsilon}$
and $\cos  (\delta + 2\psi) \approx \cos \delta \geq \sqrt{\frac{80}{81}} $ (see Equation \eqref{eq:eq23}).
Therefore from Equation \eqref{eq:eq24} we have
\begin{align*}
    \alpha \lessapprox \frac{2\sqrt{2\epsilon}}{ \sqrt{\frac{80}{81}}} =c_2\sqrt{\epsilon} =\beta \frac{1}{n^{\frac{1}{4}}},
\end{align*}
and hence the claim is proved.
\end{proof}

\paragraph{Finishing the proof}\label{par:finishing_the_proof}
Let us now define the random variable $Z$ in Equation \eqref{eq:eq8} as the fraction of times an arm in the $\epsilon$-optimal set has been played, that is $Z \triangleq \frac{\mathrm{N}_{\epsilon, n}(\theta)}{n}$. Then , we have 
\begin{align*}
    \mathbb{E}_\theta[Z] \geq \frac{n - \frac{c\sqrt{n}}{\epsilon}}{n} = \frac{0.99n}{n} = 0.99,
\end{align*}
where we have used the definition of $Z$, the expression for the lower bound on $\mathbb{E}_\theta[\mathrm{N}_{\epsilon, n}(\theta)]$ and the value of $\epsilon$.
On the other hand, because of the disjoint $\epsilon$-optimal sets of $\theta$ and $\theta'$ by construction, we have
\begin{align*}
    \mathbb{E}_{\theta'}[Z] = \frac{ \mathbb{E}_{\theta'}[\mathrm{N}_{\epsilon, n}(\theta)]}{n} = \frac{ n - \mathbb{E}_{\theta'}[\mathrm{N}_{\epsilon, n}(\theta')]}{n}\leq \frac{\frac{c\sqrt{n}}{\epsilon}}{n} =\frac{0.01n}{n} = 0.01
\end{align*}
where we have used the regret property of the algorithm of being at most $c\sqrt{n}$ for any bandit parameter satisfying $\norm{\theta} \leq 2$ by having $n$ large enough such that $\norm{\theta'} = \norm{\theta + \alpha u_d} \leq 2$   .
Thus using Equations \eqref{eq:eq8} and \eqref{eq:eq10} and using the estimates of $\mathbb{E}_{\theta'}[Z]$ and $\mathbb{E}_\theta[Z]$ we have
\begin{align*}
    \frac{\alpha^2}{2} \lambda_d \geq \mathrm{KL}(\mathrm{Ber}(\mathbb{E}_{\theta}[Z]) || \mathrm{Ber}(\mathbb{E}_{\theta'}[Z])) \geq 2(\mathbb{E}_{\theta}[Z] - \mathbb{E}_{\theta'}[Z])^2 \geq 2 (0.99-0.01)^2 = c_4
\end{align*}
where for the second inequality we have used Pinsker's Inequality. Finally using Lemma \ref{lemma:step} to find $\alpha$ which also guarantees our estimates of $\mathbb{E}_{\theta'}[Z]$ and $\mathbb{E}_\theta[Z]$, we have
\begin{align*}
    \lambda_d(\bar{G}_n)) \geq 2 c_4 \frac{1}{\alpha^2} = c_5 \frac{1}{(\frac{\beta}{n^\frac{1}{4}})^2}=c_6\sqrt{n}
\end{align*} 
This completes the proof with the positive constant $c_6$ being redefined as $\gamma$.
\end{proof}

\begin{remark}[Dependence on dimension $d$]
Though our result does not explicitly depend on the dimension $d$, it can depend on it through the regret upper bound of underlying linear bandit algorithm. Hypothetically, if there exists an algorithm which does not depend upon $d$, our result shows that the growth of minimum eigenvalue of design matrix is also independent of $d$. That being said, the dimensional dependency enters through the constant $c$ and is different for different algorithms. For example, for the OFUL algorithm \citep{oful}, we have $c = O(d)$. Similarly, for Thompson Sampling algorithm \citep{abeille2017linear}, we get $c = O(d^{3/2})$. 

We can show that $n_0 = \Omega(c^2)$ and $\gamma = \Omega(\frac{1}{c})$. Let us compute this for the spherical action set. From Equation \eqref{eq:crux}, we have 
\begin{equation*}
    \lambda_{\max}(H) \leq 2\sqrt{2}\sqrt{\frac{c}{0.01}}n^{\frac{3}{4}} + 4\times 0.01\times n.
\end{equation*}
Now, for a large but finite $n$, we have the right hand side to be less than $0.1n$. This finite $n$ is the $n_0$ as defined in Thm 2.3. An easy calculation shows that $n_0 = \Omega(c^2)$. Similarly, $\gamma$ can be computed as follows. Note that $\lambda_d \geq \frac{3.8416}{\alpha^2}= \frac{3.8416}{\beta^2} \sqrt{n}$, where $\beta = \frac{2\sqrt{\frac{2c}{0.01}}}{\sqrt{\frac{80}{81}}}$and $\gamma= \frac{3.8416}{\beta^2}$. This implies $\gamma=\Omega(\frac{1}{c})$.
    
\end{remark}

\subsection{Ellipsoidal Action-Set}
\label{subsec:Ellipsoidal}
\begin{theorem}
\label{theorem:ellipsod}
Let the set of arms be $\cX = := \left\{x \in \mathbb{R}^d: x^\top A^{-1}x = 1\right\}$, the surface of the $d$ dimensional ellipsoid ($A$ is symmetric and positive definite). Let $\bar{G}_n = \mathbb{E}_\theta\left[\sum_{s=1}^n A_sA_s^\top\right]$, where $\theta$ is a bandit parameter and $A_s$ are arms in $\cX$ drawn according to some bandit algorithm.

For any bandit algorithm which suffers expected regret, $R_n(\theta)$, at most $O(\sqrt{n})$, 
\begin{align*}
 \lambda_{min}(\bar{G}_n) = \Omega(\sqrt{n})\;.   
\end{align*}
That is there exists constants $\gamma > 0$ and a finite time $n_0$ such that for all $n \geq n_0$, we have $\lambda_{min}(\bar{G}_n) \geq \gamma\sqrt{n}$.
\end{theorem}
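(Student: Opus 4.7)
The plan is to reduce the ellipsoidal case to the spherical case already handled in Theorem~\ref{theorem:sphere} by a whitening change of variables. Define the map $\Phi : \mathbb{R}^d \to \mathbb{R}^d$ by $\Phi(x) = A^{-1/2} x$, and set $y_s \triangleq \Phi(A_s) = A^{-1/2} A_s$. Since $A_s^\top A^{-1} A_s = 1$, we have $\|y_s\| = 1$, so $y_s \in \mathcal{S}^{d-1}$. The reward observed by the learner can be rewritten as
\begin{align*}
Y_s = \langle A_s, \theta \rangle + \eta_s = \langle A^{1/2} y_s, \theta \rangle + \eta_s = \langle y_s, \tilde{\theta} \rangle + \eta_s,
\end{align*}
where $\tilde{\theta} \triangleq A^{1/2} \theta$. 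Hence any algorithm $\pi$ acting on the ellipsoid $\mathcal{X}$ with unknown parameter $\theta$ induces, via $\Phi$, an algorithm $\tilde{\pi}$ acting on $\mathcal{S}^{d-1}$ with unknown parameter $\tilde{\theta}$, producing exactly the same rewards and observing an isomorphic history.

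Next I would verify that the regret hypothesis carries over. The instantaneous optimal rewards are equal: $\max_{x \in \mathcal{X}} \langle x, \theta \rangle = \max_{y \in \mathcal{S}^{d-1}} \langle y, \tilde{\theta}\rangle$, and analogously for the played reward. Thus the expected regret of $\tilde{\pi}$ at parameter $\tilde{\theta}$ is identical to that of $\pi$ at $\theta$, so it is $O(\sqrt{n})$. Since $\|\tilde{\theta}\| \le \sqrt{\lambda_{\max}(A)}\,\|\theta\|$ is bounded by a constant, $\tilde{\pi}$ satisfies the hypothesis of Theorem~\ref{theorem:sphere}. Applying that theorem to the induced spherical problem yields
\begin{align*}
\bar{G}_n^{(y)} \triangleq \mathbb{E}_{\tilde{\theta}}\!\left[\sum_{s=1}^n y_s y_s^\top\right] = A^{-1/2}\,\bar{G}_n\,A^{-1/2}, \qquad \lambda_{\min}\bigl(\bar{G}_n^{(y)}\bigr) \ge \tilde{\gamma}\sqrt{n}
\end{align*}
for all $n \ge \tilde{n}_0$, where $\tilde{\gamma}, \tilde{n}_0$ are the spherical-case constants evaluated at parameter $\tilde{\theta}$.

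Finally I would transform back. From $\bar{G}_n = A^{1/2}\,\bar{G}_n^{(y)}\,A^{1/2}$, for any unit vector $v$, setting $w = A^{1/2} v$,
\begin{align*}
v^\top \bar{G}_n v = w^\top \bar{G}_n^{(y)} w \ge \lambda_{\min}\bigl(\bar{G}_n^{(y)}\bigr)\,\|w\|^2 \ge \lambda_{\min}(A)\,\lambda_{\min}\bigl(\bar{G}_n^{(y)}\bigr),
\end{align*}
so $\lambda_{\min}(\bar{G}_n) \ge \lambda_{\min}(A)\,\tilde{\gamma}\,\sqrt{n} = \Omega(\sqrt{n})$, which gives the claim with $\gamma = \lambda_{\min}(A)\,\tilde{\gamma}$ and $n_0 = \tilde{n}_0$, both depending on the conditioning of $A$ and on $\|\theta\|$ through $\|\tilde{\theta}\|$.

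The only mild subtlety is bookkeeping the constants: $\tilde{\gamma}$ depends on the effective parameter norm $\|\tilde{\theta}\|$ and on the regret constant $c$ of $\tilde{\pi}$ (which equals that of $\pi$), so the final $\gamma$ scales like $\lambda_{\min}(A)/\mathrm{poly}(\sqrt{\lambda_{\max}(A)}\,\|\theta\|)$, i.e., it degrades with the condition number of $A$. Everything else is a direct transport of Theorem~\ref{theorem:sphere}.
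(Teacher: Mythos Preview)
Your proposal is correct and follows essentially the same whitening approach as the paper: transform $x \mapsto A^{-1/2}x$ to reduce to the sphere, identify $\bar G_n = A^{1/2}\bar G_n^{(y)}A^{1/2}$, and transfer the eigenvalue bound back via $\lambda_{\min}(\bar G_n) \ge \lambda_{\min}(A)\,\lambda_{\min}(\bar G_n^{(y)})$. The only cosmetic difference is that you invoke Theorem~\ref{theorem:sphere} as a black box on the induced spherical algorithm, whereas the paper re-opens the argument and applies Lemma~\ref{lemma:step} and the Garivier--Kaufmann inequality directly in the transformed coordinates; your packaging is slightly cleaner but the underlying content is identical.
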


\begin{remark}
The constants $n_0$ and $\gamma$ depend upon the algorithm constants hidden by $O()$, the condition number of $A$ and the size of the bandit parameter $\theta$.    
\end{remark}

\begin{proof}
\label{proof:ellipsoid}
Let $\mathcal{X} = \{x\in \mathbb{R}^d : x^\top A^{-1} x =1\}$ be an ellipsoidal action set where $A$ is symmetric positive definite and $\theta$ be the bandit parameter. 

Let us make a change of variable $y = A^{-1/2}x$ for all $x \in \cX$. Thus we have $\norm{y}^2 = 1$. We define this new transformed action space $\cY$ and note that $\cY$ is the unit sphere.

Before we proceed further we make the following observations :
\begin{align}
\label{eq:El0}
\mathrm{OPT}_{\cX}(\theta) = \arg\max_{x \in \cX}\langle x,  \theta \rangle = \arg\max_{x \in \cX}\langle A^{-1/2}x,  A^{1/2}\theta \rangle 
\end{align}
\begin{align*}
   = A^{1/2}\arg\max_{y \in A^{-1/2}\cX}\langle y,A^{1/2}\theta\rangle
   = A^{1/2}\arg\max_{y \in \cY}\langle y,A^{1/2}\theta\rangle = A^{1/2}\mathrm{OPT}_{\cY}(A^{1/2}\theta).
\end{align*}
Similarly we have 
\begin{align}
\label{eq:El1}
    \mathrm{OPT}_{\epsilon, \cX}(\theta) = A^{1/2}\mathrm{OPT}_{\epsilon, \cY}(A^{1/2}\theta)
\end{align}

Thus let us consider the following linear bandit problem where the action set is $\cY = \cS^{d-1}$, and the bandit parameter is $A^{1/2}\theta$. Let the actions sampled be $\{y_i\}_{i=1}^n$ by any bandit algorithm with regret at most $O(\sqrt{n})$. Let the design matrix be $V_n = \expect{\sum_{i=1}^n y_iy_i^\top}$, and the corresponding eigenvector decomposition be $\sum_{i=1}^d\lambda_i u_iu_i^\top$.

From the previous section (see Lemma \ref{lemma:step}) we know,
\begin{align}
    \mathrm{OPT}_{\epsilon, \cY}\bigg(\frac{A^{1/2}\theta}{\norm{A^{1/2}\theta}}\bigg) \cap \mathrm{OPT}_{\epsilon, \cY}\bigg(\frac{A^{1/2}\theta}{\norm{A^{1/2}\theta}} + \alpha u_d\bigg) = \emptyset\;,
\end{align} for $\alpha = O(1/n^{1/4})$.
Thus multiplying the two disconnected sets by $\norm{A^{1/2}\theta}$ is still going to keep them disconnected and hence,
\begin{align}
    \mathrm{OPT}_{\epsilon, \cY}(A^{1/2}\theta) \cap \mathrm{OPT}_{\epsilon, \cY}\Big(A^{1/2}\theta + \alpha\norm{A^{1/2}\theta} u_d\Big) = \emptyset.
\end{align}
This implies, 
\begin{align}
    \mathrm{OPT}_{\epsilon, \cY}(A^{1/2}\theta) \cap \mathrm{OPT}_{\epsilon, \cY}\Big(A^{1/2}(\theta + \alpha\norm{A^{1/2}\theta}A^{-1/2} u_d)\Big) = \emptyset
\end{align}
and thus from the change of variable relation,(See equation \eqref{eq:El1}), we have
\begin{align}
    A^{-1/2}\mathrm{OPT}_{\epsilon, \cX}(\theta) \cap A^{-1/2}\mathrm{OPT}_{\epsilon, \cX}(\theta + \alpha\norm{A^{1/2}\theta}A^{-1/2} u_d) = \emptyset.
\end{align}
Now because continuous functions preserve disjoint sets, we have
\begin{align}
    \mathrm{OPT}_{\epsilon, \cX}(\theta) \cap \mathrm{OPT}_{\epsilon, \cX}(\theta + \alpha\norm{A^{1/2}\theta}A^{-1/2} u_d) = \emptyset.
\end{align}
Now let us define the design matrix $\bar{G}_n$ for the action set $\cX$ as $\bar{G}_n = \expect{\sum_{i=1}^n x_i x_i^\top}$.
With the change of variables we have
\begin{align}
\label{eq:El2}
    \bar{G}_n = \expect{\sum_{i=1}^n x_i x_i^\top}
    = A^{1/2}\expect{\sum_{i=1}^n y_i y_i^\top} A^{1/2} = A^{1/2}\sum_{i=1}^d\lambda_iu_iu_i^\top A^{1/2}\;,
\end{align}
where $u_i$ are the eigen-vectors as defined for the action space $\cY$.
Now defining the perturbed $\theta' = \theta + \alpha\norm{A^{1/2}\theta}A^{-1/2} u_d$, we have 
\begin{align}
    \|\theta - \theta'\|^2_{\bar{G}_n} = 
    \alpha^2\norm{A^{1/2}\theta}^2u_d^\top A^{-1/2} \Big(A^{1/2}\sum_{i=1}^d\lambda_iu_i u_i^T A^{1/2}\Big) A^{-1/2}u_d = \alpha^2\norm{A^{1/2}\theta}^2\lambda_d
\end{align}
Now, from the methodology discussed in the previous section using the Kauffman measure change inequality \ref{lemma:Garivier_Kauffmann}( see the paragraph \ref{par:finishing_the_proof}), we have 
\begin{align}
    \lambda_d \geq \frac{c}{\alpha^2\norm{A^{1/2}\theta}^2}=\frac{c\sqrt{n}}{\norm{A^{1/2}\theta}^2}\;,
\end{align} for some positive $c$.
To conclude the proof we need to find a relation between the eigenvalues lowest eigenvalue $\mu_d$ of $\bar{G}_n$ and the lowest eigenvalue $\lambda_d$ of $V_n$. For this note, from equation \eqref{eq:El2} 
\begin{align}
\label{eq:El3}
    \bar{G}_n = A^{1/2}V_n A^{1/2} = A^{1/2}U\Lambda U^\top A^{1/2}.
\end{align}
From spectral theory we have \begin{align}
    \mu_d(\bar{G}_n) = \frac{1}{\norm{\bar{G}_n^{-1}}}.
\end{align}
Now from equation \eqref{eq:El3}, and definition of matrix norms and orthogonality of eigenctors $U$, we have
\begin{align}
    \norm{G_n^{-1}} \leq \norm{A^{-1/2}}^2\norm{\Lambda^{-1}} ,
\end{align} and therefore,
\begin{align}
    \mu_d(\bar{G}_n) \geq \frac{1}{\norm{A^{-1/2}}^2\norm{\Lambda^{-1}}}=\frac{\lambda_d}{\norm{A^{-1/2}}^2} \geq \frac{c\sqrt{n}}{\norm{A^{-1/2}}^2\norm{A^{1/2}\theta}^2} \geq \frac{c\sqrt{n}}{\norm{A^{-1/2}}^2\norm{A^{1/2}}^2\norm{\theta}^2}.
\end{align}
This concludes the proof.
\end{proof}

In the next subsection we illustrate that the same proof for ellipsoids can be done through first principles. 

\subsubsection{Ellipsoidal Sets : Proof by first principles}
\label{subsubsec:Ellipoidal}
In this section we highlight the main ideas of the proof and leave the gaps as an exercise. The proof follows verbatim from the proof for the spherical case (see subsection \ref{subsec:sphere}).

\begin{remark}
We present this alternative proof to highlight the main takeaway of the proof technique, namely to construct an alternative bandit parameter $\theta'$, for which the actions played by the algorithm would significantly differ from the original bandit parameter and yet for both parameters, the algorithm plays optimally.
\end{remark}
\vspace{ 2mm}
\begin{remark}
The idea of this proof is also to highlight that what really is needed is information about neighbourhood of the optimal arm, and the global action space does not influence the exploration strategy of good regret algorithms. This would be useful for the proof of the LCH surfaces as given in the next Section~\ref{subsec:LCH}. 
\end{remark}
\begin{proof}
We fix a $\theta$ and start with the Garivier-Kauffman inequality 
\begin{align}
\label{eq:ellips_eq8}
\frac{1}{2}\|\theta - \theta'\|^2_{\bar{G}_n} \geq \mathrm{KL}(\mathrm{Ber}(\mathbb{E}_{\theta}[Z]) || \mathrm{Ber}(\mathbb{E}_{\theta'}[Z]))\; .    
\end{align}
and do an eigen-decomposition of $\bar{G}_n$ as  
\begin{align}
\label{eq:ellips_eq9}
\bar{G}_n = \sum_{i=1}^d \lambda_i u_i u_i^\top\;,  \end{align}
where $\lambda_1, \cdots, \lambda_d$ and $u_1, \cdots, u_d$ have the usual meaning.
The crux of the proof remains that we design the perturbed $\theta'$ as before, namely as $\theta' = \theta + \alpha u_d$ for some $\alpha$ to be computed such that $\mathrm{OPT}_\epsilon(\theta)$ and $\mathrm{OPT}_\epsilon(\theta')$ are disjoint. With this choice of $\theta'$ we have from the measure change inequality as,
\begin{align}
\label{eq:ellipse_eq10}
  \|\theta - \theta'\|^2_{\bar{G}_n} = \alpha^2 \lambda_d\;.  
\end{align}

As before, we have from the definitions of $\mathrm{OPT}_\epsilon(\theta)$ and $\mathrm{N}_{\epsilon, n}(\theta)$, 
\begin{gather}
\label{eq:estimate}
    \mathbb{E}_{\theta'}[\mathrm{N}_{\epsilon, n}(\theta')] \geq n - \frac{c\sqrt{n}}{\epsilon}\;,
\end{gather}
for all $\theta'$ satisfying $\norm{\theta'} \leq 2\norm{\theta}$ such that $\expect{R_n(\theta')} \leq c\sqrt{n}$, for some $c > 0$.
For ellipse we have the $\mathrm{OPT}(\theta) = \frac{A\theta}{\norm{\theta}_A}$, which as before we denote as $v$.

\paragraph{Showing that $v$ and $u_d$ are roughly orthogonal to each other}
As before we decompose $\bar{G}_n$ as 
\begin{align}
    \bar{G}_n = nvv^\top+ \bar{G}_n - nvv^\top
\end{align}
and note from Weyl's Lemma and definition of $\lambda_i$, that
\begin{align}
    \lambda_i(\bar{G}_n) \leq  \norm{\bar{G}_n - nvv^\top},
\end{align}
for $i \in \{2,3,\cdots, d\}$.
Now decomposing $\norm{\bar{G}_n - nvv^\top}$ into the two sets, one containing arms belonging to the $\mathrm{OPT}_\epsilon(\theta)$ set and another where arms do not belong to $\mathrm{OPT}_\epsilon(\theta)$, and further using a generic upper bound of $\sqrt{\lambda_{\max} (A)}$ for the size of all arms in $\cX$, we have, 
\begin{align}
    \norm{\bar{G}_n - nvv^\top} \leq \sum_{s: A_s \in \mathrm{OPT}_\epsilon(\theta) }\sqrt{\lambda_{\max} (A)}\expect{\norm{A_s - v}} + \sum_{s: A_s \notin \mathrm{OPT}_\epsilon(\theta) }\sqrt{\lambda_{\max} (A)}\expect{\norm{(A_s - v)}}
\end{align}
(See previous section of the spherical set section \ref{subsec:sphere} for details of this decomposition).
For $A_s \in \mathrm{OPT}_\epsilon(\theta)$, we need to find an upper bound for $\norm{A_s - v}$. Now recall the change of variable formula (Equations \ref{eq:El0} and \ref{eq:El1}) from the last section, namely 
\begin{align*}
\mathrm{OPT}_{\cX}(\theta) =  A^{1/2}\mathrm{OPT}_{\cY}(A^{1/2}\theta)
\end{align*}
and
\begin{align*}
    \mathrm{OPT}_{\epsilon, \cX}(\theta) = A^{1/2}\mathrm{OPT}_{\epsilon, \cY}(A^{1/2}\theta)\;,
\end{align*}
where $\cX$ and $\cY$ are the ellipse and unit sphere respectively.

Thus for $A_s \in \mathrm{OPT}_{\epsilon,\cX}(\theta)$, there exists a $x \in \mathrm{OPT}_{\epsilon,\cY}(A^{1/2}\theta)$, such that $A_s = A^{1/2}x$. Therefore, for any $A_s \in \mathrm{OPT}_{\epsilon,\cX}(\theta)$
\begin{align}
    \norm{A_s - v} = \norm{A^{1/2}x - A^{1/2}\mathrm{OPT}_{\cY}(A^{1/2}\theta)} 
\end{align} for some $x \in \mathrm{OPT}_{\epsilon,\cY}(A^{1/2}\theta) $.
Now from the section on the sphere it follows, that for any $x \in \mathrm{OPT}_{\epsilon,\cY}(A^{1/2}\theta)$, we have
\begin{align}
    \norm{x - \mathrm{OPT}_{\cY}(A^{1/2}\theta)} \leq \sqrt{\frac{2\epsilon}{\norm{A^{1/2}\theta}}}
\end{align}
(see the section  on the sphere for the full detail \ref{subsec:sphere}). Therefore for any $A_s \in \mathrm{OPT}_{\epsilon,\cX}(\theta) $ we have
\begin{align*}
    \norm{A_s - v} \leq \norm{A^{1/2}}\sqrt{\frac{2\epsilon}{\norm{A^{1/2}\theta}}}\;.
\end{align*}
Thus we have
\begin{align*}
    \norm{\bar{G}_n - nvv^\top} \leq \sqrt{\lambda_{\max} (A)}\norm{A^{1/2}}\sqrt{\frac{2\epsilon}{\norm{A^{1/2}\theta}}}\mathbb{E}_{\theta}[\mathrm{N}_{\epsilon, n}(\theta)] + \lambda_{\max}(A)\mathbb{E}_{\theta}[n - \mathrm{N}_{\epsilon, n}(\theta)]\;.
\end{align*}
Now using the estimates of $\mathbb{E}_\theta[\mathrm{N}_{\epsilon, n}(\theta)]$ (Equation \ref{eq:estimate}) we have 
\begin{align}
    \norm{\bar{G}_n - nvv^\top} \leq \sqrt{\lambda_{\max} (A)}\norm{A^{1/2}}\sqrt{\frac{2\epsilon}{\norm{A^{1/2}\theta}}}n + \lambda_{\max}(A)\frac{c\sqrt{n}}{\epsilon}\;.
\end{align}
The rest of the proof remains the same as in the spherical case (see subsection \ref{subsec:sphere}), but now $n_0$ would depend upon the matrix $A$ as well.
\paragraph{To show that the sets $\mathrm{OPT}_{\epsilon}(\theta)$ and $\mathrm{OPT}_\epsilon(\theta + O(1/n^{1/4})u_d)$ are disjoint.} 

Let us first rewrite what we need to show. Namely we want to find an $\alpha$ such that 
\begin{align}
    \mathrm{OPT}_{\epsilon,\cX}(\theta) \cap \mathrm{OPT}_{\epsilon, \cX}(\theta + \alpha u_d) = \emptyset\;.
\end{align}
Now using the change of variables (Equations \eqref{eq:El0} and \eqref{eq:El1}) this is equivalent to showing
\begin{align}
    A^{1/2}\mathrm{OPT}_{\epsilon,\cY}(A^{1/2}\theta) \cap A^{1/2}\mathrm{OPT}_{\epsilon, \cY}(A^{1/2}(\theta + \alpha u_d)) = \emptyset\;,
\end{align}
where $\cY$ is the sphere.
By the bijection of $A^{1/2}$, the above is equivalent to showing
\begin{align}
    \mathrm{OPT}_{\epsilon,\cY}(A^{1/2}\theta) \cap \mathrm{OPT}_{\epsilon, \cY}(A^{1/2}\theta + \alpha A^{1/2}u_d) = \emptyset\;.
\end{align}
Now we know $\langle v, u_d \rangle \approx 0$ where $v$ is the optimal arm for the ellipsoid (see the previous paragraph). But for the ellipsoid we know $v = \frac{A\theta}{\norm{\theta}_A}$. Thus $\langle \frac{A\theta}{\norm{\theta}_A}, u_d \rangle \approx 0$. This implies $\langle A^{1/2}\theta, A^{1/2}u_d\rangle \approx 0$. Thus now we can use the result for the spherical action set (Lemma \ref{lemma:step}), namely,
\begin{align}
    \mathrm{OPT}_{\epsilon,\cY}\Big(\frac{A^{1/2}\theta}{\norm{A^{1/2}\theta}}\Big) \cap \mathrm{OPT}_{\epsilon, \cY}\Big(\frac{A^{1/2}\theta}{\norm{A^{1/2}\theta}} + \alpha' \frac{A^{1/2}u_d}{\norm{A^{1/2}u_d}}\Big) = \emptyset
\end{align}
where $\alpha' = O(\frac{1}{n^{1/4}})$. Thus multiplying by $\norm{A^{1/2}\theta}$, we have
\begin{align}
    \mathrm{OPT}_{\epsilon,\cY}\Big(A^{1/2}\theta\Big) \cap \mathrm{OPT}_{\epsilon, \cY}\Big(A^{1/2}\theta + \alpha' \frac{{\norm{A^{1/2}\theta}}}{\norm{A^{1/2}u_d}}A^{1/2}u_d\Big) = \emptyset\;.
\end{align}
Thus the required $\alpha = \alpha' \frac{{\norm{A^{1/2}\theta}}}{\norm{A^{1/2}u_d}}$.

\paragraph{Finishing the proof}
The rest of the proof follows as the ones shown in the previous two sections (paragraph \ref{par:finishing_the_proof}). By ensuring that for a large enough $n$, such that $\norm{\theta'} \leq 2\norm{\theta}$, and using the implication of the dijointedness of the $\epsilon$-optimal sets for $\theta$ and $\theta'$ in the Garivier-Kauffman inequality, we get
\begin{align}
\lambda_d \geq \frac{c_1}{\alpha^2} = \frac{c_1\sqrt{n}{\norm{A^{1/2}u_d}}^2}{\norm{A^{1/2}\theta}^2}  \geq \frac{c_1\sqrt{n}\lambda_{\min}(A)}{\norm{A^{1/2}\theta}^2} = \frac{c_1\sqrt{n}}{\norm{A^{-1/2}}^2\norm{A^{1/2}\theta}^2}\;,
\end{align}
for some positive constant $c_1$. This completes the proof.
\end{proof}

\paragraph{Non centred ellipsoids.}Even though we are considering centred ellipsoids, our proofs readily extend to non-centred ellipsoids.
Consider $\cX = \{x : (x-a)^\top M^{-1} (x-a)\}$ a non-centred ellipsoid. Let $y= x-a$, Then for all such $x$ in $\cX$, $y$ satisfies the equation of the following centred ellipsoid $\cY = \{y : y^\top M^{-1} y = 1\}$. Now with the usual definitions of $\mathrm{OPT}_{\cX}(\theta)$, $\mathrm{OPT}_{\cY}(\theta)$, $\mathrm{OPT}_{\epsilon, \cX}(\theta)$ and $\mathrm{OPT}_{\epsilon, \cY}(\theta)$, observe that the following relations hold true
\begin{gather*}
    \mathrm{OPT}_{\cX}(\theta) = \mathrm{OPT}_{\cY}(\theta) + a \\
    \mathrm{OPT}_{\epsilon, \cX}(\theta) = \mathrm{OPT}_{\epsilon, \cY}(\theta) + a\;.
\end{gather*}
With these relations and using arguments similar to this section (subsection \ref{subsec:Ellipsoidal}), we have our result readily.

\subsection{Locally Constant Hessian (LCH) Action Spaces}
\label{subsec:LCH}
In this section we prove Theorem~\ref{thm:main-result}. We begin by recalling our definition of LCH surfaces (Definition~\ref{def:action}) and Theorem~\ref{thm:main-result}:
\begin{definition}[Locally Constant Hessian (LCH) surface]
\label{def:app-action}
Consider the action space defined by $\mathcal{X} = \{x \in \mathbb{R}^d : f(x) = c\}$, where $f : \mathbb{R}^d \to \mathbb{R}$ is a $C^2$ function (i.e., all second-order partial derivatives of $f$ exist and are continuous) and $c \in \mathbb{R}$. Let $\theta \in \mathbb{R}^d$. $\mathcal{X}$ is said to be a LCH surface w.r.t. $\theta$ if: (i) there is a unique reward-optimal arm with respect to $\theta$ (denoted by  $\mathrm{OPT}_\cX(\theta) = \arg\max_{x \in \cX}\langle x,\theta\rangle$), and (ii) there is an open neighborhood $U \subset \mathbb{R}^d$ of $\mathrm{OPT}_{\cX}(\theta)$ over which the Hessian of $f$ is constant and positive-definite. 
\end{definition}
We are now ready to prove our main result.

\begin{theorem}
\label{thm:app-main-result}
Let the action-space $\cX$ be a Locally Constant Hessian(LCH) surface in $\mathbb{R}^{d-1}$ w.r.t. a bandit parameter $\theta$. Let $\bar{G}_n = \mathbb{E}_\theta\left[\sum_{s=1}^n A_sA_s^\top\right]$, where $A_s$ are arms in $\cX$ drawn according to some bandit algorithm. For any bandit algorithm which suffers expected regret at most $O(\sqrt{n})$, 
\begin{align*}
 \lambda_{\min}(\bar{G}_n) = \Omega(\sqrt{n})\;.   
\end{align*}
That is, there exists an $n_0$ and a constant $\gamma >0$, such that for all $n \geq n_0$, $\lambda_{\min}(\bar{G}_n) \geq \gamma\sqrt{n}$.
\end{theorem}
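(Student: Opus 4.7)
The plan is to reduce the LCH case to the already-proven ellipsoidal case by exploiting the fact that, in the neighborhood $U$ where the Hessian of $f$ is constant and positive definite, the surface $\mathcal{X}$ coincides exactly with a quadric (hence locally with an ellipsoid). Concretely, let $H = \nabla^2 f$ on $U$ and $x^* = \mathrm{OPT}_\mathcal{X}(\theta)$. By Taylor's theorem with the remainder evaluated at a point in $U$, for every $x \in U$
\begin{equation*}
    f(x) = f(x^*) + \nabla f(x^*)^\top(x-x^*) + \tfrac{1}{2}(x-x^*)^\top H (x-x^*),
\end{equation*}
so the intersection $\mathcal{X} \cap U$ is exactly the quadric $\{x \in U : \nabla f(x^*)^\top (x-x^*) + \tfrac{1}{2}(x-x^*)^\top H (x-x^*) = 0\}$. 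Because $H \succ 0$, this quadric (after translating the tangent-plane term into a shift of center) is a piece of an ellipsoid $\mathcal{E}$ whose shape matrix is determined by $H$ and $\nabla f(x^*)$.

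Next, I would mirror the two geometric estimates that were the heart of the ellipsoidal proof, localized to $U$. First, fix $\epsilon = c/\sqrt{n}$ as before, and pick $n_0$ large enough so that $\mathrm{OPT}_\epsilon(\theta) \subset U$ (this is possible because $\mathrm{OPT}_\epsilon(\theta)$ shrinks to $\{x^*\}$ as $\epsilon \to 0$, and $U$ is an open neighborhood of $x^*$). On this local region, $\mathcal{X}$ is a piece of the ellipsoid $\mathcal{E}$, so the ellipsoidal bound $\|A_s - v\| \leq O(\sqrt{\epsilon}) \cdot \kappa(H,\theta)$ carries over for every $A_s \in \mathrm{OPT}_\epsilon(\theta)$, where the constant depends on the condition number of $H$ and on $\|\theta\|$. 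Combined with the regret-driven estimate $\mathbb{E}[n - \mathrm{N}_{\epsilon,n}(\theta)] \leq c\sqrt{n}/\epsilon$ (which is algorithm-dependent and oblivious to the action-space geometry) and the trivial bound $\|A_s - v\| \leq \mathrm{diam}(\mathcal{X})$ off the optimal set, the decomposition $\|\bar{G}_n - nvv^\top\| \leq O(\sqrt{\epsilon}\,n + \sqrt{n}/\epsilon)$ goes through as in Section~\ref{subsec:Ellipsoidal}, yielding $\|\bar{G}_n - nvv^\top\| \leq 0.1 n$ once $n$ exceeds an appropriate threshold. By Weyl's inequality this caps $\lambda_2(\bar{G}_n),\ldots,\lambda_d(\bar{G}_n)$ by $0.1n$, and by Davis--Kahan (Lemma~\ref{lemma:Davis_Kahan}) the eigenvector $u_d$ is approximately orthogonal to $v = x^*/\|x^*\|$ (scaled appropriately via $H$).

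For the second ingredient, the perturbation step, I would choose $\theta' = \theta + \alpha u_d$ and require $\mathrm{OPT}_\epsilon(\theta) \cap \mathrm{OPT}_\epsilon(\theta') = \emptyset$. After possibly shrinking $\epsilon$ further, both $\mathrm{OPT}_\epsilon(\theta)$ and $\mathrm{OPT}_\epsilon(\theta')$ sit inside $U$ and hence on the ellipsoid $\mathcal{E}$, so computing the required $\alpha$ reduces verbatim to the computation performed for ellipsoids via the change-of-variables $y = H^{1/2}(x-x^*)$ (mapping the local quadric to a sphere). This yields $\alpha = O(n^{-1/4})$ with a constant depending on $H$ and $\|\theta\|$. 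Plugging this $\alpha$ back into the Garivier--Kaufmann inequality $\tfrac{1}{2}\alpha^2 \lambda_d(\bar{G}_n) \geq \mathrm{KL}(\mathrm{Ber}(\mathbb{E}_\theta[Z])\|\mathrm{Ber}(\mathbb{E}_{\theta'}[Z]))$ with $Z = \mathrm{N}_{\epsilon,n}(\theta)/n$ gives $\lambda_{\min}(\bar{G}_n) = \Omega(\sqrt{n})$.

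The main obstacle I anticipate is bookkeeping the interplay between the size of the neighborhood $U$ and the decay of $\epsilon$: I must ensure that $n_0$ is large enough so that not only $\mathrm{OPT}_\epsilon(\theta) \subset U$ but also $\mathrm{OPT}_\epsilon(\theta') \subset U$ after the $O(n^{-1/4})$ perturbation, and that the chord-length and angle estimates inherited from the local ellipsoid $\mathcal{E}$ are genuinely attained on $\mathcal{X}$ and not on $\mathcal{E} \setminus \mathcal{X}$. A secondary subtlety is that arms played outside $U$ are not controlled geometrically at all, so it is essential that the regret bound alone suffices to force only $\tilde O(\sqrt{n}/\epsilon) = \tilde O(n)$ such arms in expectation with the right constant --- this is exactly why the coarse bound $\|A_s - v\| \leq \mathrm{diam}(\mathcal{X})$ is acceptable for the complementary term and why $\mathcal{X}$ is allowed to be arbitrary outside $U$.
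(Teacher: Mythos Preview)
Your proposal is correct and follows essentially the same route as the paper: Taylor-expand $f$ in the neighborhood $U$ to identify $\mathcal{X}\cap U$ with a piece of an ellipsoid, then re-run the ellipsoidal argument (Weyl, Davis--Kahan, the $O(n^{-1/4})$ perturbation, and Garivier--Kaufmann) locally, taking $n_0$ large enough that both $\mathrm{OPT}_\epsilon(\theta)$ and $\mathrm{OPT}_\epsilon(\theta')$ sit inside $U$. The only cosmetic slip is that $v$ should be the optimal arm $x^*$ itself (not $x^*/\|x^*\|$); otherwise your anticipated bookkeeping about $U$ versus $\epsilon$ and the coarse diameter bound off $U$ matches the paper's treatment exactly.
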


\begin{remark}
The constant $\gamma$ depends upon the condition number of the Hessian, the algorithmic constants hidden by $O()$, and the size of the bandit parameter $\theta$. The constant $n_0$ depends on the algorithmic constants hidden by $O()$, the size of the neighbourbood over which the Hessian is constant, the size of the action domain $\norm{\cX}$, the singular value of the Hessian and the size of the bandit parameter $\theta$.
\end{remark}
 \vspace{2mm}
 \begin{remark}
 The proof essentially follows the proof of Section~\ref{subsubsec:Ellipoidal}. We present it here for the sake of completeness.
 \end{remark}
\begin{proof}
Consider, the action space given by the curve $\cX = \{x \in \mathbb{R}^d : f(x) = 1\}$. Assume that $\cX$ is upper bounded in norm by a constant represented as $\lambda(\cX)$. Let us define $\mathrm{OPT}_{\cX}(\theta)$ and $\mathrm{OPT}_{\epsilon,\cX}(\theta)$ as before for a bandit parameter $\theta$. Let $U$ be an open-neighbourhood of $\mathrm{OPT}_{\cX}(\theta)$, over which the Hessian of $f$, $\nabla^2 f$, is a constant $H$, and is positive-definite.

We shall show that in this setup the conclusion of $\lambda_{\min}(\bar{G}_n)$ holds true.

As before, we define $\bar{G}_n = \expect{\sum_{s=1}^n A_sA_s^\top}$ where $A_s \in \cX$ and we have a eigenvalue decomposition of $\sum_{i=1}^d \lambda_i u_i u_i^\top$.

We make a Taylor Series approximation of $f$ about $\mathrm{OPT}_{\cX}(\theta)$ (denoted here as $x^*$ for notational ease) in the neighbourhood of $U$.
\begin{align}
    f(x) = f(x^*) +(x-x^*)^\top\nabla f(x^*) + (x-x^*)^\top\nabla^2f(x^*)(x-x^*).
\end{align}
(Note that we have assumed the Hessian is constant in $U$, so there are no third order terms.)
Simplifying the above the expression, we get the following quadratic term,
\begin{align}
    (x-x^*)^\top\nabla^2f(x^*)(x-x^*) + (x-x^*)^\top\nabla f(x^*) = 0\;.
\end{align}
Completing the squares we get the following equation for the ellipsoid, 
\begin{align}
    (x-a)^\top M^{-1} (x-a) = 1
\end{align}
    where $M^{-1} = \frac{H}{4b^\top H^{-1}b}$, $a= x^* - 1/2 H^{-1}b$ and $b= \nabla f(x^*)$, for all $x \in U$.

Thus for all $x \in U \subset \cX$, $x$ satisfies the equation of the ellipse $\cY = \{x : (x-v)^\top M^{-1} (x-v) = 1\}$. The following claims are immediately clear $\mathrm{OPT}_{\cX}(\theta) = \mathrm{OPT}_{\cY}(\theta)$ and for small $\epsilon$ such that $\mathrm{OPT}_{\epsilon, \cX}(\theta) \subset U$, we have $\mathrm{OPT}_{\epsilon, \cX}(\theta) = \mathrm{OPT}_{\epsilon, \cY}(\theta)$.

Let us then choose an $\epsilon$ small enough so that $\mathrm{OPT}_{\epsilon, \cX}(\theta) = \mathrm{OPT}_{\epsilon, \cY}(\theta)$.

As a first step we see that the estimates of $\mathbb{E}_{\theta'}[\mathrm{N}_{\epsilon, n}(\theta')] \geq n - \frac{c\sqrt{n}}{\epsilon}$ remain true, for the entire action space $\cX$, for all $\theta'$ such that $\norm{\theta'} \leq 2\norm{\theta}$ for which $\expect{R_n(\theta')} \leq c\sqrt{n}$ for some positive constant $c$. 

As the next order of business we need to show $\mathrm{OPT}_{\cX}(\theta)$ is approximately orthogonal to $u_d$. For this first we note that $\mathrm{OPT}_{\cX}(\theta) = \mathrm{OPT}_{\cY}(\theta)$. Thus it suffices to show that $\mathrm{OPT}_{\cY}(\theta)$ is approximately orthogonal to $u_d$.

As before denote $\mathrm{OPT}_{\cY}(\theta) = v$ and decompose $\bar{G}_n$ as 
\begin{align}
    \bar{G}_n = nvv^\top+ \bar{G}_n - nvv^\top
\end{align}
and note from Weyl's Lemma and definition of $\lambda_i$, that
\begin{align}
    \lambda_i(\bar{G}_n) \leq  \norm{\bar{G}_n - nvv^\top},
\end{align}
for $i \in \{2,3,\cdots, d\}$.
Now decomposing $\norm{\bar{G}_n - nvv^\top}$ into the two sets, one containing arms belonging to the $\mathrm{OPT}_{\epsilon, \cX}(\theta)$ set and another where arms do not belong to the $\mathrm{OPT}_{\epsilon, \cX}(\theta)$ set and further using the generic upper bound of $\lambda_{\max} (\cX)$ for the size of all arms we have, 
\begin{align}
    \norm{\bar{G}_n - nvv^\top} \leq \sum_{s: A_s \in \mathrm{OPT}_\epsilon(\theta) }\lambda(\cX)\expect{\norm{A_s - v}} + \sum_{s: A_s \notin \mathrm{OPT}_\epsilon(\theta) }\lambda(\cX)\expect{\norm{(A_s - v)}}\;.
\end{align}
Note as before, this can be be upper bounded using the estimates of $\mathbb{E}_{\theta}[\mathrm{N}_{\epsilon, n}(\theta)]$ as 
\begin{align}
   \norm{\bar{G}_n - nvv^\top} \leq \frac{c\sqrt{n}\lambda(\cX)^2}{\epsilon} + n\lambda(\cX)\sup_{A_s \in \mathrm{OPT}_{\epsilon, \cX}(\theta)}\norm{A_s - v}\;.
\end{align}
Now for small $\epsilon$ such that $\mathrm{OPT}_{\epsilon, \cX}(\theta) = \mathrm{OPT}_{\epsilon, \cY}(\theta)$ we have $\sup_{A_s \in \mathrm{OPT}_{\epsilon, \cX}(\theta)}\norm{A_s - v} = \sup_{A_s \in \mathrm{OPT}_{\epsilon, \cY}(\theta)}\norm{A_s - v}$ for which we know $\norm{A_s - v} \leq O(\sqrt{\epsilon})$. Thus by choosing $\epsilon$ as $O(1/\sqrt{n})$ for sufficiently large $n$, such that $\mathrm{OPT}_{\epsilon, \cX}(\theta) = \mathrm{OPT}_{\epsilon, \cY}(\theta)$, we have (by choosing appropriate constant for reference see the section on the sphere)
\begin{align}
    \norm{\bar{G}_n - nvv^\top} \leq 0.1n\;.
\end{align}
Thus we have the separation of eigen-values between $\lambda_1(nvv^\top)$ and $\lambda_2(\bar{G}_n)$ as at least $0.9n$. This gives us from the Davis-Kahan Theorem
\begin{align}
     |v^\top u_d| \leq 1/9
\end{align}
(See the previous sections for details). Thus we have $\mathrm{OPT}_\cX(\theta)$ is approximately orthogonal to $u_d$.

As the next order of business we need to find a perturbed version of $\theta$, namely $\theta' = \theta + \alpha u_d$ such that $\mathrm{OPT}_{\epsilon, \cX}(\theta)$ and $\mathrm{OPT}_{\epsilon, \cX}(\theta')$ are disjoint. First let us choose $\epsilon$ small enough by sufficiently large $n$ such $\mathrm{OPT}_{\epsilon, \cX}(\theta) = \mathrm{OPT}_{\epsilon, \cY}(\theta)$. Now as before for the ellipsoid case we have $\mathrm{OPT}_{\epsilon, \cY}(\theta) \cap \mathrm{OPT}_{\epsilon, \cY}(\theta + O(1/n^{1/4})u_d) = \emptyset $. Now by the continuity of $\mathrm{OPT}_\cY(\theta) = \frac{M\theta}{\norm{\theta}_M}+a$, we have for sufficiently large $n$, $\mathrm{OPT}_\cY(\theta+O(1/n^{1/2})u_d) \in U$ and hence for small $\epsilon$ we have $\mathrm{OPT}_{\epsilon, \cY}(\theta + O(1/n^{1/4})u_d) \subset U$. Thus for large enough $n$, we have two disjoint sets $\mathrm{OPT}_{\epsilon, \cY}(\theta)$ and $\mathrm{OPT}_{\epsilon, \cY}(\theta + O(1/n^{1/4})u_d)$ both contained in $U$ and thus
$\mathrm{OPT}_{\epsilon, \cX}(\theta) \cap \mathrm{OPT}_{\epsilon, \cX}(\theta + O(1/n^{1/4})u_d) = \emptyset$.
As before, we can now use the Gariver-Kauffman inequality to get
\begin{align}
    \lambda_d(\bar{G}_n) \geq \gamma\sqrt{n}
\end{align}
for large enough $n$ depending upon $U$, and some constant $\gamma$ depending upon the Hessian. This completes our proof.
\end{proof}

\section{LOCALLY CONVEX SURFACES (PROOF OF THEOREM~\ref{thm:locally_convex})}
\label{sec:app-convex}
In this section we provide a proof of Theorem~\ref{thm:locally_convex}. We begin again by by recalling or definition of Locally Convex surfaces (Definition ~\ref{def:locally_convex_action}) and Theorem~\ref{thm:locally_convex} for Locally Convex Action Sets.
\begin{definition}[Locally Convex surface]
\label{def:app-locally_convex_action}
Consider an action space $\mathcal{X} = \{x \in \mathbb{R}^d : f(x) = c\}$, where $f : \mathbb{R}^d \to \mathbb{R}$ is a $C^2$ function (i.e., all second order partial derivatives exist and are continuous). With $\mathrm{OPT}_{\mathcal{X}}(\theta)$ being the optimal arm defined as before, let the Hessian of $f$ at $\mathrm{OPT}_{\mathcal{X}}(\theta)$, denoted as $\nabla^2f(\mathrm{OPT}_{\mathcal{X}}(\theta))$, be positive definite. Then, $\mathcal{X}$
is said to be a Locally Convex surface.
\end{definition}

\begin{theorem}
\label{thm:app-locally_convex}
Let $\cX$ be a Locally Convex action space and $\Bar{G}_n$, be  the expected design matrix. For any bandit algorithm which suffers expected regret at most $O(\sqrt{n})$, there exists a real number $s$ in the half-open interval $(0,\frac{1}{2}]$ such that 
\begin{equation*}
    \lambda_{\min}(\Bar{G}_n) = \Omega(n^s)~.
\end{equation*}
\end{theorem}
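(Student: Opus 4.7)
The plan is to locally approximate $\mathcal{X}$ by an LCH (in fact ellipsoidal) surface around the optimal arm and then apply Theorem~\ref{theorem:ellipsod} with a slightly inflated $\epsilon$ that absorbs the approximation error, at the cost of a potentially slower growth rate of $\lambda_{\min}(\bar G_n)$. Writing $x^* = \mathrm{OPT}_{\mathcal{X}}(\theta)$, $g = \nabla f(x^*)$ and $H = \nabla^2 f(x^*) \succ 0$, a second-order Taylor expansion gives, in a small neighborhood $U$ of $x^*$,
\[
f(x) = f(x^*) + g^\top(x - x^*) + \tfrac{1}{2}(x-x^*)^\top H (x-x^*) + R(x),
\]
with $|R(x)| \le \omega(\|x - x^*\|) \, \|x - x^*\|^2$ where $\omega(r) \to 0$ as $r \to 0$ by continuity of $\nabla^2 f$. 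Dropping $R$ defines an ellipsoidal surrogate $\tilde{\mathcal{X}}$ that coincides with $\mathcal{X}$ at $x^*$ and for which Theorem~\ref{theorem:ellipsod} applies directly.

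Next, I would show that for $\epsilon$ small enough to force $\mathrm{OPT}_{\epsilon,\mathcal{X}}(\theta) \subset U$, any $x \in \mathrm{OPT}_{\epsilon,\mathcal{X}}(\theta)$ satisfies $\|x - x^*\| = O(\sqrt{\epsilon})$ (using positive-definiteness of $H$ and first-order optimality of $x^*$ on the surface). This both (a) provides the estimate $\|A_s - x^*\| = O(\sqrt{\epsilon})$ needed to push the Weyl plus Davis--Kahan argument of Theorem~\ref{thm:app-main-result} through, giving approximate orthogonality of the minimum eigendirection $u_d$ to $x^*$; and (b) lets me embed $\mathrm{OPT}_{\epsilon,\mathcal{X}}(\theta) \subset \mathrm{OPT}_{\epsilon',\tilde{\mathcal{X}}}(\theta)$ with $\epsilon' = \epsilon + O(\omega(\sqrt{\epsilon})\,\epsilon)$, and symmetrically. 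Setting $\theta' = \theta + \alpha u_d$, the ellipsoidal separation argument underlying Theorem~\ref{theorem:ellipsod} applied to $\tilde{\mathcal{X}}$ then yields the smallest $\alpha = O(\sqrt{\epsilon'})$ for which the $\epsilon'$-optimal neighborhoods of $\theta$ and $\theta'$ on $\tilde{\mathcal{X}}$ are disjoint; by the nesting this also separates the original $\epsilon$-optimal sets on $\mathcal{X}$.

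The final bookkeeping follows the LCH template: the Garivier--Kaufmann inequality together with the usual lower bound $\mathbb{E}_\theta[N_{\epsilon,n}(\theta)] \ge n - c\sqrt{n}/\epsilon$ coming from the $O(\sqrt{n})$ regret assumption yield $\lambda_{\min}(\bar G_n) = \Omega(1/\alpha^2) = \Omega(1/\epsilon')$. I would then choose $\epsilon = n^{-s'}$ for the largest $s' \in (0, 1/2]$ satisfying simultaneously (i) $c\sqrt{n}/\epsilon = o(n)$, i.e.\ $s' \le 1/2$; (ii) the inflation factor $\omega(\sqrt{\epsilon})$ is small enough that $\epsilon' \asymp \epsilon$; and (iii) the Hausdorff error $O(\omega(\sqrt{\epsilon})\epsilon)$ is negligible compared to the LCH separation $\Theta(\sqrt{\epsilon})$. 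Setting $s = s'$ then delivers $\lambda_{\min}(\bar G_n) = \Omega(n^s)$, and one recovers $s = 1/2$ precisely when $\omega$ vanishes on a neighborhood of $x^*$, i.e.\ in the LCH case covered by Theorem~\ref{thm:app-main-result}.

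The main obstacle is the quantitative comparison in the middle step: the surface perturbation between $\mathcal{X}$ and $\tilde{\mathcal{X}}$ has to be transferred from a scalar bound on $|f - \tilde f|$ to a genuinely geometric (Hausdorff-type) bound relating $\mathrm{OPT}_{\epsilon,\mathcal{X}}(\theta)$ to $\mathrm{OPT}_{\epsilon,\tilde{\mathcal{X}}}(\theta)$, and the LCH separation $\alpha = \Theta(\sqrt{\epsilon})$ has to be shown to survive this perturbation. This is precisely where the modulus of continuity $\omega$ of $\nabla^2 f$ at $x^*$ enters and ultimately pins down the admissible $s$; one cannot in general take $s = 1/2$ because the inflation $\epsilon' - \epsilon$ eventually dominates $\sqrt{\epsilon}$ if $\omega$ decays too slowly.
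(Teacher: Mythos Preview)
Your strategy matches the paper's: build the quadratic (ellipsoidal) approximation $\mathcal{E}$ of $\mathcal{X}$ at $x^*$, relate $\epsilon$-optimal neighborhoods on $\mathcal{X}$ to inflated ones on $\mathcal{E}$, separate the latter via the ellipsoidal argument, and finish with Garivier--Kaufmann. The paper resolves your acknowledged ``main obstacle'' with one concrete device you are missing: the nearest-point projection $\bar{\mathbf P}_{\mathcal{X},\mathcal{E}}(y) = \arg\min_{x \in \mathcal{E}}\|x-y\|$, shown to be injective near $x^*$ and to satisfy $\bar{\mathbf P}_{\mathcal{X},\mathcal{E}}\big(\mathrm{OPT}_{\epsilon,\mathcal{X}}(\theta)\big) \subset \mathrm{OPT}_{O(\epsilon^p),\mathcal{E}}(\theta)$ for some $p \in (0,1]$ (with analogous exponents $q,r$ on the $\theta'$ side). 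Injectivity then pulls disjointness on $\mathcal{E}$ back to disjointness on $\mathcal{X}$. Your literal inclusion $\mathrm{OPT}_{\epsilon,\mathcal{X}}(\theta) \subset \mathrm{OPT}_{\epsilon',\tilde{\mathcal{X}}}(\theta)$ is ill-posed as written, since the two sets lie on different hypersurfaces; the projection is exactly the map that makes this step rigorous.

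There is also an inconsistency in your final accounting. Your estimate $\epsilon' - \epsilon = O(\omega(\sqrt{\epsilon})\,\epsilon)$ with $\omega \to 0$ forces $\epsilon' \asymp \epsilon$, hence $\alpha = O(\sqrt{\epsilon'}) = O(\sqrt{\epsilon})$, which would yield $s = 1/2$ in \emph{every} locally convex case and contradicts your own closing sentence. The paper sidesteps this tension by not tying the approximation error to the modulus of continuity at all: it simply posits that the projection distance obeys $\|\bar{\mathbf P}_{\mathcal{X},\mathcal{E}}(x) - x\| = O(\epsilon^p)$ for some $p > 0$ and then sets $s = \tfrac{1}{2}\min(1,p,q,r)$. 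Whether $s = 1/2$ is in fact always attainable under bare $C^2$ regularity is not settled by either argument.
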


\begin{remark}
The exponent $s$ defined in Theorem \ref{thm:app-locally_convex} depends explicitly on the geometry of the action-space. In general it depends on how well the surface approximates a LCH surface. The best we can say when no other information is given about the action space is that the eigenvalue grows polynomially, which still results in a polynomial rate of the parameter estimation.
\end{remark}

\begin{proof}[Proof of Theorem \ref{thm:app-locally_convex}]
The proof utilizes the following lemmas,

\begin{lemma}
\label{lemma:quad_approx}
We can construct a quadratic approximation $\cE$ about $\mathrm{OPT}_\cX(\theta)$ similar to the construction for the LCH case.
\end{lemma}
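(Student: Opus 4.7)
The plan is to adapt the Taylor-expansion construction used in the LCH proof of Section \ref{subsec:LCH} virtually verbatim, with the single modification that we now keep only the quadratic part of the expansion and treat the higher-order terms as controlled error. First I would fix $x^* = \mathrm{OPT}_\cX(\theta)$ and set $H = \nabla^2 f(x^*)$, which is positive definite by Definition \ref{def:app-locally_convex_action}. A second-order Taylor expansion of $f$ at $x^*$ yields
\begin{equation*}
f(x) = f(x^*) + (x-x^*)^\top \nabla f(x^*) + \tfrac{1}{2}(x-x^*)^\top H (x-x^*) + r(x),
\end{equation*}
where continuity of $\nabla^2 f$ at $x^*$ implies $r(x) = o(\|x - x^*\|^2)$ as $x \to x^*$.

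Next I would drop the remainder $r(x)$ and equate the truncated expression to $c = f(x^*)$. The resulting purely quadratic equation in $x$ rearranges, after completing the square exactly as in Section \ref{subsec:LCH}, into the standard ellipsoid
\begin{equation*}
\cE = \{x \in \mathbb{R}^d : (x-a)^\top M^{-1}(x-a) = 1\},
\end{equation*}
with $a = x^* - \tfrac{1}{2}H^{-1}\nabla f(x^*)$ and $M^{-1}$ a positive multiple of $H$ (the precise constant coming from the same algebra that produced $M^{-1} = H/(4 b^\top H^{-1} b)$ in the LCH case, with $b = \nabla f(x^*)$). This ellipsoid is the desired quadratic approximation.

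The main subtlety — and the reason this is genuinely an approximation rather than a local identity as in the LCH setting — is that $r(x)$ is no longer identically zero on any neighborhood of $x^*$; the constancy of the Hessian which previously forced $r \equiv 0$ is unavailable here. Thus $\cE$ and $\cX$ only osculate to second order at $x^*$, and the quantitative rate at which they separate is governed by the modulus of continuity of $\nabla^2 f$ at $x^*$. For the present lemma this is not yet an obstacle, since the lemma only asserts that the construction can be carried out; the discrepancy $r(x)$ is what subsequent lemmas in Appendix \ref{sec:app-convex} will need to absorb when transferring the Davis--Kahan-style orthogonality estimate and the disjointness argument for $\epsilon$-optimal neighborhoods from $\cE$ back to $\cX$. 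It is precisely the size of this transfer error that will degrade the exponent $s$ in Theorem \ref{thm:app-locally_convex} from $1/2$ to a possibly smaller positive value, recovering the $\sqrt{n}$ rate only when the remainder vanishes (i.e., the LCH case).
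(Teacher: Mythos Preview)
Your proposal is correct and matches the paper's approach: the paper's proof of this lemma is essentially a one-line pointer back to the LCH construction, noting only that by continuity the Hessian remains positive definite on an open neighborhood $W$ of $\mathrm{OPT}_\cX(\theta)$ and then defining $\cE$ as the resulting quadratic (ellipsoidal) approximation. You have simply written out that construction in more detail, including the Taylor remainder $r(x)$ and the completion-of-squares algebra, and correctly identified that the nonvanishing of $r(x)$ is precisely what the subsequent lemmas must handle.
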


\begin{lemma}
\label{lemma:injective}
There exists a one-to-one map, $\Bar{\mathbf{P}}_{\cX,\cE} : \cX \to \cE$ and a positive real number $p$ in the interval $(0,1]$, such that the $\epsilon$-optimal set in $\cX$, $\mathrm{OPT}_{\epsilon,\cX}(\theta)$ is contained in an $O(\epsilon^p)$-optimal set in $\cE$ for small enough $\epsilon$ .
\end{lemma}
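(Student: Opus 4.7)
The plan is to construct $\bar{\mathbf{P}}_{\cX, \cE}$ as a ``vertical'' projection along the common surface normal in a small neighborhood of the shared optimal point $x^* := \mathrm{OPT}_\cX(\theta)$, and then extend the map arbitrarily (via any bijection between complements) outside that neighborhood. Since the $\epsilon$-optimal sets shrink to $x^*$ as $\epsilon \to 0$, only the local behavior of the map matters for the containment claim; the global extension is a formality ensuring that the codomain is literally $\cE$.

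For the local construction, I will first note that both $\cX = \{f = c\}$ and the quadratic surface $\cE$ from Lemma~\ref{lemma:quad_approx} share, at $x^*$, the same tangent hyperplane $T$ (normal to $\nabla f(x^*)$, which is parallel to $\theta$ by the Lagrange condition) and the same Hessian. By the Implicit Function Theorem, both surfaces can be written locally as graphs $\phi_\cX, \phi_\cE : T \to \Real$ with identical values, gradients, and Hessians at the origin. Setting $\bar{\mathbf{P}}(u, \phi_\cX(u)) := (u, \phi_\cE(u))$ gives a smooth bijection between neighborhoods in $\cX$ and $\cE$. Injectivity is immediate from the graph representation.

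To establish the $O(\epsilon^p)$ containment, I will invoke two quadratic facts. First, local strong convexity of $f$ at $x^*$ (positive-definite Hessian plus the Lagrange condition) yields $\langle x^* - x, \theta\rangle \geq \mu\|x-x^*\|^2$ on $\cX$ near $x^*$ for some $\mu > 0$, so $x \in \mathrm{OPT}_{\epsilon, \cX}(\theta)$ forces $\|x-x^*\| = O(\sqrt{\epsilon})$. Second, because $\phi_\cX$ and $\phi_\cE$ agree to second order at the origin, the $C^2$ Taylor remainder gives $|\phi_\cX(u) - \phi_\cE(u)| = o(\|u\|^2)$, so $\|\bar{\mathbf{P}}(x) - x\| = o(\epsilon)$. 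Since $\bar{\mathbf{P}}(x) - x$ lies along the normal direction $\propto \theta$, the reward discrepancy $|\langle \bar{\mathbf{P}}(x) - x, \theta\rangle|$ is also $o(\epsilon)$; combined with the fact that $\mathrm{OPT}_\cE(\theta) = x^*$ locally on $\cE$, we get $\bar{\mathbf{P}}(x) \in \mathrm{OPT}_{(1+o(1))\epsilon, \cE}(\theta) \subseteq \mathrm{OPT}_{O(\epsilon), \cE}(\theta)$, yielding the statement with $p = 1$.

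The main obstacle I foresee is handling the $o(\|u\|^2)$ Taylor remainder carefully when the Hessian of $f$ is only assumed continuous: without further regularity, one cannot upgrade $o(\epsilon)$ to a clean $O(\epsilon^{1+\delta})$. Fortunately, absorbing the $o(1)$ factor into the $O()$ notation for small $\epsilon$ still yields $p = 1$, so this is a cosmetic rather than substantive issue. A secondary subtlety is verifying that $x^*$ is genuinely the \emph{local} reward-maximizer of $\cE$ (not merely a critical point), which follows from positive-definiteness inherited from $\nabla^2 f(x^*)$ and strong convexity of the quadratic approximation near $x^*$, but should be stated explicitly when the neighborhood is restricted.
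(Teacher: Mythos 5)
Your proof is correct and reaches the stated conclusion by a genuinely different route than the paper. The paper takes $\Bar{\mathbf{P}}_{\cX,\cE}$ to be the metric (nearest-point) projection onto $\cE$, asserts its injectivity on a small neighbourhood from continuity, and then, writing $r=\norm{\Bar{\mathbf{P}}_{\cX,\cE}(x)-x}$, argues only that $r\to 0$ as $\epsilon\to 0$ and \emph{posits} $r=O(\epsilon^{p})$ for some unspecified $p>0$ before applying Cauchy--Schwarz exactly as you do; the exponent $p$ is left as an abstract, geometry-dependent quantity that later feeds into the exponent $s$. You instead use the vertical projection along the common normal between the two graphs over the shared tangent plane, which makes injectivity immediate, and---more importantly---you actually \emph{derive} the rate: second-order osculation of $\cX$ and $\cE$ at $x^*$ gives $|\phi_{\cX}(u)-\phi_{\cE}(u)|=o(\norm{u}^2)$, while the positive-definite Hessian gives $\norm{u}=O(\sqrt{\epsilon})$ on the $\epsilon$-optimal cap, so the reward shift is $o(\epsilon)$ and the lemma holds with $p=1$. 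This closes a real gap in the paper's reasoning (a quantity tending to zero need not admit any polynomial rate), at the price of leaning on the nondegeneracy of $\nabla^2 f(\mathrm{OPT}_{\cX}(\theta))$ in two places (the $O(\sqrt{\epsilon})$ diameter bound and the identification $\mathrm{OPT}_{\cE}(\theta)=x^*$). One caution: the paper's looser abstract-$p$ formulation is what it later invokes to assign $p<1$ in examples such as the $\ell_{10}$-ball; your argument shows that, whenever Definition~\ref{def:locally_convex_action} genuinely holds with a positive-definite Hessian at the optimum, $p=1$ is forced, so such examples must either involve degenerate curvature at the optimum (falling outside the hypothesis) or be measuring the tangential diameter of the $\epsilon$-optimal cap rather than the normal approximation error $\norm{\Bar{\mathbf{P}}_{\cX,\cE}(x)-x}$. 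Within the hypotheses of the lemma as stated, your proof is sound and strictly more informative than the paper's.
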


\begin{lemma}
\label{lemma:construction}
There exists a positive real number $s$ in the interval $(0, \frac{1}{2}]$, such that for $\theta' = \theta + O(\sqrt{\frac{1}{n^s}})u_d$, the $O(\epsilon^{2s})$-optimal sets in the approximation $\cE$, $\mathrm{OPT}_{O(\epsilon^{2s}),\cE}(\theta)$ and $\mathrm{OPT}_{O(\epsilon^{2s}),\cE}(\theta')$ are disjoint, and the image of the $\epsilon$-optimal set $\mathrm{OPT}_{\epsilon,\cX}(\theta')$ under the operator defined in Lemma \ref{lemma:injective} is contained in $\mathrm{OPT}_{O(\epsilon^{2s}),\cE}(\theta')$.
\end{lemma}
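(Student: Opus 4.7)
The overall plan is to choose $s = p/2$, where $p \in (0, 1]$ is the H\"older-type exponent produced by Lemma~\ref{lemma:injective}. This choice immediately places $s$ in the half-open interval $(0, 1/2]$ as required, and it ties together the two rates in the statement: the $\epsilon^p$ rate of the $\cX \to \cE$ inclusion and the $\epsilon^{2s}$ scale at which we control the ellipsoid. The plan is then to verify (a) the containment inclusion at $\theta'$ using Lemma~\ref{lemma:injective} applied with the perturbed parameter, and (b) the disjointness of the two ellipsoidal neighborhoods by transporting the ellipsoidal argument from Theorem~\ref{theorem:ellipsod} (equivalently, Lemma~\ref{lemma:step}) to $\cE$ with step size $\alpha = O(n^{-s/2})$.

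For the containment, since $\alpha = O(n^{-s/2}) \to 0$, the perturbed parameter $\theta'$ stays close to $\theta$, so $\mathrm{OPT}_\cX(\theta')$ stays inside the open neighborhood on which the quadratic approximation from Lemma~\ref{lemma:quad_approx} is valid. Hence Lemma~\ref{lemma:injective} applies to $\theta'$ with the same exponent $p$ and the same map $\bar{\mathbf{P}}_{\cX,\cE}$, yielding
\[
\bar{\mathbf{P}}_{\cX,\cE}\bigl(\mathrm{OPT}_{\epsilon,\cX}(\theta')\bigr) \subseteq \mathrm{OPT}_{O(\epsilon^p),\cE}(\theta') = \mathrm{OPT}_{O(\epsilon^{2s}),\cE}(\theta'),
\]
for all sufficiently small $\epsilon$, because $p = 2s$ and $\epsilon^p \le \epsilon^{2s}$ for small $\epsilon$ once $p \ge 2s$.

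For the disjointness of $\mathrm{OPT}_{O(\epsilon^{2s}),\cE}(\theta)$ and $\mathrm{OPT}_{O(\epsilon^{2s}),\cE}(\theta + \alpha u_d)$ on the ellipsoid $\cE$, I would reuse the change-of-variables/law-of-sines computation from the ellipsoidal proof in Section~\ref{subsec:Ellipsoidal}. On an ellipsoid, two $\epsilon'$-optimal neighborhoods for $\theta$ and $\theta + \alpha u_d$ are disjoint once $\alpha \ge c\sqrt{\epsilon'}$, with $c$ depending on the condition number of the underlying Hessian and on the cosine between $u_d$ and the optimal direction. Substituting $\epsilon' = O(\epsilon^{2s})$ and taking $\epsilon = \Theta(1/\sqrt{n})$ as elsewhere in the main argument, the required threshold becomes $\alpha \gtrsim \epsilon^s = \Theta(n^{-s/2})$, which is exactly the order of our prescribed perturbation. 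The near-orthogonality of $u_d$ to $\mathrm{OPT}_\cE(\theta)$, needed to keep the constant $c$ bounded, is established exactly as for LCH surfaces: a Davis-Kahan argument applied to $\bar{G}_n$, exploiting that the $O(\sqrt{n})$ regret forces most plays to concentrate in a shrinking neighborhood of $\mathrm{OPT}_\cX(\theta)$, which by the quadratic approximation coincides with a shrinking neighborhood of $\mathrm{OPT}_\cE(\theta)$.

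The main obstacle I anticipate is the bookkeeping needed to keep all the scales consistent: the single exponent $p$ simultaneously controls the ``inflation'' $\epsilon \mapsto \epsilon^p$ in the containment and determines (through $2s = p$) the width used in the disjointness, and one must verify that the same small-$\epsilon$ regime is valid in both places -- in particular, that $\mathrm{OPT}_{\epsilon,\cX}(\theta')$ lives inside the domain of validity of the quadratic approximation. A subtler point is that Lemma~\ref{lemma:injective} is stated for the base parameter $\theta$, whereas we need it with $\theta'$; this uniformity comes for free from the continuity of $\theta \mapsto \mathrm{OPT}_\cX(\theta)$ and from the fact that the quadratic approximation of Lemma~\ref{lemma:quad_approx} is constructed on an \emph{open} neighborhood, so the argument transfers verbatim to all sufficiently small perturbations of $\theta$.
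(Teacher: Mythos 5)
Your proof follows the same architecture as the paper's: choose $s$ as half of a H\"older-type approximation exponent, push the $\epsilon$-optimal sets of $\cX$ into $O(\epsilon^{2s})$-optimal sets of the approximating ellipsoid $\cE$, and invoke the ellipsoidal separation result at step size $\alpha=O(\sqrt{\epsilon^{2s}})=O(n^{-s/2})$, with near-orthogonality of $u_d$ supplied by Davis--Kahan. The disjointness half of your argument is essentially the paper's and is fine.

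The gap is in the containment at $\theta'$, which is the actual content of this lemma (the containment at $\theta$ is already Lemma~\ref{lemma:injective}). You set $s=p/2$ and claim that Lemma~\ref{lemma:injective} ``transfers verbatim'' to $\theta'$ with the same exponent $p$. It does not: $\cE$ is the quadratic approximation of $\cX$ anchored at $\mathrm{OPT}_{\cX}(\theta)$, and the proof of Lemma~\ref{lemma:injective} exploits the exact coincidence $\mathrm{OPT}_{\cX}(\theta)=\mathrm{OPT}_{\cE}(\theta)$ of the two optima to convert $\epsilon$-optimality on $\cX$ into $O(\epsilon^{\min(1,p)})$-optimality on $\cE$. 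For the perturbed parameter, $\mathrm{OPT}_{\cX}(\theta')$ and $\mathrm{OPT}_{\cE}(\theta')$ are in general distinct points, so two further errors must be tracked: the projection error $\|\bar{\mathbf{P}}_{\cX,\cE}(x)-x\|=O(\epsilon^{q})$ for $x\in\mathrm{OPT}_{\epsilon,\cX}(\theta')$ (these are not the points covered by Lemma~\ref{lemma:injective}, so a priori $q\neq p$), and the mismatch $\|\bar{\mathbf{P}}_{\cX,\cE}(\mathrm{OPT}_{\cX}(\theta'))-\mathrm{OPT}_{\cE}(\theta')\|=O(\epsilon^{r})$. Carrying both through the Cauchy--Schwarz chain, the paper only obtains
\begin{align*}
\bar{\mathbf{P}}_{\cX,\cE}(x)^{\top}\theta' \;\geq\; \mathrm{OPT}_{\cE}(\theta')^{\top}\theta' - \epsilon - O(\epsilon^{q}) - O(\epsilon^{r}),
\end{align*}
i.e., containment only in $\mathrm{OPT}_{O(\epsilon^{\min(1,q,r)}),\cE}(\theta')$, and therefore sets $s=\frac{1}{2}\min(1,p,q,r)$. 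If $\min(q,r)<p$, your target set $\mathrm{OPT}_{O(\epsilon^{2s}),\cE}(\theta')$ with $2s=p$ is strictly smaller than the set the image actually lands in, so the claimed containment---and with it the disjointness of the $\cX$-sets that the main theorem needs---can fail. To close the argument you must introduce these two additional exponents (including a check that the normal component $\vec{u}^{\top}\theta'$ has the right sign for small $\epsilon$) and take the minimum over all of them, exactly as the paper does.
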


Thus using Lemma \ref{lemma:injective} and Lemma \ref{lemma:construction}, we ensure that the $\epsilon$-optimal sets of $\cX$ for the bandit parameters $\theta$ and $\theta'$ are disjoint. The remainder of the proof is the same as in the earlier cases. 
\end{proof}

\begin{proof}[Proof of Lemma \ref{lemma:quad_approx}]
By the $C^2$ definition, there exists an open set $W \in \mathbb{R}^d$ such that the Hessian of $f$ over $W \cap \cX \triangleq U$, $\nabla^2(f)(U)$ exists and is positive definite. Let $\cE$ be a quadratic approximation ellipsoid (similar to the construction as in the locally constant hessian (LCH) case) about $\mathrm{OPT}_\cX(\theta)$.
\end{proof}

\begin{proof}[Proof of Lemma \ref{lemma:injective}]
Let $\Bar{\mathbf{P}}_{\cX,\cE} : \cX \to \cE$ be the best approximation operator denoted by 
\begin{equation*}
    \Bar{\mathbf{P}}_{\cX,\cE}(y) = \argmin_{x \in \cE}\norm{x-y}
\end{equation*}
for any $y$ in $\cX$. This operator finds the nearest point to a point in the action set $\cX$ to the ellipsoid approximation $\cE$.

From continuity of $f$ and construction of $\cE$, there exists a small neighbourhood $W$ about $\mathrm{OPT}_\cX(\theta)$ such that the $\Bar{\mathbf{P}}_{\cX,\cE}$ operator is injective. As in the proof of the Locally Constant Hessian (LCH) case, we shall restrict our attention to the neighbourhood of $U \cap W \triangleq V$.

Let $x \in \mathrm{OPT}_{\epsilon,\cX}(\theta)$. By definition,
\begin{equation*}
    x^\top\theta \geq \mathrm{OPT}_\cX(\theta)^\top\theta - \epsilon.
\end{equation*}
Now $x = \Bar{\mathbf{P}}_{\cX,\cE}(x) + r \Vec{u}$ where $\Vec{u}$ is the normal at $x$ and $r$ is $\norm{\Bar{\mathbf{P}}_{\cX,\cE}(x) - x}$. Note that as $\epsilon \to 0$, $\Bar{\mathbf{P}}_{\cX,\cE}(x) \to \mathrm{OPT}_\cX(\theta)$ and $x \to \mathrm{OPT}_\cX(\theta) $. Therefore $r \to 0$. Thus $r = O(\epsilon^p)$, for some $p > 0$. Then, without loss of generality assuming $\norm{\theta} = 1$, we have by Cauchy-Schwartz, 
\begin{equation*}
    \Bar{\mathbf{P}}_{\cX,\cE}(x)^\top\theta + r \geq \Bar{\mathbf{P}}_{\cX,\cE}(x)^\top\theta + r \Vec{u}^\top\theta = x^\top\theta \geq \mathrm{OPT}_\cX(\theta)^\top\theta - \epsilon = \mathrm{OPT}_\cE(\theta)^\top\theta - \epsilon\;.
\end{equation*}
Thus $\Bar{\mathbf{P}}_{\cX,\cE}(x) \in \mathrm{OPT}_{O(\epsilon^{\min{(1,p)}}),\cE}(\theta)$. 

Let $\epsilon_1$ be such that for all $\epsilon \leq \epsilon_1$, $\mathrm{OPT}_{\epsilon,\cX}(\theta) \subset V$, then we have $\Bar{\mathbf{P}}_{\cX,\cE}(\mathrm{OPT}_{\epsilon,\cX}(\theta)) \subset \mathrm{OPT}_{O(\epsilon^{\min{(1,p)}}),\cE}(\theta)$ for all $\epsilon \leq \epsilon_1$.
\end{proof}

\begin{proof}[Proof of Lemma \ref{lemma:construction}]
From the ellipsoidal case, we have by construction, $\theta' = \theta + \alpha u_d$, where $\alpha = O(\sqrt{\epsilon})$ and $u_d$ is the eigen-vector corresponding to the minimum eigen-value to ensure separation. This required showing that $u_d$ and $\mathrm{OPT}(\theta)$ are roughly perpendicular to each other. For the current case, it is also possible to show that $u_d$ and $\mathrm{OPT}_\cX(\theta)$ is approximately perpendicular under the polynomial approximation assumption for $\epsilon = O(\frac{1}{\sqrt{n}})$. We leave this as an easy exercise.

Thus let us construct $\theta' = \theta + \alpha u_d$, where $\alpha$, would have to be carefully constructed, such that as $\epsilon \to 0$, $\alpha \to 0$. Let, $x \in \mathrm{OPT}_{\epsilon,\cX}(\theta')$. By definition, 
\begin{equation*}
    x^\top\theta' \geq \mathrm{OPT}_\cX(\theta')^\top\theta' - \epsilon.
\end{equation*}
For $\alpha$ very small, we have, $\norm{\theta'} \approx 1$, and as before using the approximation operator and Cauchy-Schwartz, we have
\begin{equation}
    \Bar{\mathbf{P}}_{\cX,\cE}(x)^\top\theta' + r_1 \geq \Bar{\mathbf{P}}_{\cX,\cE}(\mathrm{OPT}_\cX(\theta'))^\top \theta' + r_2 \Vec{u}^\top \theta' - \epsilon\;.
\end{equation}
where $r_1$ is the approximation error $\norm{\Bar{\mathbf{P}}_{\cX,\cE}(x) - x}$, $r_2$ is the approximation error $\norm{\Bar{\mathbf{P}}_{\cX,\cE}(\mathrm{OPT}_\cX(\theta')) - \mathrm{OPT}_\cX(\theta')}$ and $\Vec{u}$ is the normal at $\mathrm{OPT}_\cX(\theta')$.

As $\epsilon \to 0$, we have $\theta' \to \theta$ (because $\alpha \to 0$) and therefore $x \to \mathrm{OPT}_\cX(\theta')$. Thus $r_1 = O(\epsilon^q)$ for some $q>0$ by the polynomial approximation. Also $\Vec{u}$ approaches the normal at $\mathrm{OPT}_\cX(\theta)$ and thus by continuity of $f$, there exists an $\epsilon_2$ such that for all $\epsilon \leq \epsilon_2$ we have $\Vec{u}^\top \theta' \geq 0$. Thus for small enough $\epsilon$, we have for any $x \in \mathrm{OPT}_{\epsilon,\cX}(\theta')$, 
\begin{equation}
    \Bar{\mathbf{P}}_{\cX,\cE}(x)^\top\theta' + O(\epsilon^q) \geq \Bar{\mathbf{P}}_{\cX,\cE}(\mathrm{OPT}_\cX(\theta'))^\top \theta' - \epsilon\;.
\end{equation}
Note that as $\epsilon \to 0$, we have $\Bar{\mathbf{P}}_{\cX,\cE}(\mathrm{OPT}_\cX(\theta')) \to \mathrm{OPT}_\cE(\theta)$ and $\mathrm{OPT}_\cE(\theta') \to \mathrm{OPT}_\cE(\theta)$, and thus we have $\norm{\Bar{\mathbf{P}}_{\cX,\cE}(\mathrm{OPT}_\cX(\theta')) - \mathrm{OPT}_\cE(\theta')} = O(\epsilon^r)$ for some $r >0$. This implies, 
\begin{equation*}
    \mathrm{OPT}_\cE(\theta')^\top \theta' - \Bar{\mathbf{P}}_{\cX,\cE}(\mathrm{OPT}_\cX(\theta'))^\top \theta' \leq \norm{\mathrm{OPT}_\cE(\theta') - \Bar{\mathbf{P}}_{\cX,\cE}(\mathrm{OPT}_\cX(\theta'))}\norm{\theta'} \leq O(\epsilon^r)\;.
\end{equation*}
Using this relation we get, 
\begin{equation*}
    \Bar{\mathbf{P}}_{\cX,\cE}(x)^\top\theta' \geq \mathrm{OPT}_\cE(\theta')^\top \theta' - \epsilon - O(\epsilon^r) - O(\epsilon^q). 
\end{equation*}

Thus we have $\Bar{\mathbf{P}}_{\cX,\cE}(x) \in \mathrm{OPT}_{O(\epsilon^{\min{(1,q,r)}}),\cE}(\theta')$. Let $\epsilon_3$ be such that for all $\epsilon \leq \epsilon_3$, $\mathrm{OPT}_{\epsilon,\cX}(\theta') \subset V$, then we have $\Bar{\mathbf{P}}_{\cX,\cE}(\mathrm{OPT}_{\epsilon,\cX}(\theta')) \subset \mathrm{OPT}_{O(\epsilon^{\min{(1,q,r)}}),\cE}(\theta)$ for all $\epsilon \leq \epsilon_3$.

Thus for all $\epsilon \leq \min{(\epsilon_1,\epsilon_2,\epsilon_3)}$, we have 
$\Bar{\mathbf{P}}_{\cX,\cE}(\mathrm{OPT}_{\epsilon,\cX}(\theta)) \subset \mathrm{OPT}_{O(\epsilon^{\min{(1,p)}}),\cE}(\theta) \subset \mathrm{OPT}_{O(\epsilon^{\min{(1,p,q,r)}}),\cE}(\theta)$ 

and $\Bar{\mathbf{P}}_{\cX,\cE}(\mathrm{OPT}_{\epsilon,\cX}(\theta')) \subset \mathrm{OPT}_{O(\epsilon^{\min{(1,q,r)}}),\cE}(\theta') \subset \mathrm{OPT}_{O(\epsilon^{\min{(1,p,q,r)}}),\cE}(\theta')$.

Now from the LCH case, we have $\mathrm{OPT}_{O(\epsilon^{\min{(1,p,q,r)}}),\cE}(\theta)$ and $\mathrm{OPT}_{O(\epsilon^{\min{(1,p,q,r)}}),\cE}(\theta')$ are disjoint if $\theta' = \theta + O(\sqrt{\epsilon^{\min{(1,p,q,r)}}}) u_d$, and by construction based on the injectivity of the approximation operator, $\mathrm{OPT}_{\epsilon,\cX}(\theta)$ and $\mathrm{OPT}_{\epsilon,\cX}(\theta')$ are as well disjoint.

This gives us by choice of $\epsilon = O(\frac{1}{\sqrt{n}})$ 
\begin{align*}
    \lambda_d \geq \Omega(n^s)
\end{align*}
where $s \in (0,1/2]$ and defined as $s = \frac{1}{2}\min{(1,p,q,r)}$. This completes our proof.
\end{proof}

\begin{remark}
For the LCH case, we have no approximation error and the exponents defined by $p, q, r$, are strictly more than $1$ (they are in fact $\infty$) and we get back the results of Theorem~\ref{thm:app-main-result} of $\Omega(\sqrt{n})$.
\end{remark}
\vspace{2mm}
\begin{remark}
Note that the $s$ defined in Theorem~\ref{thm:app-locally_convex} is defined as $\frac{1}{2}\min{(1,p,q,r)}$, where each $p, q, r$ represent the approximation error with respect to an LCH surface. Specifically with $\cE$ being defined as the approximating Ellipsoid and $\cX$ denoting the original action space, we have $p$ defined as the rate at which $\norm{\Bar{\mathbf{P}}_{\cX,\cE}(x) - x}$ goes to $0$, that is $\norm{\Bar{\mathbf{P}}_{\cX,\cE}(x) - x} = O(\epsilon^p)$ for any $x \in \mathrm{OPT}_{\epsilon, \cX}(\theta)$. Similarly $q$ is defined as $\norm{\Bar{\mathbf{P}}_{\cX,\cE}(x) - x} = O(\epsilon^q)$ for any $x \in \mathrm{OPT}_{\epsilon, \cX}(\theta')$ and $r$ is defined as $\norm{\Bar{\mathbf{P}}_{\cX,\cE}(\mathrm{OPT}_\cX(\theta')) - \mathrm{OPT}_\cE(\theta')} = O(\epsilon^r)$. 
\end{remark}

\subsection{Example of Locally Convex action space:} In this subsection we demonstrate experimentally an action space which is \emph{convex} and for which the minimum eigenvalue grows slower than $\Omega(\sqrt{n})$. Consider the action-space $\cX = \{x\in \mathbb{R}^d : \norm{x}_{10} \leq 1\}$. Clearly, this is a convex set. We set the bandit parameter as $\theta = (1,\ldots, 1)$, as a vector of $1$s of size $d$. We use Thompson Sampling as the representative algorithm and plot the growth of the minimum eigenvalue versus rounds $n$ as in Section~\ref{sec:experiments}. In order to demonstrate the high probability phenomenon, we form a high confidence band of the mean observation of $\log{\lambda_{\min}(V_n)}/\log{n}$ with three standard deviations of width.We observe the the minimum eigenvalue grows less than $\Omega(\sqrt{n})$.

\begin{figure}[tbhp]
    \centering
    \includegraphics[width = 0.5\linewidth]{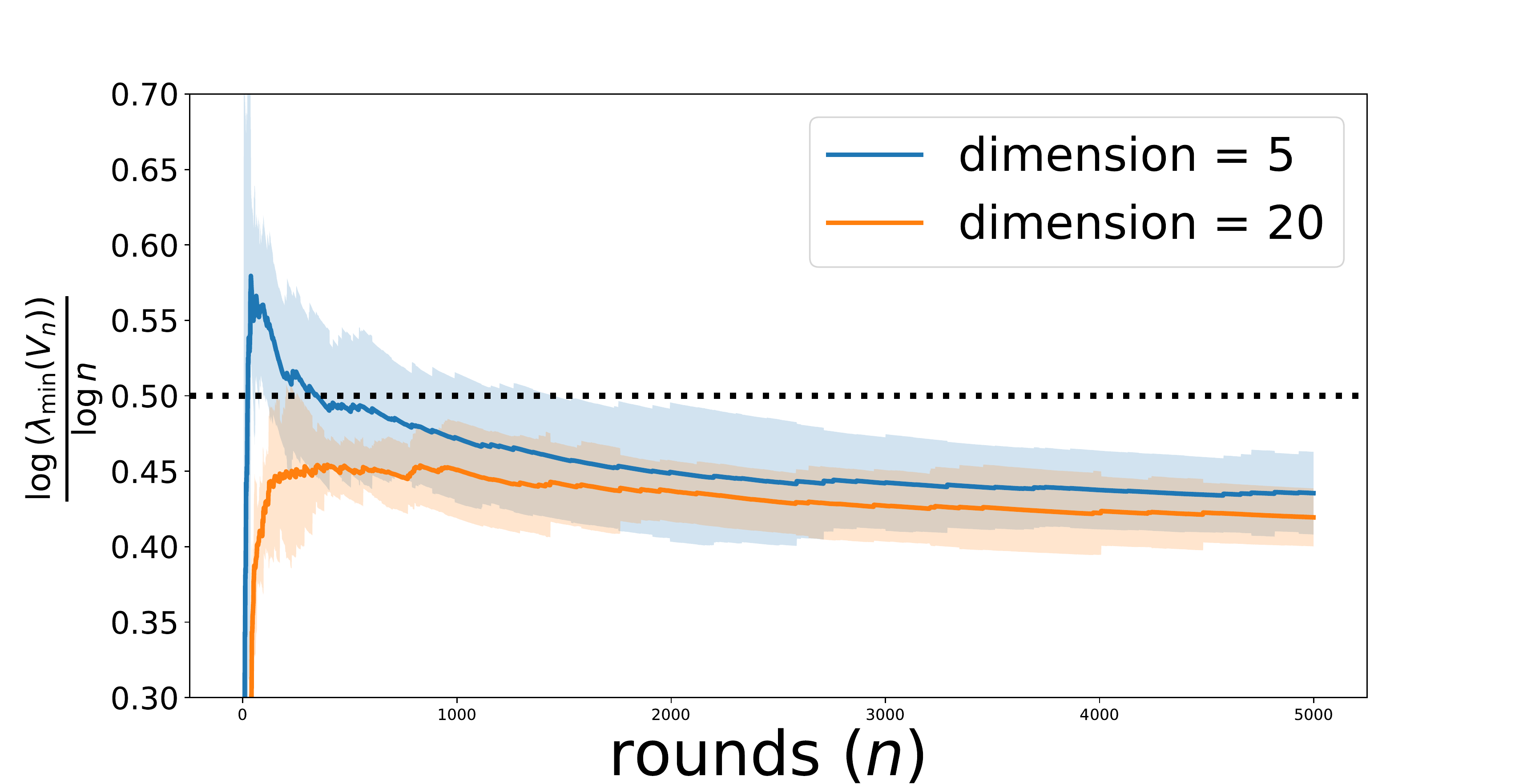}
    \caption{Scaling of the minimum eigenvalue of design matrix (generated by the Thompson sampling algorithm) with time for the action space $\cX = \{x\in \mathbb{R}^d : \norm{x}_{10} \leq 1\}$ . The plots represent averages over $20$ independent trials. The $X$ axis denotes the number of rounds $n$ and the $Y$-axis denotes $\frac{\log{\lambda_{\min}(V_n)}}{\log{n}}$. We plot the mean trend of $\frac{\log{\lambda_{\min}(V_n)}}{\log{n}}$ along with three standard deviations.  The dotted black line is the constant (exponent) $1/2$. Note that $\frac{\log{\lambda_{\min}(V_n)}}{\log{n}}$ settles at value below $1/2$. We plot this for two dimensions $5$ and $20$ to note the dependence on dimension $d$. We note that these corroborate with our theory. }
    \label{fig:my_label}
\end{figure}

\paragraph{Justification : } Let us first observe that the Hessian at $\mathrm{OPT}_{\cX}(\theta)$ is positive definite and continuous for our choice of $\theta$. We can also calculate that for our choice of the action space $\cX$, any point $x \in \mathrm{OPT}_{\epsilon, \cX}(\theta)$,
\begin{align*}
\norm{x - \mathrm{OPT}_{\cX}(\theta)} =\epsilon^{1/10}~.    
\end{align*}
(This is easy to see for dimension $2$.) Thus the $\norm{\Bar{\mathbf{P}}_{\cX,\cE}(x) - x}$ as defined in the proof of Theorem~\ref{thm:app-locally_convex} (see Section~\ref{sec:app-convex}) is of the order of $\epsilon^{1/10}$ and hence $p$ as defined above is $1/10$. This means that $\lambda_{\min}(V_n) = \Omega(n^{0.1})$. This is true as observed from our experiments as observed in Figure~\ref{fig:my_label}. However we specifically also show that the minimum eigenvalue $\lambda_{\min}(V_n)$ is growing at a rate less than $\sqrt{n}$. This corroborates our results.

\section{MODEL SELECTION: PROOF OF RESULTS}

\begin{proof}[Proof of Lemma~\ref{lem:sequence}]
We consider doubling epochs, with lengths $n_i = 2^{i-1} n_1$, where $n_1$ is the initial epoch length and $N$ is the total number of epochs. Then, from the doubling principle, we get
\begin{align*}
    \sum_{i=1}^N 2^{i-1}n_1 = n \,\, \Rightarrow \, N  = \log_2 \left(1+ n/n_1 \right) =\cO\left(\log (n/n_1)\right).
\end{align*}
Now, consider the $i$-th epoch. Let $\widehat{\theta}_{n_i}$ be the least square estimate of $\theta$ at the end of epoch $i$. The confidence interval at the end of epoch $i$, i.e., after \texttt{OFUL} is run with a norm estimate $b^{(i)}$ for $n_i$ rounds with confidence level $\delta_i$, is given by
\begin{align*}
    \mathcal{B}_{n_i} = \left \lbrace \theta \in \Real^d: \|\theta - \widehat{\theta}_{n_i}\|_{(V_{n_i}+ \lambda I)} \leq \sqrt{\beta_{n_i}(\delta_i)} \right \rbrace.
\end{align*}
Here $\beta_{n_i}(\delta_i)$ denotes the radius and $\Sigma_{n_i}$ denotes the shape of the ellipsoid.
Under Assumption~\ref{ass:high-prob}, the ellipsoid takes the form 
\begin{align*}
    \mathcal{B}_{n_i} = \left \lbrace \theta \in \Real^d: \|\theta - \widehat{\theta}_{n_i}\| \leq \frac{ \sqrt{\beta_{n_i}(\delta_i)}}{\sqrt{\lambda+\gamma_0  n_i}} \right \rbrace,
\end{align*}
with probability at least $1-\delta_i = 1-\delta/2^{i-1}$. Here, we use the fact that $\gamma_{\min}(V_{n_i}) \geq \lambda+ \gamma_0 \sqrt{n_i}$ under Assumption~\ref{ass:high-prob}, provided $n_i \geq n_0$. To ensure this, we choose $n_1 \geq n_0 $. Now, note that $\theta \in \mathcal{B}_{n_i}$ with probability at least $1-\delta_i$. Therefore, we have
\begin{align*}
    \|\widehat{\theta}_{n_i}\| \leq \|\theta\| + \frac{ \sqrt{\beta_{n_i}(\delta_i)}}{\sqrt{\lambda+\gamma_0 \sqrt{n_i}}}
\end{align*}
with probability at least $1-2\delta_i$. Recall that at the end of the $i$-th epoch, \texttt{ALB} set the estimate of $\|\theta\|$ to $b^{(i+1)} = \max_{\theta \in \mathcal{B}_{n_i}} \|\theta\|$. Then,
from the definition of $\mathcal{B}_{n_i}$, we obtain
\begin{align*}
    b^{(i+1)} =\|\widehat{\theta}_{n_i}\| +  \frac{ \sqrt{\beta_{n_i}(\delta_i)}}{\sqrt{\lambda+\gamma_0 \sqrt{n_i}}} 
     \leq \|\theta\| + 2 \frac{ \sqrt{\beta_{n_i}(\delta_i)}}{\sqrt{\lambda+\gamma_0 \sqrt{n_i}}}
\end{align*}
with probability higher than $1-2\delta_i$. If noise is distributed as $\cN(0,1)$, the confidence radius reads
\begin{align*}
    \sqrt{\beta_{n_i}(\delta_i)} = b^{(i)}\sqrt{\lambda}+\sqrt{2\log(1/\delta_i)+d\log(1+n_i/(\lambda d))}~,
\end{align*}
We now substitute $n_i = 2^{i-1}n_1$ and $\delta_i = \frac{\delta}{2^{i-1}}$ to obtain $\sqrt{\lambda+\gamma_0 \sqrt{n_i}} \geq 2^{\frac{i-1}{4}} \sqrt{\gamma_0 \sqrt{n_1}}$, and
\begin{align*}
    \frac{\sqrt{\beta_{n_i}(\delta_i)}}{ \sqrt{\lambda+\gamma_0n_i}} \leq C_1\frac{ b^{(i)}\sqrt{\lambda}}{2^{\frac{i-1}{4}}\sqrt{\gamma_0 \sqrt{n_1}}} + C_2 \frac{i}{2^{\frac{i-1}{4}}\sqrt{\gamma_0\sqrt{n_1}}} \sqrt{2\log(1/\delta_1)+d\log(1+n_1/(\lambda d))}
\end{align*}
for some universal constants $C_1,C_2$.
Using this, we further obtain
\begin{align*}
    b^{(i+1)} &\leq \|\theta\| + 2C_1\frac{ b^{(i)}\sqrt{\lambda}}{2^{\frac{i-1}{4}}\sqrt{\gamma_0\sqrt{n_1}}} + 2C_2 \frac{i}{2^{\frac{i-1}{4}}\sqrt{\gamma_0 \sqrt{n_1}}} \sqrt{2\log(1/\delta_1)+d\log(1+n_1/(\lambda d))} \\
    & \leq \|\theta\| + b^{(i)}\frac{p}{2^{\frac{i-1}{4}}}\sqrt{\frac{1}{\sqrt{n_1}}} + \frac{iq}{2^{\frac{i-1}{4}}} \sqrt{\frac{d}{\sqrt{n_1}}},
\end{align*}
with probability at least $1-2\delta_i$, where we introduce the terms
\begin{align*}
    p = \frac{2C_1\sqrt{\lambda}}{\sqrt{\gamma_0}} \,\,\,\,\text{and} \,\,\, q = \frac{2C_2}{\sqrt{\gamma_0}} \sqrt{2\log(1/\delta_1)+\log(1+n_1/\lambda)}.
\end{align*}
Therefore, with probability at least $1-2\delta_i$, we obtain
\begin{align*}
    b^{(i+1)} - b^{(i)} \leq \|\theta\| - \left(1-\frac{p}{2^{\frac{i-1}{4}}}\frac{1}{n_1^{1/4}}\right)b^{(i)} + \frac{iq}{2^{\frac{i-1}{4}}} \frac{\sqrt{d}}{n_1^{1/4}}.
\end{align*}
Note that by construction, $b^{(i)} \geq \|\theta\|$. Hence, provided $n_1 > \frac{2p^4}{2^i}$, we have
\begin{align*}
    b^{(i+1)} - b^{(i)} \leq \frac{p}{2^{\frac{i-1}{4}}}\frac{\norm{\theta}}{n_1^{1/4}}
     + \frac{iq}{2^{\frac{i-1}{4}}} \frac{\sqrt{d}}{n_1^{1/4}},
\end{align*}
with probability at least $1-2\delta_i$.
From the above expression, we have $\sup_i b^{(i)} < \infty$
with probability greater than or equal to $ 1-\sum_i 2\delta_i = 1-\sum_i 2\delta/2^{i-1} = 1-4\delta$. From the expression of $b^{(i+1)}$ and using the above fact, we get $\lim_{i \rightarrow \infty} b^{(i)} \leq \|\theta\|$.
However, by construction $b^{(i)} \geq \|\theta\|$. Using this, along with the above observation, we obtain
\begin{align*}
    \lim_{i \rightarrow \infty} b^{(i)} = \|\theta\|.
\end{align*}
with probability exceeding $1-4\delta$. Therefore, we deduce that the sequence $\{b^{(1)},b^{(2)},...\}$ converges to $\|\theta\|$ with probability at least $1-4\delta$, and hence our successive refinement algorithm is consistent.

\textbf{Rate of Convergence.} Since $ b^{(i+1)} - b^{(i)} = \Tilde{\mathcal{O}}\left( \frac{i}{2^{i/4}} \right)$
with high probability, the rate of convergence of the sequence $\{b^{(i)}\}_{i=1}^\infty$ is exponential in the number of epochs.

\textbf{A uniform upper bound on $b^{(i)}$.}
Consider the sequences $ \Bigg \{\frac{i}{2^{\frac{i-1}{4}}} \Bigg \}_{i=1}^\infty$ and $ \Bigg \{\frac{1}{2^{\frac{i-1}{4}}} \Bigg \}_{i=1}^\infty$. Let $t_i$ and $u_i$ denote the $i$-th term of the sequences respectively. It is easy to see that $\sup_{i}t_i < \infty $ and $\sup_{i}u_i < \infty$, and that the sequences $\{t_i\}_{i=1}^\infty$ and $\{u_i\}_{i=1}^\infty$ are convergent. Now, we have
\begin{align*}
    b^{(2)} \leq \|\theta\| + u_1 \frac{p b^{(1)}}{n_1^{1/4}} + t_1 \frac{q \sqrt{d}}{n_1^{1/4}}
\end{align*}
with probability at least $1-2\delta$. Similarly, we write $b^{(3)}$ as
\begin{align*}
    b^{(3)} \leq \|\theta\| + u_2 \frac{p b^{(2)}}{n_1^{1/4}} + t_2 \frac{q \sqrt{d}}{n_1^{1/4}} \leq  \left (1+u_2 \frac{p}{n_1^{1/4}} \right) \|\theta\| + \left( u_1 u_2 \frac{p}{n_1^{1/4}} \frac{p}{n_1^{1/4}} b^{(1)} \right) + \left( t_1 u_2 \frac{p}{n_1^{1/4}} \frac{q \sqrt{d}}{n_1^{1/4}} + t_2 \frac{q \sqrt{d}}{n_1^{1/4}}\right)
\end{align*}
with probability at least $1 - 2\delta -\delta = 1-3\delta$. Similarly, we write expressions for $b^{(4)},b^{(5)},...$. Now, provided $n_1 \geq C \,d^2 \left ( \max\{p,q\}  \, b^{(1)} \right)^4 $, where $C $ is a sufficiently large constant, $b^{(i)}$ can be upper-bounded, with with probability at least $1-\sum_{i} 2 \delta_i=1-4\delta$, as
\begin{align*}
    b^{(i)} \leq c_1 \| \theta \| + c_2
\end{align*}
for all $i$, where $c_1,c_2 > 0$ are some universal constants, which are obtained from summing an infinite geometric series with decaying step size.
\end{proof}


\begin{proof}[Proof of Corollary~\ref{thm:norm}]
The cumulative regret of \texttt{ALB} is given by
\begin{align*}
    R(n) \leq \sum_{i=1}^N R^{\texttt{OFUL}}(n_i,\delta_i,b^{(i)}),
\end{align*}
where $N$ denotes the total number of epochs and $R^{\texttt{OFUL}}(n_i,\delta_i,b^{(i)})$ denotes the cumulative regret of \texttt{OFUL}, when it is run with confidence level $\delta_i$ and norm upper bound $b^{(i)}$ for $n_i$ episodes. Using the result of \citet{oful}, we have
\begin{align*}
    R^{\texttt{OFUL}}(n_i,\delta_i,b^{(i)}) =\mathcal{O}\left(b_i\sqrt{dn_i\log n_i} + d\sqrt{n_i\log n_i\log(n_i/\delta)}\right)
\end{align*}
with probability at least $1-\delta_i$. Now, using Lemma~\ref{lem:sequence}, we obtain
\begin{align*}
    R(n) \leq (c_1 \| \theta \| + c_2) \sum_{i=1}^N \mathcal{O}\left( \sqrt{dn_i\log n_i}\right) + \sum_{i=1}^N \mathcal{O}\left(d\sqrt{n_i\log n_i\log(n_i/\delta)}\right)
\end{align*}
with probability at least $1-4\delta-\sum_{i}\delta_i$. Substituting $n_i = 2^{i-1} n_1$ and $\delta_i = \frac{\delta}{2^{i-1}}$, we get
\begin{align*}
    R(n) \leq  (c_1 \| \theta \| + c_2) \sum_{i=1}^N \mathcal{O}\left( \sqrt{i\,dn_i\log n_1}\right) + \sum_{i=1}^N \mathcal{O}\left(\mathsf{poly}(i)\,d\sqrt{n_i\log n_1\log(n_1/\delta)}\right)
\end{align*}
with probability at least $1-4\delta -2\delta=1-6\delta$.
Using the above expression, we get the regret bound
\begin{align*}
    R(n) & \leq   \mathcal{O}\left((c_1 \| \theta \| + c_2) \sqrt{d\log n_1}+d\sqrt{\log n_1\log(n_1/\delta)} \right) \sum_{i=1}^N \mathsf{poly}(i)\sqrt{n_i}\\
    & \leq   \mathcal{O}\left((c_1 \| \theta \| + c_2)\sqrt{d\log n_1}+d\sqrt{\log n_1\log(n_1/\delta)} \right) \mathsf{poly}(N)\,\, \sum_{i=1}^N  \sqrt{n_i} \\
    & \leq    \mathcal{O}\left((c_1 \| \theta \| + c_2) \sqrt{d\log n_1}+d\sqrt{\log n_1\log(n_1/\delta)} \right) \mathsf{polylog}(n/n_1)\,\, \sum_{i=1}^N  \sqrt{n_i} \\
    & \leq   \mathcal{O}\left((c_1 \| \theta \| + c_2) \sqrt{d\log n_1}+d\sqrt{\log n_1\log(n_1/\delta)} \right) \mathsf{polylog}(n/n_1) \sqrt{n}\\
    & = \mathcal{O}\left(\left(\| \theta\| \sqrt{dn\log n_1}+d\sqrt{n\log n_1\log(n_1/\delta)} \right)\mathsf{polylog}(n/n_1) \right),
\end{align*}
where we have used that $N=\cO\left(\log (n/n_1)\right)$, and $\sum_{i=1}^N \sqrt{n_i} = \cO(\sqrt{n})$.
The above regret bound holds with probability greater than or equal to $1- 6\delta$, which completes the proof.
\end{proof}
\section{CLUSTERING IN MULTI AGENT BANDITS: PROOFS OF RESULT}
\label{sec:app-cluster}

\begin{proof}[Proof of Lemma \ref{lem:cluster}]
Let us look at the parameter estimate of agent $i$, and without loss of generality, assume that agent $i$ belongs to cluster $j$. Since we let the agents play \texttt{OFUL} for $n$ time steps, from \cite{oful}, for agent $i$, we obtain
\begin{align*}
    \|\Hat{\theta}^{(i)} - \theta^*_j \|_{\Bar{V}_n} \leq 2 \sqrt{d \log(n/\delta)},
\end{align*}
where $\Bar{V}_n = \sum_{t=1}^n x_{{A_t},t} x_{{A_t},t}^\top + \lambda I$, where $A_t$ is the action of agent $i$ at time $t$. The above holds with probability at least $1-\delta$. Furthermore, we assume that \texttt{OFUL} is run with  regularization parameter $\lambda$ chosen as $\mathcal{O}(1)$. Continuing, we obtain
\begin{align*}
    \sqrt{\lambda_{\min}(\Bar{V}_n)} \, \|\Hat{\theta}^{(i)} - \theta^*_j \| \leq 2 \sqrt{d \log(n/\delta)}.
\end{align*}
We now use Assumption~\ref{ass:high-prob}, with  $\lambda = \mathcal{O}(1)$. Using Weyl's inequality, we obtain, $\lambda_{\min}(\Bar{V}_n) > \frac{\gamma}{2}\sqrt{n} \sqrt{\log (d/\delta)}$, with probability at least $1-\delta$. With this we have,
\begin{align*}
    \|\Hat{\theta}^{(i)} - \theta^*_j \| \leq \frac{2\sqrt{2}}{\sqrt{\gamma} n^{1/4}} \frac{1}{\log(d/\delta)^{1/4}} \sqrt{d \log(n/\delta)} = \frac{1}{n^{1/4}}\,\frac{2\sqrt{2} \sqrt{d}}{\sqrt{\gamma}} \sqrt{\frac{\log(n/\delta)}{\log^{1/2}(d/\delta)}}.
\end{align*}
From the separation condition on $\Delta$ and the choice of threshold $\eta$, we obtain
\begin{align*}
    \|\Hat{\theta}^{(i)} - \theta^*_j \| \leq \frac{\eta}{2},
\end{align*}
with probability at least $1-2\delta$. We now consider $2$ cases:

\textbf{Case I: Agents $i$ and $i'$ belong to same cluster $j$:} In this setup we have
\begin{align*}
    \|\Hat{\theta}^{(i)} - \Hat{\theta}^{(i')}\| &\leq \|\Hat{\theta}^{(i)} - \theta^*_j\| + \|\Hat{\theta}^{(i')} - \theta^*_j\| \\
    & \leq \frac{\eta}{2} + \frac{\eta}{2}= \eta,
\end{align*}
with probability at least $1-4\delta$.

\textbf{Case II: Agents $i$ and $i'$ belong to different cluster $j$ and $j'$ respectively:} In this case we have
\begin{align*}
    \|\Hat{\theta}^{(i)} - \Hat{\theta}^{(i')}\| &= \|(\Hat{\theta}^{(i)} -\theta^*_j) + (\theta^*_{j'} - \Hat{\theta}^{(i')}) - (\theta^*_{j'} - \theta^*_j) \| \\
    & \geq \|\theta^*_j -\theta^*_{j'} \| - \|(\Hat{\theta}^{(i)} -\theta^*_j)\| - \|(\Hat{\theta}^{(i')} -\theta^*_{j'})\| \\
    & \geq \Delta - \frac{\eta}{2} - \frac{\eta}{2} = \eta,
\end{align*}
with probability at least $1-4\delta$, where we use the condition that $\Delta > 2\eta$.

From the above 2 cases, if we select the threshold to be $\eta = \frac{\Delta}{2}$, every pair of machines are correctly clustered with probability at least $1-4\delta$. Taking the union bound over all $\binom{N}{2}$ pairs, we obtain the result.

\paragraph{Regret of agent $i$:} We now characterize the regret of agent $i$. Since agent $i$ played \texttt{OFUL} for $n$ steps, from \cite{oful}, we obtain
\begin{align*}
     R_i \leq \Tilde{\mathcal{O}}\left( d\sqrt{n}  \right) \log(1/\delta_1)
 \end{align*}
 with probability at least $1-\delta_1$.

\end{proof}

\section{HIGH PROBABILITY LOWER BOUND}
\label{sec:high_probability}
We note that any optimistic algorithm chooses actions based on a high probability confidence set of the true $\theta$, namely
\begin{align*}
    \|\theta - \hat{\theta}\|_{V_n} \leq c\sqrt{\ln{n}}
\end{align*}
Naturally it would be very useful if we could get an estimate on the lower bound for $\lambda_{\min}(V_n)$ instead of what we have on $\lambda_{\min}(\mathbb{E}[V_n])$. More precisely we would like a theorem which suggests $\lambda_{\min}(V_n) \geq \sqrt{n}$ for all $n$ more than some $n_0$, where $n_0$ depends upon the specific algorithm chosen and the geometry of the action space with high probability given that the algorithmic regret is $O(\sqrt{n})$. 

For the time being this seems difficult in the present setup. However what we would like to emphasize is what such a result could establish. We illustrate two practical problems in the Applications sections one in Model Selection and the other in  Clustering. For the time being we should emphasize that a direct corollary of this would be that a good regret algorithm would result in a best arm identification, given that the arm set is diverse enough. In this section we aim to discuss Assumption~\ref{ass:high-prob}.
We show that under a technical assumption, we can prove a high probability variant of Theorem~\ref{thm:main-result} as well as supplement the high probability claim by experimental observations added in section \ref{sec:experiments}.

\begin{assumption}[Stability Assumption]
\label{asm:stability}
Let any algorithm $\pi$, satisfy, for all $k \in [n]$, the following:
\begin{align*}
 \lambda_{\max}\Big(\expect{\sum_{i=k+1}^n A_iA_i^T|\mathcal{F}_k} - \expect{\sum_{i=k+1}^n A_i A_i^T|\mathcal{F}_{k-1}}\Big)\leq C \textit{  almost surely} ,
\end{align*}
where $\{A_i\}_{i=1}^k$ are the actions selected by $\pi$, and $\mathcal{F}_k$ is a filtration such that $\{A_i\}_{i=1}^k$ are adapted to $\mathcal{F}_k$ and $C$ is some positive constant.  
\end{assumption}

We agree that bridging the gap between in-expectation and in-high-probability results is an important open direction. Stability is one such technical tool.

\paragraph{Example of stability assumption being satisfied} Consider $K$ armed bandit problem with arm means $\{\mu_i\}_{i=1}^K$. In this setting, standard algorithms like UCB or TS suffers instance dependent regret of $O(1)$ \footnote{In our notation of Big O, we suppress the poly-logarithmic factors.}. Thus the number of times any sub-optimal arm is played is at most a constant number of times in the entire horizon. This implies 
\begin{align*}
    \norm{\expect{\sum_{i=k+1}^n A_iA_i^T|\mathcal{F}_k} - \expect{\sum_{i=k+1}^n A_i A_i^T|\mathcal{F}_{k-1}}} \leq \sum_{i=k+1}^{O(1)}2\norm{A_k}^2~,
\end{align*}
for any $k$, and hence the stability assumption is trivially satisfied.\\

The assumption essentially implies that playing a random action in the middle of an algorithmic run will not affect the overall trajectory of the actions-played drastically. Under this assumption, we can show that $\lambda_{\min}(V_n) \geq \Omega(\sqrt{n})$ with high probability.
To do so, we shall use matrix version of the Azuma-Hoeffding Inequality \cite{tropp2012user, tropp2015introduction}. For completeness we present the result in the appendix (see Lemma \ref{thm:matrix_azuma}).
We shall first prove a minimum eigenvalue analogue of the matrix Azuma-Hoeffding Inequality.

\begin{corollary}Consider a finitely adapted sequence $\{\Delta_k\}$ of self adjoint matrices in dimension $d$ and a fixed sequence of matrices $\{A_k\}$ of self-adjoint matrices such that $\mathbf{E}[\Delta_k |\mathcal{F}_{k-1}] = 0$ and $\Delta_k^2 \preccurlyeq A_k^2$ \textit{almost surely}. Then for all $t\geq 0$, we have
\label{cor: matrix_azuma}
$$\mathbf{P}\{\lambda_{\min}(\sum_{k=1}^n \Delta_k) \leq -t\} \leq d e^{-\frac{t^2}{8\sigma^2}}$$
\end{corollary}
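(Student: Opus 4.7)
The plan is to reduce this minimum-eigenvalue tail bound to the standard matrix Azuma--Hoeffding inequality (Lemma~\ref{thm:matrix_azuma}) by a sign-flip argument. The key identity is $\lambda_{\min}(X) = -\lambda_{\max}(-X)$ for any self-adjoint $X$, which lets us translate a lower tail on $\lambda_{\min}$ into an upper tail on $\lambda_{\max}$ of the negated sum.

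Concretely, I would first define $\widetilde{\Delta}_k := -\Delta_k$ and verify that $\{\widetilde{\Delta}_k\}$ inherits all the hypotheses needed to apply Lemma~\ref{thm:matrix_azuma}: each $\widetilde{\Delta}_k$ is self-adjoint and adapted to $\mathcal{F}_k$; by linearity of conditional expectation, $\mathbf{E}[\widetilde{\Delta}_k \mid \mathcal{F}_{k-1}] = -\mathbf{E}[\Delta_k \mid \mathcal{F}_{k-1}] = 0$; and crucially $\widetilde{\Delta}_k^2 = (-\Delta_k)^2 = \Delta_k^2 \preccurlyeq A_k^2$, so the same fixed variance-proxy sequence $\{A_k\}$ works. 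Hence the matrix Azuma--Hoeffding inequality applies verbatim to $\{\widetilde{\Delta}_k\}$ with the same $\sigma^2 = \|\sum_{k=1}^n A_k^2\|$.

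Then the chain of equivalences
\[
\Bigl\{ \lambda_{\min}\Bigl(\sum_{k=1}^n \Delta_k\Bigr) \leq -t \Bigr\}
= \Bigl\{ -\lambda_{\max}\Bigl(-\sum_{k=1}^n \Delta_k\Bigr) \leq -t \Bigr\}
= \Bigl\{ \lambda_{\max}\Bigl(\sum_{k=1}^n \widetilde{\Delta}_k\Bigr) \geq t \Bigr\}
\]
converts the desired probability into the standard upper-tail event. Applying Lemma~\ref{thm:matrix_azuma} to $\{\widetilde{\Delta}_k\}$ yields the bound $d \exp(-t^2/(8\sigma^2))$, completing the proof.

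There is no real obstacle here: the whole argument is a one-line symmetrization, and its correctness rests on the two easy observations that (i) negation preserves the martingale-difference property, and (ii) the variance-proxy condition $\Delta_k^2 \preccurlyeq A_k^2$ is invariant under the sign flip because it only involves the square. The only thing to be slightly careful about is making explicit that the $\sigma^2$ appearing in the conclusion is the same $\sigma^2 = \|\sum_k A_k^2\|$ as in Lemma~\ref{thm:matrix_azuma}, so that the corollary is usable as a black box in the subsequent high-probability arguments.
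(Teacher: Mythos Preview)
Your proposal is correct and matches the paper's own proof essentially line for line: the paper also uses the identity $\lambda_{\max}(-A) = -\lambda_{\min}(A)$ to rewrite the event and then applies Theorem~\ref{thm:matrix_azuma} to the sequence $\{-\Delta_k\}$. Your write-up is in fact slightly more careful in explicitly verifying that the negated sequence inherits the martingale-difference and variance-proxy hypotheses, which the paper leaves implicit.
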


\begin{proof}
Note that
\begin{align*}
 \mathbf{P}\{\lambda_{\min}(\sum_{k=1}^n \Delta_k) \leq -t\}=\mathbf{P}\{-\lambda_{\min}(\sum_{k=1}^n \Delta_k) \geq t\}=\mathbf{P}\{\lambda_{\max}(-\sum_{k=1}^n \Delta_k) \geq t\},   
\end{align*}
as $\lambda_{\max}(-A) = - \lambda_{\min}(A)$. Now, the proof follows by applying Theorem~\ref{thm:matrix_azuma} to the sequence $\{-\Delta_k\}$.
\end{proof}

Now we are ready to state and prove the high probability version of Theorem \ref{thm:main-result}.

\begin{theorem}
\label{thm:high-probability}
Fix a $\delta \in (0,1]$ and $n \geq n_0$,. Then, under the hypothesis of Theorem \ref{thm:main-result} and Assumption \ref{asm:stability}, with probability at least $1-\delta$, 
\begin{align*}
  \lambda_{\min}(V_n) \geq \gamma\sqrt{n} - (2+C)\sqrt{8n\ln{\frac{d}{\delta}}}.
 \end{align*}
Furthermore, if $2+C < \frac{\gamma}{\ln{d}}$, then for any $\delta \in (de^{-\frac{\gamma^2}{8(2+C)^2}}, 1)$, there exists a positive constant $\gamma_2$ such that
\begin{align*}
    \lambda_{\min}(V_n) \geq \gamma_2 \sqrt{n}
\end{align*}
with probability more than $1-\delta$. The constant $\gamma_2$ can be calculated as $\gamma_2 = \gamma - (2+C)\sqrt{8\ln{\frac{d}{\delta}}}$.
\end{theorem}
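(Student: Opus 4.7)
The plan is to interpret $V_n - \bar G_n$ as the terminal value of a matrix-valued Doob martingale, bound the per-step increments in spectral norm using Assumption~\ref{asm:stability}, invoke the minimum-eigenvalue Azuma bound of Corollary~\ref{cor: matrix_azuma}, and then combine with Theorem~\ref{thm:main-result} via Weyl's inequality.

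Concretely, I would let $\mathcal{F}_k$ denote the natural filtration and define the Doob martingale
\begin{align*}
M_k \;=\; \mathbb{E}\!\left[\sum_{i=1}^{n} A_i A_i^\top \,\Big|\, \mathcal{F}_k\right], \qquad k = 0,1,\ldots,n,
\end{align*}
so that $M_0 = \bar G_n$ and $M_n = V_n$. The increments $\Delta_k = M_k - M_{k-1}$ are self-adjoint with $\mathbb{E}[\Delta_k \mid \mathcal{F}_{k-1}] = 0$, and they split as
\begin{align*}
\Delta_k \;=\; \bigl(A_k A_k^\top - \mathbb{E}[A_k A_k^\top \mid \mathcal{F}_{k-1}]\bigr) \;+\; \Bigl(\mathbb{E}\bigl[\!\textstyle\sum_{i=k+1}^{n}\! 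A_i A_i^\top \mid \mathcal{F}_k\bigr] - \mathbb{E}\bigl[\!\textstyle\sum_{i=k+1}^{n}\! A_i A_i^\top \mid \mathcal{F}_{k-1}\bigr]\Bigr).
\end{align*}
The first bracket is bounded in spectral norm by $2$ (using $\|A_k\| \leq 1$ on $\cS^{d-1}$, or the action-space bound in general), while the second bracket is bounded by $C$ exactly by Assumption~\ref{asm:stability}. Hence $\|\Delta_k\| \leq 2+C$ almost surely, which gives the pointwise domination $\Delta_k^2 \preccurlyeq (2+C)^2 I_d$ required by Corollary~\ref{cor: matrix_azuma}.

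Applying that corollary to $\sum_{k=1}^n \Delta_k = V_n - \bar G_n$ with $\sigma^2 = n(2+C)^2$ and $t = (2+C)\sqrt{8n\ln(d/\delta)}$ yields
\begin{align*}
\mathbb{P}\Bigl\{\lambda_{\min}(V_n - \bar G_n) \leq -(2+C)\sqrt{8n\ln(d/\delta)}\Bigr\} \;\leq\; \delta.
\end{align*}
Weyl's inequality (Lemma~\ref{lemma:Weyl's}) gives $\lambda_{\min}(V_n) \geq \lambda_{\min}(\bar G_n) + \lambda_{\min}(V_n - \bar G_n)$, and combining with Theorem~\ref{thm:main-result} (which provides $\lambda_{\min}(\bar G_n) \geq \gamma\sqrt{n}$ for $n \geq n_0$) produces the first inequality of the statement. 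For the second claim, the condition $2+C < \gamma/\ln d$ together with $\delta > d\,e^{-\gamma^2/(8(2+C)^2)}$ is exactly what is needed to ensure $\gamma - (2+C)\sqrt{8\ln(d/\delta)} > 0$; setting $\gamma_2$ to this positive quantity and factoring out $\sqrt{n}$ finishes the argument.

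The only delicate step is verifying the increment bound for $\Delta_k$: the contribution $A_k A_k^\top - \mathbb{E}[A_k A_k^\top \mid \mathcal{F}_{k-1}]$ is immediately controlled by the compactness of the action set, but the residual conditional difference over future rounds requires precisely the stability assumption, so that the algorithm's reaction to the realization at round $k$ does not propagate arbitrarily through the remaining trajectory. Once that bound is in hand, the matrix Azuma machinery and Weyl's inequality give the result essentially for free.
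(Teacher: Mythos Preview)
Your proposal is correct and follows essentially the same approach as the paper: build the Doob martingale $M_k=\mathbb{E}[V_n\mid\mathcal{F}_k]$, bound each increment in spectral norm by $2+C$ via the arm-norm bound plus Assumption~\ref{asm:stability}, apply the matrix Azuma corollary, and finish with Weyl's inequality together with Theorem~\ref{thm:main-result}. The only cosmetic difference is that the paper phrases the Weyl step as $\lambda_{\min}(V_n)+\lambda_{\max}(-\bar G_n)\ge\lambda_{\min}(V_n-\bar G_n)$, which is equivalent to your formulation.
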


\begin{proof}
Let us choose the filtration $\mathcal{F}_k = \sigma(A_1,\cdots,A_k)$ for $k=1,\cdots,n$, as the natural filtration associated with the action-sequence $\{A_i\}_{i=1}^k$ and define the martingale difference sequence $\{\Delta_k\}_{k=1}^n$ as
\begin{align*}
\Delta_k = \mathbb{E}[\sum_{i=1}^nA_iA_i^\top|\mathcal{F}_k] - \mathbb{E}[\sum_{i=1}^nA_iA_i^\top|\mathcal{F}_{k-1}]    
\end{align*}
Note that, by the stability property of conditional expectation, we obtain
\begin{align*}
   \Delta_k &= \mathbb{E}[\sum_{i=1}^nA_iA_i^\top|\mathcal{F}_k] - \mathbb{E}[\sum_{i=1}^nA_iA_i^\top|\mathcal{F}_{k-1}] 
   \\ &= A_kA_k^\top - \mathbb{E}[A_kA_k^\top|\mathcal{F}_{k-1}] + \mathbb{E}[\sum_{i=k+1}^nA_iA_i^\top|\mathcal{F}_k] - \mathbb{E}[\sum_{i=k+1}^nA_iA_i^\top|\mathcal{F}_{k-1}],
\end{align*}
Therefore, we get by triangle inequality
\begin{align*}
    \|\Delta_k\| \leq \norm{A_kA_k^\top - \mathbb{E}[A_kA_k^\top|\mathcal{F}_{k-1}]} + \norm{\mathbb{E}[\sum_{i=k+1}^nA_iA_i^\top|\mathcal{F}_k] - \mathbb{E}[\sum_{i=k+1}^nA_iA_i^\top|\mathcal{F}_{k-1}]}
     \leq 2 + C
\end{align*}
where the last inequality follows by by assumption on the norm of the arms and the stability assumption \ref{asm:stability}. Therefore, by orthogonality of the martingale difference sequences, we have
\begin{align*}
    \|\Delta_k^2\| = \|\Delta_k\|^2 \leq (2+C)^2 \triangleq D^2,
\end{align*}
where we define the quantity $(2+C)$ as $D$.
Thus we have $\Delta_k^2 \preccurlyeq D^2\mathbb{I}_{d\times d}$ almost surely, where $\mathbf{I}_{d\times d}$ is the $d$-dimensional identity element and from the definition of $\sigma$ in Theorem \ref{thm:matrix_azuma}, $\sigma^2 = \|\sum_{i=1}^n D^2\mathbf{I}_{d\times d}\| = nD^2$.

Furthermore, note that $\sum_{k=1}^n \Delta_k = \sum_{i=1}^nA_iA_i^\top - \mathbb{E}[\sum_{i=1}^nA_iA_i^\top] \triangleq V_n - \mathbb{E}[V_n]$.
Then, using Corollary \ref{cor: matrix_azuma}, we have for any $t\geq 0$, that
\begin{align*}
 \mathbf{P}\{\lambda_{\min}(\sum_{k=1}^n \Delta_k) \leq -t\} = \mathbf{P}\{\lambda_{\min}(V_n - \mathbb{E}[V_n]) \leq -t\} \leq d e^{-\frac{t^2}{8nD^2}}\;.  
\end{align*}
Let us choose $t \geq \sqrt{8nD^2\ln{\frac{d}{\delta}}}$ for any $\delta \in (0,1)$.
Then, we have
\begin{align*}
    \lambda_{\min}(V_n - \mathbb{E}[V_n]) \geq - \sqrt{8nD^2\ln{\frac{d}{\delta}}}
\end{align*}
with probability more than $1-\delta$.
From Weyl's inequality (Lemma \ref{lemma:Weyl's}), we now obtain
\begin{align*}
    \lambda_{\min}(V_n) - \lambda_{\min}(\mathbb{E}[V_n]) = \lambda_{\min}(V_n) + \lambda_{\max}(-\mathbb{E}[V_n]) \geq \lambda_{\min}(V_n - \mathbf{E}[V_n]) \geq -\sqrt{8nD^2\ln{\frac{d}{\delta}}}
\end{align*}
with probability more than $1-\delta$.
Thus we get,
\begin{align*}
  \lambda_{\min}(V_n) \geq \lambda_{\min}(\mathbf{E}[V_n]) - \sqrt{8nD^2\ln{\frac{d}{\delta}}}  
\end{align*}
 with probability more than $1-\delta$.
Now, From Theorem \ref{thm:main-result} we have $\lambda_{\min}(\mathbb{E}[V_n]) \geq \gamma\sqrt{n}$ for some constant $\gamma > 0$ and for all $n \geq n_0$ for some $n_0$ which depends on $\gamma$ and the algorithmic constant $c$. This gives us 
\begin{align*}
  \lambda_{\min}(V_n) \geq \gamma\sqrt{n} - \sqrt{8nD^2\ln{\frac{d}{\delta}}}  \end{align*}
with probability more than $1-\delta$ for all $n \geq n_0$ hence proving the first part of the theorem.

For the second part of the theorem, assume $2+C < \frac{\gamma}{\ln{d}}$ then we have $d e^{\frac{\gamma^2}{8(1+C)^2}} < 1$. Thus we can find a $\delta$ in the open interval $(de^{\frac{\gamma^2}{8(1+C)^2}}, 1)$. Choosing such a $\delta$, we see $\gamma_2 \triangleq \gamma - \sqrt{8D^2\ln{\frac{d}{\delta}}} > 0$. Thus we have $$\lambda_{\min}(V_n) \geq \gamma_2 \sqrt{n}$$ with probability more than $1-\delta$ for all $n \geq n_0$

\end{proof}

\section{TECHNICAL LEMMAS}
\label{sec:tech_lemma}

\begin{lemma}[Information Inequality] \cite{kaufmann2016complexity}
\label{lemma:Garivier_Kauffmann}
Let $\theta$ and $\theta'$ be two bandit parameters with policy induced measures $\mathbb{P}$ and $\mathbb{P'}$. Then for any measurable $Z \in (0,1)$, we have
\begin{align*}
    \mathrm{KL}(\mathbb{P} || \mathbb{P'}) \geq \mathrm{KL}(\mathrm{Ber}(\mathbb{E}_{\theta}[Z]) || \mathrm{Ber}(\mathbb{E}_{\theta'}[Z]))
\end{align*}
\end{lemma}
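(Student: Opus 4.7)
The statement is a special case of the data-processing inequality for KL divergence, applied to a cleverly chosen Bernoulli-valued statistic constructed from $Z$. The plan is to reduce the $(0,1)$-valued random variable $Z$ to a $\{0,1\}$-valued random variable $B$ whose mean under $\mathbb{P}$ (resp.\ $\mathbb{P}'$) is exactly $\mathbb{E}_\theta[Z]$ (resp.\ $\mathbb{E}_{\theta'}[Z]$), and then invoke contraction of KL under measurable maps.

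First, I would enlarge the underlying probability space by appending an independent uniform random variable $U \sim \mathrm{Unif}(0,1)$ that is independent of every other source of randomness under both measures. Writing $\tilde{\mathbb{P}} = \mathbb{P} \otimes \mathrm{Unif}(0,1)$ and $\tilde{\mathbb{P}}' = \mathbb{P}' \otimes \mathrm{Unif}(0,1)$, the Radon–Nikodym derivative factorizes so that $\mathrm{KL}(\tilde{\mathbb{P}} \,\|\, \tilde{\mathbb{P}}') = \mathrm{KL}(\mathbb{P} \,\|\, \mathbb{P}')$. Next, I would define the randomized test $B \triangleq \mathbf{1}\{U \leq Z\}$, which is a measurable function of the pair $(Z, U)$. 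By Fubini and the law of iterated expectation,
\begin{align*}
\tilde{\mathbb{P}}(B = 1) \;=\; \mathbb{E}_\theta\!\left[\mathbb{E}[\mathbf{1}\{U \leq Z\} \mid Z]\right] \;=\; \mathbb{E}_\theta[Z],
\end{align*}
and identically $\tilde{\mathbb{P}}'(B = 1) = \mathbb{E}_{\theta'}[Z]$. Thus the pushforward of $\tilde{\mathbb{P}}$ (resp.\ $\tilde{\mathbb{P}}'$) by $B$ is $\mathrm{Ber}(\mathbb{E}_\theta[Z])$ (resp.\ $\mathrm{Ber}(\mathbb{E}_{\theta'}[Z])$).

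Now I would invoke the data-processing inequality for KL divergence: for any two probability measures $\mu, \nu$ on a measurable space and any measurable map $f$, the pushforward measures satisfy $\mathrm{KL}(f_\#\mu \,\|\, f_\#\nu) \leq \mathrm{KL}(\mu \,\|\, \nu)$. Applying this to $f : (\omega, u) \mapsto B(\omega, u) = \mathbf{1}\{u \leq Z(\omega)\}$ on the enlarged space gives
\begin{align*}
\mathrm{KL}\bigl(\mathrm{Ber}(\mathbb{E}_\theta[Z]) \,\|\, \mathrm{Ber}(\mathbb{E}_{\theta'}[Z])\bigr) \;\leq\; \mathrm{KL}(\tilde{\mathbb{P}} \,\|\, \tilde{\mathbb{P}}') \;=\; \mathrm{KL}(\mathbb{P} \,\|\, \mathbb{P}'),
\end{align*}
which is the claimed inequality.

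The only delicate point is justifying the data-processing inequality at the level of general measurable spaces, for which the cleanest route is either Jensen's inequality applied to the convex function $x \mapsto x \log x$ against conditional expectations (using $\frac{d\nu}{d\mu}$ as the test function under $f^{-1}$-measurable $\sigma$-algebra), or equivalently the variational (Donsker–Varadhan) representation of KL, both of which are standard. I do not expect any real obstacle beyond that standard bookkeeping; the key creative step is the construction of the coupling $B = \mathbf{1}\{U \leq Z\}$, which converts the continuous statistic $Z$ into a Bernoulli one while preserving the two target means.
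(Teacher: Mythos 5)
Your proof is correct, and since the paper states this lemma only by citation to \citet{kaufmann2016complexity} without reproducing a proof, your argument is exactly the standard one behind the cited result: enlarge the space with an independent $U\sim\mathrm{Unif}(0,1)$ (which leaves $\mathrm{KL}(\mathbb{P}\,\|\,\mathbb{P}')$ unchanged), convert $Z$ to the Bernoulli statistic $B=\mathbf{1}\{U\le Z\}$ whose pushforward laws are $\mathrm{Ber}(\mathbb{E}_\theta[Z])$ and $\mathrm{Ber}(\mathbb{E}_{\theta'}[Z])$, and apply the data-processing inequality for KL under the measurable map $(\omega,u)\mapsto B(\omega,u)$. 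I see no gap; the only implicit requirement, that $Z$ be measurable with respect to the observed action--reward history so that its law is determined by $\mathbb{P}$ and $\mathbb{P}'$, is exactly the hypothesis of the lemma.
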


\begin{lemma}[Divergence Decomposition]\cite{lattimore2017end, lattimore2020bandit}
\label{lemma:Divergence_Decomposition}
Let $\mathbb{P}$ and $\mathbb{P'}$ be the action-observation sequence for a fixed bandit policy $\pi$ interacting with a linear bandit with standard Gaussian noise and parameters  
$\theta$ and $\theta'$ respectively.We have
\begin{align*}
    \mathrm{KL}(\mathbb{P} || \mathbb{P'}) = \frac{1}{2}\mathbb{E}_\theta\bigg[\sum_{t=1}^n\langle A_t, \theta - \theta'\rangle^2\bigg] =  \frac{1}{2}\|\theta - \theta'\|^2_{\bar{G}_n}
\end{align*}
\end{lemma}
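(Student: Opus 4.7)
The plan is to obtain the identity by a chain-rule (telescoping) decomposition of the KL divergence over the $n$ rounds, exploiting the fact that only the reward distributions (not the action-selection kernels) differ between the two bandit measures, and then reducing the per-round KL to an elementary computation between two univariate Gaussians with equal variance.

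Concretely, I would first set up the canonical probability space on which the trajectory $(A_1, Y_1, A_2, Y_2, \ldots, A_n, Y_n)$ lives, and write $\mathbb{P}$ and $\mathbb{P}'$ as the laws of this trajectory under bandit parameters $\theta$ and $\theta'$ with the same fixed policy $\pi$. Using the chain rule for relative entropy, I would decompose
\begin{align*}
\mathrm{KL}(\mathbb{P} \,\|\, \mathbb{P}') = \sum_{t=1}^n \mathbb{E}_\theta\Bigl[\mathrm{KL}\bigl(\mathbb{P}(A_t, Y_t \mid H_{t-1}) \,\|\, \mathbb{P}'(A_t, Y_t \mid H_{t-1})\bigr)\Bigr],
\end{align*}
where $H_{t-1}$ denotes the history up through time $t-1$. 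The next step is to factor each conditional distribution as $\mathbb{P}(A_t \mid H_{t-1}) \cdot \mathbb{P}(Y_t \mid A_t, H_{t-1})$, and observe that the action kernel $\mathbb{P}(A_t \mid H_{t-1})$ coincides under $\mathbb{P}$ and $\mathbb{P}'$ because the policy $\pi$ is a fixed (parameter-free) measurable map from histories to action distributions. Hence the action term contributes zero to the KL, and only the reward term survives.

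For the reward term, conditional on $A_t$, we have $Y_t \sim \mathcal{N}(\langle A_t, \theta \rangle, 1)$ under $\mathbb{P}$ and $Y_t \sim \mathcal{N}(\langle A_t, \theta' \rangle, 1)$ under $\mathbb{P}'$. The standard formula for the KL divergence between two unit-variance Gaussians differing only in mean yields
\begin{align*}
\mathrm{KL}\bigl(\mathcal{N}(\langle A_t, \theta\rangle, 1) \,\|\, \mathcal{N}(\langle A_t, \theta'\rangle, 1)\bigr) = \tfrac{1}{2}\,\langle A_t, \theta - \theta'\rangle^2.
\end{align*}
Plugging this into the chain-rule decomposition and summing gives the first equality
\begin{align*}
\mathrm{KL}(\mathbb{P} \,\|\, \mathbb{P}') = \tfrac{1}{2}\,\mathbb{E}_\theta\!\left[\sum_{t=1}^n \langle A_t, \theta - \theta'\rangle^2\right].
\end{align*}

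For the second equality, I would rewrite $\langle A_t, \theta - \theta'\rangle^2 = (\theta - \theta')^\top A_t A_t^\top (\theta - \theta')$, pull the (deterministic) vector $\theta - \theta'$ outside the expectation by linearity, and invoke the definition $\bar{G}_n = \mathbb{E}_\theta\!\left[\sum_{t=1}^n A_t A_t^\top\right]$ to conclude that the expression equals $\tfrac{1}{2}\,(\theta - \theta')^\top \bar{G}_n (\theta - \theta') = \tfrac{1}{2}\,\|\theta - \theta'\|_{\bar{G}_n}^2$. The only mildly delicate point is the first one -- rigorously justifying that the action kernel is common to both measures (so its KL contribution vanishes), which requires invoking the measurability of $\pi$ with respect to the history $\sigma$-algebra; beyond that the argument is a routine chain-rule-plus-Gaussian-KL computation.
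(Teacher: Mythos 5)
Your proof is correct and is exactly the standard argument from the cited sources (\citet{lattimore2020bandit}, Ch.~15): the paper itself states this lemma without proof, deferring to those references, and the chain-rule decomposition with cancellation of the common action kernel followed by the unit-variance Gaussian KL computation is precisely how it is established there. The final rewriting via $\langle A_t,\theta-\theta'\rangle^2=(\theta-\theta')^\top A_tA_t^\top(\theta-\theta')$ and the definition of $\bar{G}_n$ is also handled correctly.
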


\begin{lemma}[Davis-Kahan $\sin{\theta}$ theorem] \cite{stewart1990matrix}
\label{lemma:Davis_Kahan}
Let $A$ and $H$ be two symmetric $d\times d$ matrices.
Define $\Tilde{A} = A + H$
Let the spectral decomposition of $A$ and $\Tilde{A}$ be $A = \sum_{i=1}^d \lambda_iu_iu_i^T$ and $\Tilde{A} = \sum_{i=1}^d \Tilde{\lambda}_i\Tilde{u}_i\Tilde{u}_i^T$, respectively,
where $\lambda_1 \gg \lambda_2 \geq \cdots \geq \lambda_d$ and $\Tilde{\lambda}_1 \gg \Tilde{\lambda}_2 \geq \cdots \geq \Tilde{\lambda}_d$.
Then 
\begin{align*}
    \|u_1^T \begin{bmatrix}
\Tilde{u}_2 & \Tilde{u}_3 & \cdots & \Tilde{u}_d\\
\end{bmatrix}\|_2 \leq \frac{\|H\|}{\delta},
\end{align*}
where $\delta$ is the eigenvalue separation between $\lambda_1$ and $\Tilde{\lambda}_2,\cdots,\Tilde{\lambda}_d$. 
\end{lemma}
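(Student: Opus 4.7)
The plan is to reduce the subspace-angle bound to an elementary Sylvester-type identity. Let $E \in \mathbb{R}^{d \times (d-1)}$ denote the matrix whose columns are $\tilde{u}_2, \ldots, \tilde{u}_d$, and let $\tilde{\Lambda}_\perp = \mathrm{diag}(\tilde{\lambda}_2, \ldots, \tilde{\lambda}_d)$. Since $\tilde{A}$ is symmetric and the $\tilde{u}_i$ are its eigenvectors, we have $\tilde{A} E = E \tilde{\Lambda}_\perp$, equivalently $E^\top \tilde{A} = \tilde{\Lambda}_\perp E^\top$. The quantity to control is exactly $\|u_1^\top E\|_2 = \|E^\top u_1\|_2$, so our goal will be an upper bound on this vector norm.

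The core step is to exploit the relation $A = \tilde{A} - H$ together with $A u_1 = \lambda_1 u_1$. Left-multiplying by $E^\top$ gives
\begin{align*}
\lambda_1\, E^\top u_1 \;=\; E^\top A u_1 \;=\; E^\top \tilde{A} u_1 - E^\top H u_1 \;=\; \tilde{\Lambda}_\perp\, E^\top u_1 - E^\top H u_1.
\end{align*}
Rearranging yields the Sylvester-type equation $(\lambda_1 I - \tilde{\Lambda}_\perp)\, E^\top u_1 = -E^\top H u_1$. Because $\lambda_1 I - \tilde{\Lambda}_\perp$ is diagonal with entries $\lambda_1 - \tilde{\lambda}_i$ for $i \geq 2$, the eigen-gap hypothesis $\delta = \min_{i \geq 2} |\lambda_1 - \tilde{\lambda}_i| > 0$ ensures invertibility and yields $\|(\lambda_1 I - \tilde{\Lambda}_\perp)^{-1}\| = 1/\delta$.

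Solving for $E^\top u_1$ and taking operator norms then produces
\begin{align*}
\|E^\top u_1\|_2 \;\leq\; \|(\lambda_1 I - \tilde{\Lambda}_\perp)^{-1}\|\,\|E^\top\|\,\|H\|\,\|u_1\|_2 \;\leq\; \frac{\|H\|}{\delta},
\end{align*}
using that $E$ has orthonormal columns so $\|E^\top\| = 1$ and $\|u_1\|_2 = 1$. This is exactly the claimed bound. The argument is only a few lines of block linear algebra, so there is no real technical obstacle; the one subtlety worth flagging is the interpretation of $\delta$ as a \emph{cross} separation between the unperturbed eigenvalue $\lambda_1$ of $A$ and the tail eigenvalues $\tilde{\lambda}_2, \ldots, \tilde{\lambda}_d$ of the perturbed $\tilde{A}$. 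In applications such as the proof sketch of Theorem~\ref{thm:main-result}, one first controls $\|H\|$ and then invokes Weyl's inequality (Lemma~\ref{lemma:Weyl's}) to translate this into the required lower bound on $\delta$, which then plugs back into the inequality above.
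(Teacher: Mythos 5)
Your proof is correct. The paper itself does not prove this lemma---it is imported verbatim from \citet{stewart1990matrix} as a technical tool---and your Sylvester-equation argument (left-multiplying $Au_1=\lambda_1 u_1$ by $E^\top$, using $E^\top\tilde{A}=\tilde{\Lambda}_\perp E^\top$, and inverting the diagonal resolvent $\lambda_1 I-\tilde{\Lambda}_\perp$ at cost $1/\delta$) is precisely the standard derivation of this single-eigenvector form of the $\sin\theta$ theorem, so there is nothing to compare against or correct. Your remark that $\delta$ is the \emph{cross} separation between the unperturbed $\lambda_1$ and the perturbed $\tilde{\lambda}_2,\dots,\tilde{\lambda}_d$ is also the right reading, and matches how the paper uses the lemma together with Weyl's inequality in the proof of Theorem~\ref{thm:main-result}.
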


\begin{lemma}[Weyl's Inequality]\cite{stewart1990matrix}
\label{lemma:Weyl's}
For symmetric $A$ and $H$ we have 
\begin{align*}
    \lambda_i(A+H)\leq \lambda_i(A) + \lambda_{\max}(H)
\end{align*}
\end{lemma}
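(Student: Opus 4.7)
The plan is to prove Weyl's inequality via the Courant--Fischer min-max characterization of eigenvalues of a symmetric matrix, which is the standard route and keeps the argument self-contained. Recall that with the paper's convention $\lambda_1 \geq \lambda_2 \geq \cdots \geq \lambda_d$, the Courant--Fischer theorem states
\begin{equation*}
\lambda_i(M) \;=\; \min_{\substack{S \subseteq \mathbb{R}^d \\ \dim S = d-i+1}} \; \max_{\substack{x \in S \\ \|x\|=1}} \; x^\top M x
\end{equation*}
for any symmetric $M$. I would quote this as the only external fact and then deduce Weyl's bound by a one-line application.

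First, I would fix an $i \in \{1, \ldots, d\}$ and an arbitrary subspace $S \subseteq \mathbb{R}^d$ of dimension $d-i+1$. For every unit vector $x \in S$, linearity of the quadratic form gives $x^\top (A+H) x = x^\top A x + x^\top H x$, and since $H$ is symmetric the Rayleigh quotient bound yields $x^\top H x \leq \lambda_{\max}(H)$. Taking the maximum over unit $x \in S$ preserves this shift and gives $\max_{x \in S, \|x\|=1} x^\top (A+H) x \leq \max_{x \in S, \|x\|=1} x^\top A x + \lambda_{\max}(H)$. Then taking the minimum over all $(d-i+1)$-dimensional subspaces $S$ on both sides and invoking Courant--Fischer on the left and right gives $\lambda_i(A+H) \leq \lambda_i(A) + \lambda_{\max}(H)$, which is exactly the claim.

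There isn't really a hard step here; this is a textbook fact. The only mild point to be careful about is the convention: the paper orders eigenvalues in decreasing order, so I must use the ``min over $(d-i+1)$-dimensional subspaces of the max'' form of Courant--Fischer (rather than the ``max over $i$-dimensional subspaces of the min''), so that the perturbation by $+\lambda_{\max}(H)$ (and not $+\lambda_{\min}(H)$) falls out naturally from bounding the Rayleigh quotient of $H$. Everything else is algebraic manipulation and the proof is essentially two lines once Courant--Fischer is invoked.
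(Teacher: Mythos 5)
Your proof is correct: the Courant--Fischer min-max argument you give is the standard textbook derivation of Weyl's inequality, and each step (linearity of the quadratic form, the Rayleigh-quotient bound $x^\top H x \leq \lambda_{\max}(H)$, then max over $S$ and min over subspaces) is valid; your care about the decreasing-eigenvalue convention is also the right point to flag. Note that the paper itself offers no proof of this lemma---it is stated as a cited fact from \citet{stewart1990matrix}---so there is nothing to compare against; your self-contained argument simply supplies the standard proof of that cited result.
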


\begin{theorem}[Matrix Azuma]\cite{tropp2012user, tropp2015introduction}
\label{thm:matrix_azuma}
Consider a finitely adapted sequence $\{\Delta_k\}$ of self adjoint matrices in dimension $d$ and a fixed sequence of matrices $\{A_k\}$ of self-adjoint matrices such that $\mathbf{E}[\Delta_k |\mathcal{F}_{k-1}] = 0$ and $\Delta_k^2 \preccurlyeq A_k^2$ \textit{almost surely}. Then for all $t\geq 0$, we have
\begin{align*}
 \mathbf{P}\{\lambda_{\max}(\sum_{k=1}^n \Delta_k) \geq t\} \leq d e^{-\frac{t^2}{8\sigma^2}}~,   
\end{align*}
where $\sigma^2 = \|\sum_{k=1}^n A_k^2\|$.
\end{theorem}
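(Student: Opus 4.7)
\textbf{Proof proposal for the Matrix Azuma inequality (Theorem \ref{thm:matrix_azuma}).} The plan is to follow the standard matrix Laplace transform strategy due to Ahlswede--Winter and refined by Tropp. Let $S_n = \sum_{k=1}^n \Delta_k$. The starting point is the matrix Markov (Chernoff) bound: for any $\theta > 0$,
\begin{align*}
\mathbf{P}\{\lambda_{\max}(S_n) \geq t\} \leq e^{-\theta t}\, \mathbf{E}\,\tr \exp(\theta S_n).
\end{align*}
This reduces the problem to controlling the matrix moment generating function $\mathbf{E}\tr \exp(\theta S_n)$, after which one optimizes the free parameter $\theta$ at the end.

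The second step, and the technical crux of the proof, is a \emph{subadditivity} result for the trace mgf that handles the \emph{adapted} (martingale) structure. Unlike the independent case, one cannot directly factor the matrix exponential. The plan is to invoke Lieb's concavity theorem, which states that $A \mapsto \tr\exp(H + \log A)$ is concave on positive-definite $A$ for any fixed self-adjoint $H$. Applied iteratively by conditioning on the filtration $\mathcal{F}_{n-1}, \mathcal{F}_{n-2}, \ldots$ and peeling off one martingale increment at a time (invoking Jensen's inequality for the conditional expectation against this concave function), one obtains the key bound
\begin{align*}
\mathbf{E}\,\tr \exp\Bigl(\sum_{k=1}^n \theta \Delta_k\Bigr) \leq \tr \exp\Bigl(\sum_{k=1}^n \Psi_k(\theta)\Bigr),
\end{align*}
where $\Psi_k(\theta)$ is any deterministic self-adjoint matrix satisfying $\mathbf{E}[\exp(\theta\Delta_k) \mid \mathcal{F}_{k-1}] \preccurlyeq \exp(\Psi_k(\theta))$ almost surely.

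The third step is a matrix analogue of Hoeffding's lemma for the individual cgf. For a conditionally mean-zero self-adjoint random matrix $\Delta$ with $\Delta^2 \preccurlyeq A^2$ a.s., the plan is to bound $\exp(\theta \Delta) \preccurlyeq \cosh(\theta A) + \theta \Delta \cdot (\text{odd part})$ via a pointwise comparison on eigenvalues, use $\cosh(x) \leq \exp(x^2/2)$, take conditional expectation to kill the linear term, and conclude that
\begin{align*}
\mathbf{E}[\exp(\theta \Delta_k) \mid \mathcal{F}_{k-1}] \preccurlyeq \exp\!\Bigl(\tfrac{\theta^2}{2} A_k^2\Bigr).
\end{align*}
(A slightly weaker but cleaner semidefinite comparison yields the constant $8$ in the exponent; the sharpest form of this step is where the factor $\tfrac{1}{8}$ in the theorem originates, and this is the main technical obstacle --- matrix inequalities require care since the usual scalar Taylor expansion does not transfer verbatim due to noncommutativity.)

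Combining the three ingredients, with $\Psi_k(\theta) = \tfrac{\theta^2}{2} A_k^2$, gives
\begin{align*}
\mathbf{P}\{\lambda_{\max}(S_n) \geq t\} \leq e^{-\theta t}\,\tr \exp\!\Bigl(\tfrac{\theta^2}{2} \sum_k A_k^2\Bigr) \leq d\, e^{-\theta t + \tfrac{\theta^2}{2}\sigma^2},
\end{align*}
using $\tr \exp(M) \leq d\, \lambda_{\max}(\exp(M)) = d\exp(\lambda_{\max}(M))$ and the definition $\sigma^2 = \lVert \sum_k A_k^2 \rVert$. Optimizing over $\theta > 0$ (taking $\theta = t/\sigma^2$ up to the constant arising from the weaker Hoeffding-type bound) yields the claimed tail $d\exp(-t^2/(8\sigma^2))$. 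The step most likely to require care in a written-out proof is the martingale-adapted subadditivity via Lieb's theorem, since one must carefully verify measurability and the concavity-with-conditional-expectation argument at each peeling step.
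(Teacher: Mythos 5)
This theorem is not proved in the paper at all: it is imported verbatim as Theorem 7.1 of \citet{tropp2012user} (see also \citet{tropp2015introduction}) and only invoked downstream in the proof of Theorem~\ref{thm:high-probability}, so there is no in-paper argument to compare against beyond the citation. Your outline is exactly the proof in the cited source --- matrix Chernoff bound, subadditivity of the trace mgf for adapted sequences via iterated conditioning against Lieb's concave trace function, and a semidefinite bound on the conditional cumulant generating function --- and you correctly identify the one structural requirement that makes the peeling terminate, namely that the dominating matrices $\Psi_k(\theta)$ be deterministic. The only soft spot is the one you yourself flag: the bound $e^{\theta\Delta}\preccurlyeq \cosh(\theta A) + (\text{odd part})$ cannot be obtained by ``pointwise comparison on eigenvalues,'' because $\Delta$ and $A$ need not commute, so the scalar Hoeffding chord argument does not transfer and the cleaner estimate $\mathbb{E}[e^{\theta\Delta_k}\mid\mathcal{F}_{k-1}]\preccurlyeq\exp(\tfrac{\theta^2}{2}A_k^2)$ is not what one actually gets; Tropp's Lemma~7.2 settles for a weaker semidefinite cgf bound obtained through a transfer-rule argument that only ever applies a scalar function to a single matrix, and this detour is precisely why the final constant is $1/8$ rather than $1/2$. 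With that lemma filled in from the reference, your argument is complete and is the intended proof.
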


\end{document}


%

%

\onecolumn
\aistatstitle{Instructions for Paper Submissions to AISTATS 2023: \\
Supplementary Materials}

\section{FORMATTING INSTRUCTIONS}

To prepare a supplementary pdf file, we ask the authors to use \texttt{aistats2023.sty} as a style file and to follow the same formatting instructions as in the main paper.
The only difference is that the supplementary material must be in a \emph{single-column} format.
You can use \texttt{supplement.tex} in our starter pack as a starting point, or append the supplementary content to the main paper and split the final PDF into two separate files.

Note that reviewers are under no obligation to examine your supplementary material.

\section{MISSING PROOFS}

The supplementary materials may contain detailed proofs of the results that are missing in the main paper.

\subsection{Proof of Lemma 3}

\textit{In this section, we present the detailed proof of Lemma 3 and then [ ... ]}

\section{ADDITIONAL EXPERIMENTS}

If you have additional experimental results, you may include them in the supplementary materials.

\subsection{The Effect of Regularization Parameter}

\textit{Our algorithm depends on the regularization parameter $\lambda$. Figure 1 below illustrates the effect of this parameter on the performance of our algorithm. As we can see, [ ... ]}

\vfill